\documentclass{article}

\PassOptionsToPackage{numbers, sort&compress}{natbib}

 \usepackage[final]{neurips_2020}




\usepackage[utf8]{inputenc} 
\usepackage[T1]{fontenc}    
\usepackage[hidelinks]{hyperref}       
\usepackage{url}            
\usepackage{booktabs}       
\usepackage{amsfonts}       
\usepackage{nicefrac}       
\usepackage{microtype}      
\usepackage{xspace}
\usepackage{wrapfig}
\usepackage{algorithmic}
\usepackage{subcaption}
\usepackage{graphicx}
\usepackage{url}
\graphicspath{{figures/}}


\usepackage{amsmath}
\usepackage{amsfonts}
\usepackage{amssymb}
\usepackage{amsthm}
\usepackage{bm}
\usepackage{bbm}
\usepackage{mathtools}
\usepackage{enumitem}
\usepackage{thmtools,thm-restate}
\usepackage{algorithm}
\usepackage{algorithmic}
\usepackage{natbib}
\usepackage[capitalise]{cleveref}
\usepackage{color}
\usepackage[dvipsnames]{xcolor}
\usepackage{graphicx}
\usepackage{comment}



\theoremstyle{plain}
\newtheorem{lemma}{Lemma}

\theoremstyle{definition}
\newtheorem{definition}{Definition}

\theoremstyle{remark}

\newcommand{\red}[1]{\leavevmode\color{red}{#1}}
\newcommand{\blue}[1]{{\leavevmode\color{black}{#1}}}

\def\AA{\mathcal{A}}
\def\DD{\mathcal{D}}

\def\PP{\mathcal{P}}
\def\SS{\mathcal{S}}
\def\XX{\mathcal{X}}


\def\Ebb{\mathbb{E}}

\def\Rbb{\mathbb{R}}

\def\R{\Rbb}

\def\*{\star}
\def\what{\widehat}
\DeclareMathSymbol{\mhef}{\mathord}{operators}{`\-}

\DeclareMathOperator*{\argmax}{arg\,max}

\def\regret{\textrm{Regret}}


\newcommand{\E}{\Ebb}

\def\algfull{Max-aggregation of Multiple Baselines\xspace}
\def\alg{MAMBA\xspace}
\def\agg{AggreVaTe(D)\xspace}
\def\aggd{AggreVaTeD\xspace}
\def\pg{PG-GAE\xspace}

\def\benchmark{max-aggregated baseline\xspace}

\def\PiE{{\Pi^{\text{e}}}}
\def\pie{{\pi^{\text{e}}}}
\def\tpie{{\pi_+^{\text{e}}}}
\def\ts{{t_e}}

\newcommand{\fmax}{\ensuremath{f^{\max}}}
\newcommand{\pimax}{\ensuremath{\pi^{\max}}}
\newcommand{\fmaxh}{\ensuremath{\what f^{\max}}}
\newcommand{\Amax}{\ensuremath{A^{\max}}}

\newcommand{\Ah}{\ensuremath{\what A}}

\newcommand{\Amaxpi}[1][\pi]{\ensuremath{A^{\max,#1}}}
\renewcommand{\comment}[1]{}

\newcommand{\cheng}[1]{{\color{CarnationPink}\textbf{Ching-An:} #1}}

\title{Policy Improvement via Imitation of Multiple Oracles} 
%

\author{
	Ching-An Cheng\\
	Microsoft Research\\
	Redmond, WA 98052\\
	\texttt{chinganc@microsoft.com}
	\And
	Andrey Kolobov\\
	Microsoft Research\\
	Redmond, WA 98052\\
	\texttt{akolobov@microsoft.com}
	\And
	Alekh Agarwal\\
	Microsoft Research\\
	Redmond, WA 98052\\
	\texttt{alekha@microsoft.com}
	}

\begin{document}

\maketitle

\begin{abstract}
Despite its promise, reinforcement learning's real-world adoption has been hampered by the need for costly exploration to learn a good policy. Imitation learning (IL) mitigates this shortcoming by using an oracle policy during training as a bootstrap to accelerate the learning process. However, in many practical situations, the learner has access to multiple suboptimal oracles, which may provide conflicting advice in a state.
The existing IL literature provides a limited treatment of such scenarios. Whereas in the single-oracle case, the return of the oracle's policy provides an obvious benchmark for the learner to compete against, neither such a benchmark nor principled ways of outperforming it are known for the multi-oracle setting.
In this paper, we propose the state-wise maximum of the oracle policies' values as a natural baseline to resolve conflicting advice from multiple oracles.
Using a reduction of policy optimization to online learning, we introduce a novel IL algorithm \alg, which can provably learn a policy competitive with this benchmark.
In particular, \alg optimizes policies by using a gradient estimator in the style of generalized advantage estimation (GAE). Our theoretical analysis shows that this design makes \alg robust and enables it to outperform the oracle policies by a larger margin than the IL state of the art, even in the single-oracle case.
In an evaluation against standard policy gradient with GAE and \agg, we showcase \alg's ability to leverage demonstrations both from a single and from multiple weak oracles, and significantly speed up policy optimization.

\end{abstract}

\section{Introduction} \label{sec:introduction}

Reinforcement learning (RL) promises to bring self-improving decision-making capability to many applications, including robotics~\citep{kober2013robotics}, computer systems~\citep{luong2019applications}, recommender systems~\citep{swami2017slate} and user interfaces~\citep{liu2018reinforcement}.
However, deploying RL in any of these domains is fraught with numerous difficulties, as vanilla RL agents need to do a large amount of trial-and-error exploration before discovering good decision policies~\citep{mania2018random}.
This inefficiency has motivated investigations into training RL agents with domain knowledge, an example of which is having access to oracle policies in the training phase. 

The broad class of approaches that attempt to mimic or improve upon an available oracle policy is known as \emph{imitation learning} (IL) \cite{Osa_2018}.
Generally, IL algorithms work by invoking oracle policy demonstrations
to guide an RL agent towards promising states and actions.
As a result, oracle-level performance can be achieved without global exploration, thus avoiding RL's main source of high sample complexity.
For IL with a single oracle policy, the oracle policy's return provides a natural benchmark for the agent to match or outperform.
Most existing IL techniques assume this single-oracle setting, with a good \emph{but possibly suboptimal} oracle policy.
{Behavior cloning}~\citep{pomerleau89alvinn} learns a policy from a fixed batch of trajectories in a supervised way by treating oracle actions as labels. Inverse reinforcement learning uses recorded oracle trajectories to infer the oracle's reward function
~\citep{abbeel2004irl, ziebart2008maxent, finn2016gcl,ho2016generative}. {Interactive} IL~\citep{ross2011dagger, ross2014reinforcement} assumes the learner can actively ask an oracle policy for a demonstration starting at the learner's current state. When reward information of the original RL problem is available, IL algorithms can outperform the oracle policy~\citep{sun2017deeply,chang2015learning,cheng2018loki}.

\begin{figure}
	\centering
	\hspace{-15mm}
	\begin{subfigure}[b]{0.32\textwidth}
		\centering
		\includegraphics[width=0.8\textwidth]{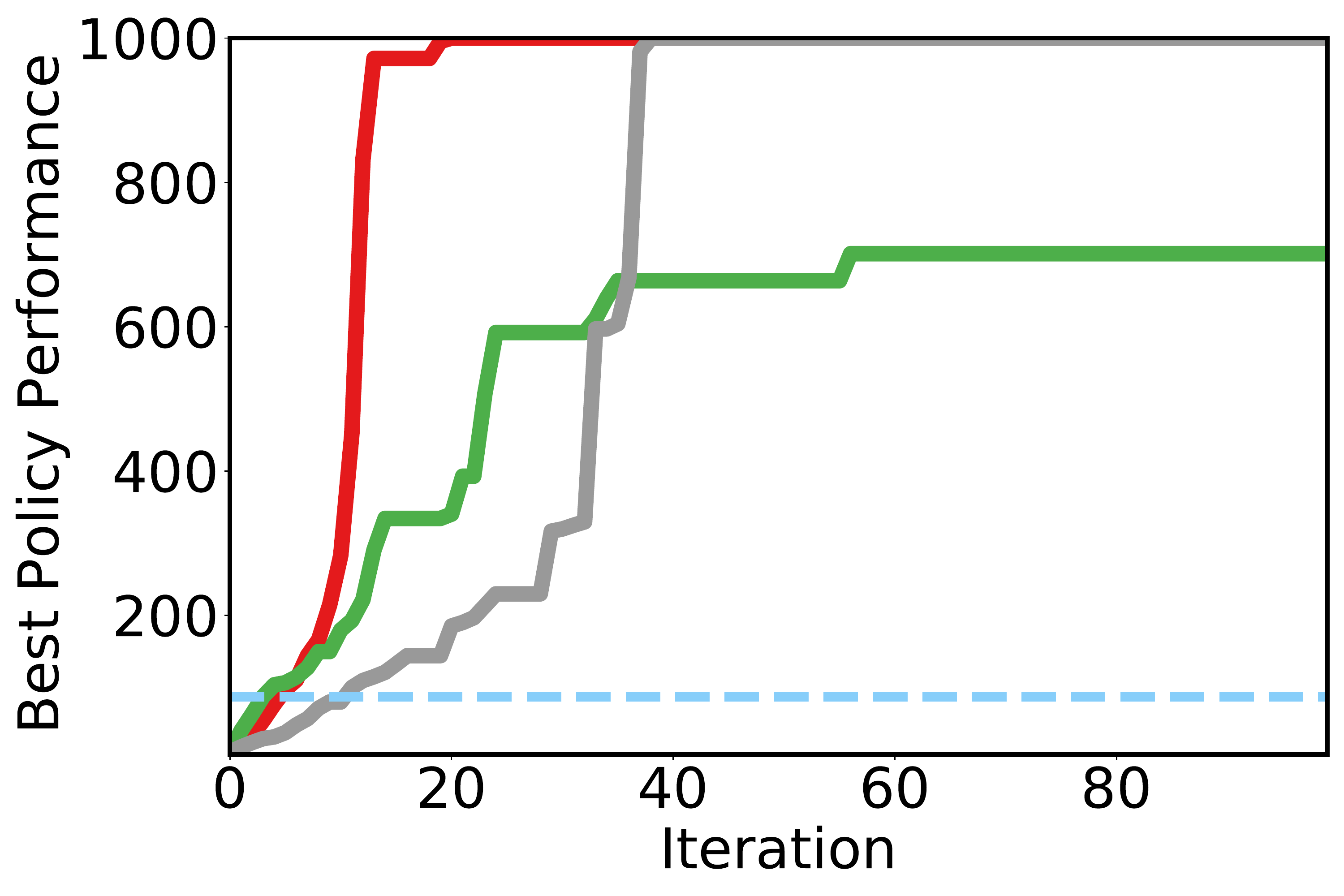}
		\caption{\small{CartPole (8 oracles)}}
		\label{fig:cp-all}
	\end{subfigure}
	\hspace{-10mm}
	\begin{subfigure}[b]{0.32\textwidth}
		\centering
		\includegraphics[width=0.8\textwidth]{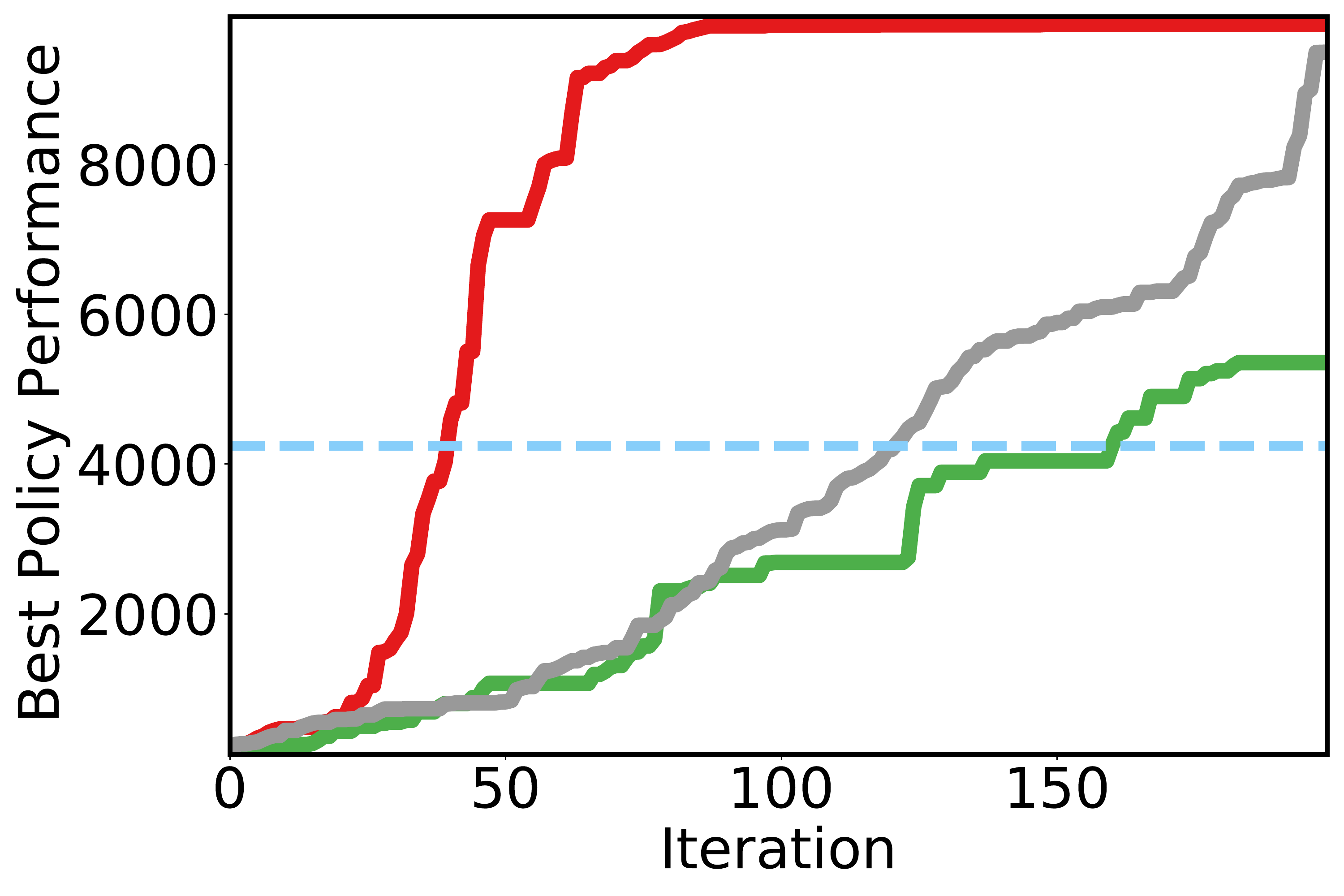}
		\caption{\small{DIP (4 oracles)}}
		\label{fig:dip-all}
	\end{subfigure}
	\hspace{-10mm}
	\begin{subfigure}[b]{0.32\textwidth}
		\centering
		\includegraphics[width=0.8\textwidth]{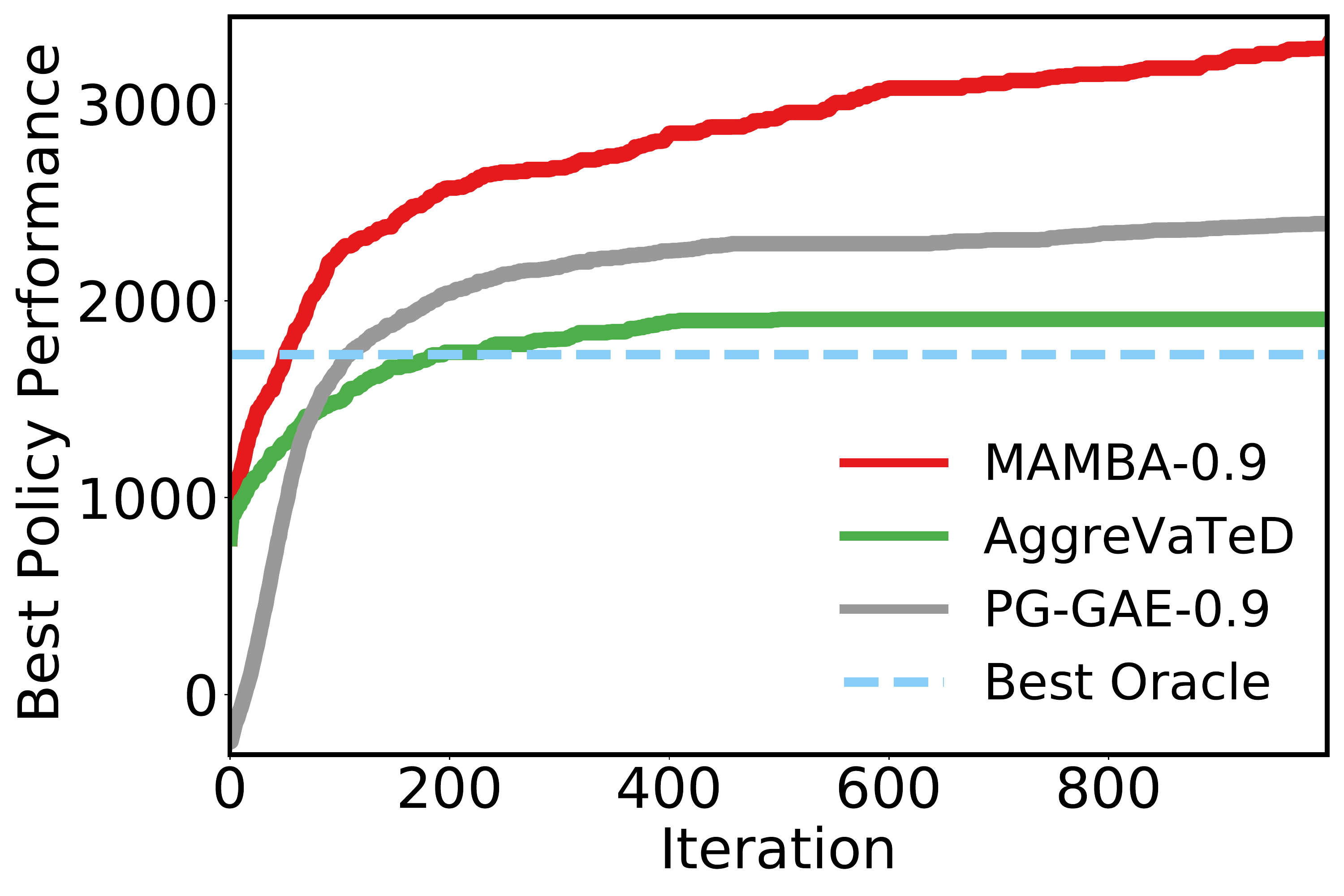}
		\caption{\small{Halfcheetah (2 oracles)}}
		\label{fig:halfcheetah-all}
	\end{subfigure}
	\hspace{-10mm}
	\begin{subfigure}[b]{0.32\textwidth}
		\centering
		\includegraphics[width=0.8\textwidth]{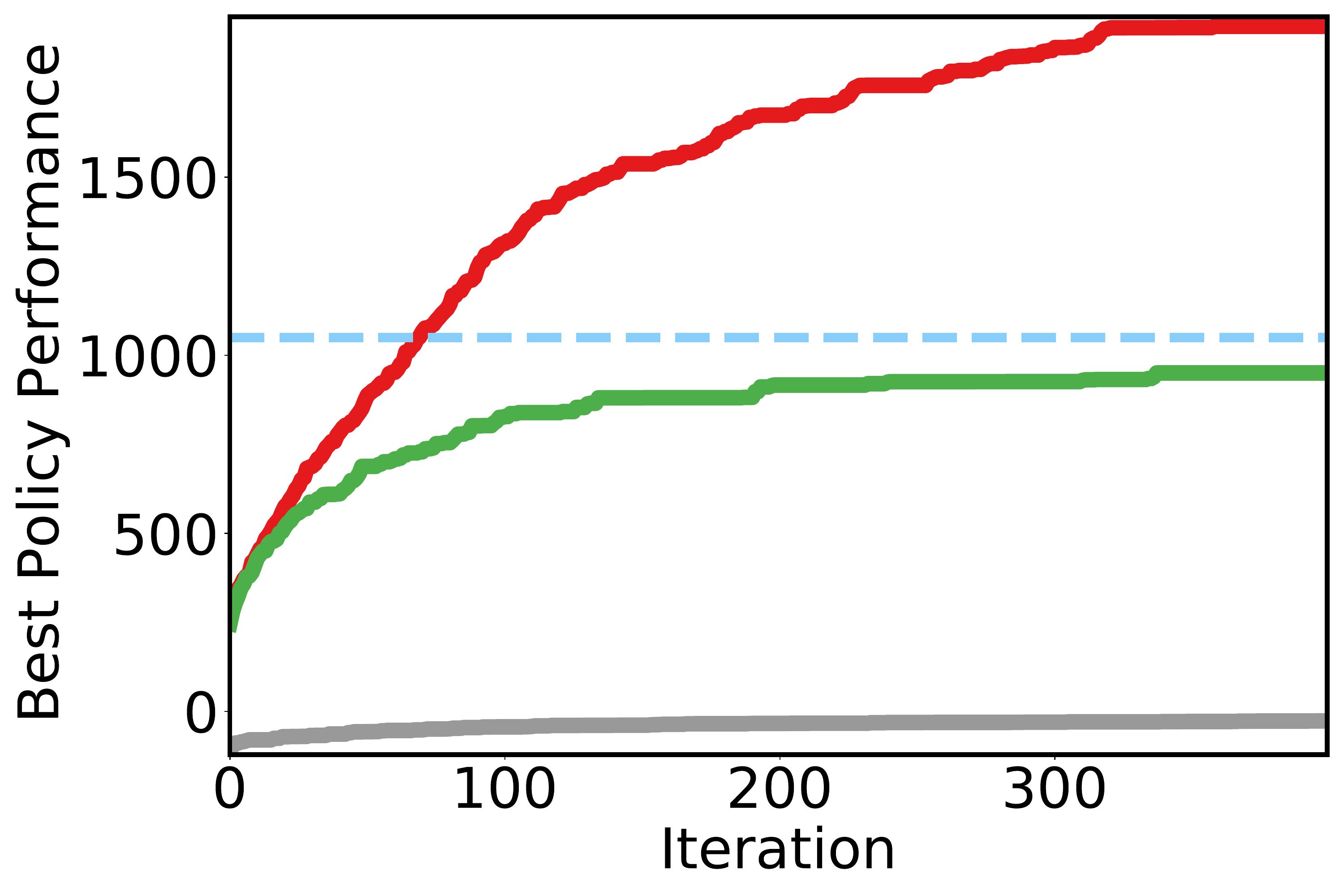}
		\caption{\small{Ant (2 oracles)}}
		\label{fig:ant-all}
	\end{subfigure}
	\hspace{-15mm}
	\caption{\small{Performance of the best policies returned by an RL algorithm (GAE policy gradient~\citep{schulman2015high}), the single-oracle IL algorithm (AggreVaTeD~\cite{sun2017deeply}), and our multi-oracle IL algorithm \alg.
	All oracle polices here are suboptimal and \aggd imitates the best one.
	In Halfcheetah and Ant, policies of IL algorihtms are initialized by behavior cloning with the best oracle policy.
	A curve shows an algorithm's median performance across 8 random seeds.
	Please see \cref{sec:experiments} for details.}}
	\vspace{-4mm}
	\label{fig:all-results}
\end{figure}

In this paper, we ask the question: how should an RL agent leverage domain knowledge encoded in \emph{more than one} (potentially suboptimal) oracle policies? We study this question in the aforementioned interactive IL setting.
Having multiple oracle policies is quite common in practice.
For instance, consider the problem of minimizing task processing delays via load-balancing a network of compute nodes. Existing systems and their simulators have a number of human-designed heuristic policies for load balancing that can serve as oracles~\citep{baek2019fog}. 
Likewise, in autonomous driving, available oracles can range from PID controllers to human drivers~\citep{li2018oil}.
In these examples, each oracle has its own strengths and can provide desirable behaviors for different situations.

Intuitively, because more oracle policies can provide more information about the problem domain, an RL agent should be able to learn a good policy faster than using a single oracle. However, in reality, the agent does not know the properties of each oracle. What it sees instead are conflicting demonstrations from the oracle policies.
Resolving this disagreement can be non-trivial, because there may not be a single oracle comprehensively outperforming the rest, and the quality of each oracle policy is unknown.
%
%
Recently, several IL and RL works have started to study this practically important class of scenarios. InfoGAIL~\citep{li2017infogail} conditions the learned policy on latent factors that motivate demonstrations of different oracles. AC-Teach~\citep{kurenkov2019acteach} models each oracle with a set of attributes and relies on a Bayesian approach to decide which action to take based on their demonstrations. OIL~\citep{li2018oil} tries to identify and follow the best oracle in a given situation. 
However, all existing approaches to IL from multiple oracles sidestep two fundamental questions: (a) What is a reasonable benchmark for policy performance is these settings, analogous to the single-oracle policy quality in conventional IL? (b) Is there a systematic way to stitch together several suboptimal oracles into a stronger baseline that we can further improve upon?

We provide answers to these questions, making the following contributions:
\begin{enumerate}[leftmargin=*]
\item We identify the state-wise maximum of oracle policies' values as a natural benchmark for learning from multiple oracles. We call it the \emph{\benchmark} and propose policy improvement from it as a natural strategy to combine these oracles together, creating a new policy that is uniformly better than all the oracles in every state. These insights establish the missing theoretical foundation for designing algorithms for IL with multiple oracles.
%

\item We propose a novel IL algorithm called \emph{\alg} (\algfull) to learn a policy that is competitive with the \benchmark by a reduction of policy optimization to online learning~\citep{ross2014reinforcement,cheng2018convergence}.
\alg is a first-order algorithm based on a new IL gradient estimator designed in the spirit of generalized advantage estimation (GAE)~\citep{schulman2015high} from the RL literature.
Like some prior works in IL, \alg interacts with the oracle in a roll-in/roll-out fashion~\cite{ross2014reinforcement,chang2015learning} and does not assume access to oracle actions.
%


\item We provide regret-based performance guarantees for \alg. In short, \alg generalizes a popular single-oracle IL algorithm \agg \citep{ross2014reinforcement,sun2017deeply} to learn from multiple oracles and to achieve larger improvements from suboptimal oracles.
Empirically, we evaluate \alg against the IL baseline (\aggd~\citep{sun2017deeply}) and direct RL (GAE policy gradient~\citep{schulman2015high}).
\cref{fig:all-results} highlights the experimental results, where \alg demonstrates the capability to bootstrap demonstrations from multiple weak oracles to significantly speed up policy optimization.

%
%
\end{enumerate}

\section{Background: Episodic Interactive Imitation Learning} \label{sec:episodic interactive IL}

\paragraph{Markov decision processes (MDPs)}
We consider finite-horizon MDPs with state space $\SS$ and action space $\AA$.
Let $T$, $d_0(s)$, $\PP(s'|s,a)$, and $r:\SS\times\AA\to[0,1]$ denote the problem horizon, the initial state distribution, the transition dynamics, and the reward function, respectively. \emph{We assume that $d_0$, $\PP$, and $r$ are fixed but unknown.}
Given a class of state-dependent policies $\Pi$, 
our goal is to find a policy $\pi\in\Pi$ that maximizes the $T$-step return with respect to the initial state distribution $d_0$:
\begin{align} \label{eq:return}
	\textstyle
	V^\pi(d_0) \coloneqq \E_{s_0\sim d_0} \E_{\xi_0\sim\rho^\pi|s_0}\left[ \sum_{t=0}^{T-1} r(s_t,a_t) \right],
\end{align}
where $\rho^\pi(\xi_t|s_t)$ denotes the distribution over trajectory $\xi_t = s_t, a_t, \dots, s_{T-1}, a_{T-1}$ generated by running policy $\pi$ starting from the state $s_t$ at time $t$ to the problem horizon.
To compactly write down non-stationary processes, we structure the state space $\SS$ as $\SS = \bar{\SS} \times \{0,T-1\}$, where $\bar{\SS}$ is some basic state space; thus, $\PP$ and $r$ can be non-stationary in $\bar{\SS}$. We allow $\bar{\SS}$ and $\AA$ to be either discrete or continuous. We use the subscript of $t$ to emphasize the time index. When writing $s_t$ we assume it is at time $t$,
and every transition from $s$ to $s'$ via $\PP(s'|s,a)$ increments the time index by $1$. 

\paragraph{State distributions and value functions}
We let $d_t^\pi$ stand for the state distribution at time $t$ induced by running policy $\pi$ starting from $d_0$ (i.e. $d_0^\pi = d_0$ for any $\pi$), and define the \emph{average state distribution} as $d^\pi \coloneqq \frac{1}{T}\sum_{t=0}^{T-1}d_t^\pi$. Sampling from $d^\pi$ returns $s_t$, where $t$ is uniformly distributed. Therefore, we can re-cast a policy's $T$-step return in \eqref{eq:return} as $V^\pi(d_0) = T \E_{s\sim d^\pi}\E_{a\sim\pi|s}[r(s,a)]$.
With a slight abuse of notation, we denote by $V^\pi:\SS\to\R$ as the value function of policy $\pi$, which satisfies $V^\pi(d_0) = \E_{s\sim d_0}[V^\pi(s)]$. 
Given a function $f:\SS\to\R$ such that $f(s_T) = 0$, we define the \emph{$Q$-function w.r.t. $f$} as $Q^f(s,a) \coloneqq r(s,a) + \E_{s'\sim\PP|s,a}[f(s')]$ and the \emph{advantage function w.r.t. $f$} as
\begin{align} \label{eq:advantage function}
A^f(s,a) \coloneqq Q^f(s,a) - f(s) = r(s,a) + \E_{s'\sim\PP|s,a}[f(s')] - f(s)
\end{align}
When $f = V^\pi$, we also write $A^{V^\pi} \eqqcolon A^\pi$ and $Q^{V^\pi} \eqqcolon Q^\pi$, which are the standard advantage and $Q$-functions of a policy $\pi$. We write {$f$'s advantage function under a policy $\pi$} as
$A^f(s,\pi) \coloneqq \E_{a\sim\pi|s} [A^f(s,a)]$ and similarly $Q^f(s,\pi)$ and $f(d) \coloneqq \E_{s\sim d}[f(s)]$ given a state distribution $d$. We refer to functions $f$ that index $Q$ or $A$ functions as \emph{baseline value functions}, because we aim to improve upon the value they provide in each state.
%
\begin{definition}\label{d:impf}
We say a baseline value function $f$ is \emph{improvable} w.r.t $\pi$ if $A^f(s,\pi)\geq 0$, $\forall s\in\SS$.
\end{definition}

\paragraph{Policy optimization with multiple oracle policies}
The setup above describes a generic episodic RL problem, where the agent faces the need to perform strategic exploration and long-term credit assignment. A common approach to circumvent the exploration challenge in practice is by leveraging an oracle policy.
%
In this paper, we assume access to \emph{multiple} (potentially suboptimal) oracle policies during training, and leverage episodic interactive IL to improve upon them.
We suppose that the learner (i.e. the agent) has access to a set of 
oracle policies $\PiE=\{\pi^k\}_{k\in[K]}$. During training, the learner can interact with the oracles in a roll-in-roll-out (RIRO) paradigm to collect demonstrations.
In each episode, the learner starts from an initial state sampled from $d_0$ and runs its policy $\pi\in\Pi$ up to a switching time $\ts\in[0,T-1]$; then the learner asks an oracle policy $\pie\in\PiE$ to take over and finish the trajectory. 
At the end, the learner records the entire trajectory, including reward information. 
%
Note that we do not assume that oracle actions are observed. In addition, as sampled rewards are available here, the learner can potentially improve upon the oracle policies.

\section{A Conceptual Framework for Learning from Multiple Oracles} \label{sec:conceptual algorithm}

In this paper, we focus on the scenario where the set $\PiE=\{\pi^k\}_{k\in[K]}$ contains more than one oracle policy.
Having multiple oracle policies offers an opportunity to gain more information about the problem domain. Each oracle may be good in different situations, so the learner can query suitable oracles at different states for guidance.
But how exactly can we leverage the information from multiple oracles to learn more efficiently than from any single one of them?

\paragraph{Some natural baselines}

One approach for leveraging multiple oracles is to combine them into a single oracle, such as by using a fixed weighted mixture~\citep{jacobs1991adaptive}, or  multiplying their action probabilities in each state~\citep{hinton2002training}.\footnote{These approaches are proposed for supervised learning and not specifically IL.} 
But the former can be quite bad even if only one oracle is bad, and the latter fails to combine two deterministic oracles. Another alternative is to evaluate each oracle and run a single-oracle IL algorithm with the one with the highest return.
However, in Appendix~\ref{sec:compare} we show an example where two oracles have identical suboptimal returns, but switching between them 
results in the optimal behavior.
%

We ask, \emph{is there a general principle for combining multiple oracles?} If we seek to switch among multiple oracles, how should switching points be chosen? Can we learn a rule for doing so reliably? In this section, we show that the issues mentioned above can be addressed by performing policy improvement upon the state-wise maximum over the oracles' values, i.e. the \benchmark.
We describe two conceptual algorithms: one is based on the perfect knowledge of the MDP and the oracles' value functions, while the other builds on the first one using online learning to handle an unknown MDP and oracle values in the interactive IL setup.
The insights gained from these two conceptual algorithms will be used to design their practical variation, \alg, in \cref{sec:practical algorithm}.


\vspace{-1mm}
\subsection{Max-aggregation with Policy Improvement} \label{sec:IL as policy improvement}
\vspace{-1mm}
To illustrate the key idea, first let us suppose for a moment that perfect knowledge of the MDP and the oracles' value functions is available.
In this idealized setting, the IL problem can be rephrased as follows: find a policy that is at least as good as all the oracle policies, and do so in a way whose computational complexity is \emph{independent} of the problem horizon.
The restriction on the complexity is important; otherwise we can just use the MDP knowledge to solve for the optimal policy.

How do we solve this idealized IL problem? When $\PiE$ contains only a single oracle, which we denote as $\pie$, a natural solution is the policy given by one-step policy improvement from $\pie$, i.e. the policy $\tpie$ that acts according to $\argmax_{a\in\AA} r(s,a) + \E_{s'\sim\PP|s,a}[V^\pie(s')]$.
It is well known that this policy $\tpie$ is uniformly better than $\pie$ for all the states, i.e. $V^{\tpie}(s) \geq V^{\pi^e}(s)$ (cf. ~\citep{puterman2014markov} and \cref{cr:improvable baseline} below). 
However, this basic approach no longer applies when $\PiE$ contains multiple oracles, and natural attempts to invoke a single-oracle algorithm do not work in general as discussed earlier.

\vspace{-2mm}
\paragraph{A max following approach} A simple way to remedy the failure mode of uniformly mixing the oracle policies is to take a non-uniform mixture that is aware of the quality of each oracle. \emph{If} we have the value function of each oracle, we have a natural measure of their quality. 
With this intuition, for the $k$-th oracle policy $\pi^k\in\PiE$, let us write $V^k \coloneqq V^{\pi^k}$.
A natural candidate policy based on this idea is the greedy policy that follows the best oracle in any given state: 
\begin{align} \label{eq:k_s and pi_bullet}
	\textstyle
	\pi^\bullet(a|s) \coloneqq \pi^{k_s}(a|s),
	\qquad \text{where} \quad
	k_s \coloneqq \argmax_{k\in[K]} V^k(s)
\end{align}

Imitating a benchmark similar to $\pi^\bullet$ was recently proposed as a heuristic for IL with multiple oracles in \citep{li2018oil}.
Our first contribution is a theoretical result showing that the intuition behind this heuristic holds mathematically: $\pi^\bullet$ indeed satisfies $V^{\pi^\bullet}(s) \geq \max_{k\in[K]} V^k(s)$.
To show this, we construct a helper corollary based on the useful Performance Difference Lemma (\cref{lm:pdl})\footnote{Lemma~\ref{lm:pdl} is an adaptation of the standard Performance Difference Lemma to using $f$ that is not necessarily the value function of any policy. We provide proofs of Lemma~\ref{lm:pdl} and Corollary~\ref{cr:improvable baseline} in \cref{app:proofs} for completeness.}. 
\begin{restatable}{lemma}{PDL}\label{lm:pdl}
	{\normalfont \citep{ng1999policy,kakade2002approximately}}
	Let $f:\SS\to\R$ be such that $f(s_T)=0$. For any MDP and policy $\pi$,
	\begin{align} \label{eq:pdl}
\textstyle	V^\pi(d_0) - f(d_0) = T \E_{ s\sim d^\pi}  [ A^f(s,\pi) ].
	\end{align}
\end{restatable}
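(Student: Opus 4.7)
The plan is a standard telescoping argument, adapted to allow an arbitrary $f$ (rather than the value function of a policy). The key identity to exploit is that $r(s_t,a_t)$ can be rewritten as $[r(s_t,a_t) + f(s_{t+1}) - f(s_t)] + [f(s_t) - f(s_{t+1})]$; the first bracket is exactly (after conditioning) the term $A^f(s_t,a_t)$, while the second bracket is a telescoping increment.

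First I would expand $V^\pi(d_0) = \E_{s_0\sim d_0}\E_{\xi_0 \sim \rho^\pi|s_0}\bigl[\sum_{t=0}^{T-1} r(s_t,a_t)\bigr]$ using the definition in \eqref{eq:return}, and rewrite the summand as
\[
r(s_t,a_t) = \bigl[r(s_t,a_t) + f(s_{t+1}) - f(s_t)\bigr] + \bigl[f(s_t) - f(s_{t+1})\bigr].
\]
Summing the second bracket over $t=0,\dots,T-1$ telescopes to $f(s_0) - f(s_T) = f(s_0)$, since the assumption $f(s_T)=0$ eliminates the terminal term. Taking expectations of this piece yields exactly $f(d_0)$, which will move to the left-hand side of the claimed identity.

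Next I would handle the first bracket. Conditioning on $(s_t,a_t)$ and using the tower property together with the definition $A^f(s,a) = r(s,a) + \E_{s'\sim\PP|s,a}[f(s')] - f(s)$, the term $\E[r(s_t,a_t)+f(s_{t+1})-f(s_t) \mid s_t,a_t] = A^f(s_t,a_t)$. Marginalizing over $a_t \sim \pi(\cdot\mid s_t)$ turns this into $A^f(s_t,\pi)$, and marginalizing over $s_t \sim d_t^\pi$ gives $\E_{s\sim d_t^\pi}[A^f(s,\pi)]$. Summing over $t$ and using the definition $d^\pi = \frac{1}{T}\sum_{t=0}^{T-1} d_t^\pi$ produces the factor $T \E_{s\sim d^\pi}[A^f(s,\pi)]$, completing the identity.

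There is no real obstacle here; the only subtlety is making sure the boundary term is correctly dealt with, which is precisely why the hypothesis $f(s_T)=0$ is imposed. Without it, an extra $-\E_{s_T\sim d_T^\pi}[f(s_T)]$ would appear on the right-hand side, and the clean form in \eqref{eq:pdl} would not hold. Everything else is a careful but routine application of linearity of expectation and the Markov property.
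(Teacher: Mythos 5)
Your proof is correct and uses essentially the same telescoping argument as the paper: the paper telescopes $\sum_t f(d_t^\pi)$ at the level of state distributions, while you telescope $f(s_t)-f(s_{t+1})$ along trajectories before taking expectations, which is the same computation in a different order. Your explicit remark about the boundary term and the role of the hypothesis $f(s_T)=0$ is accurate and matches how the paper's version implicitly uses it.
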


\begin{restatable}{corollary}{ImprovableBaselinValueFunction} \label{cr:improvable baseline}
	If $f$ is improvable w.r.t. $\pi$ 
	, then $V^{\pi}(s)\geq f(s)$, $\forall s\in\SS$.
\end{restatable}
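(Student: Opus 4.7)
The plan is to obtain the corollary as an immediate consequence of the Performance Difference Lemma (\cref{lm:pdl}) by specializing the initial distribution to a point mass. The improvability hypothesis gives the pointwise inequality $A^f(s',\pi) \ge 0$ for every $s'\in\SS$, so an expectation of $A^f(\cdot,\pi)$ under any distribution is nonnegative; plugging this into the PDL's right-hand side yields $V^\pi(d_0) \ge f(d_0)$ for every initial distribution $d_0$, and picking $d_0$ to concentrate on any given state gives the desired statewise bound.

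More concretely, I would first fix an arbitrary $s\in\SS$. Using the convention from the paper that $\SS = \bar\SS\times\{0,\dots,T-1\}$ encodes the time index in the state (so that a state at time $t$ carries a remaining horizon of $T-t$), I can legitimately take the initial-state distribution $d_0$ in \cref{lm:pdl} to be the Dirac mass $\delta_s$. Then $f(d_0)=f(s)$ and $V^\pi(d_0)=V^\pi(s)$, while the right-hand side of \eqref{eq:pdl} becomes $T\,\E_{s'\sim d^\pi_s}[A^f(s',\pi)]$, where $d^\pi_s$ denotes the average state distribution induced by running $\pi$ starting from $s$.

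Next I would invoke improvability: by \cref{d:impf}, $A^f(s',\pi)\ge 0$ for every $s'\in\SS$, so the integrand is nonnegative everywhere and the expectation is therefore $\ge 0$. Rearranging gives $V^\pi(s) - f(s) \ge 0$, and since $s$ was arbitrary this establishes $V^\pi(s)\ge f(s)$ for all $s\in\SS$.

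The proof is essentially a one-line application of \cref{lm:pdl}, so there is no serious obstacle; the only small care needed is the specialization to a deterministic initial state, which is legitimate because \cref{lm:pdl} is stated for an arbitrary $d_0$, and the handling of the time index baked into $\SS$ so that a ``state'' $s$ at time $t$ produces the correct truncated-horizon expectation. No other machinery beyond the definition of $A^f$ and the nonnegativity of an expectation of a nonnegative random variable is needed.
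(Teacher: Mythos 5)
Your proof is correct and takes essentially the same route as the paper: the paper's own argument also applies \cref{lm:pdl} to a modified MDP whose initial distribution is concentrated at $s$ (with the horizon adapted via the time index baked into the state), and then uses the pointwise nonnegativity of $A^f(\cdot,\pi)$ from \cref{d:impf}. Your version just spells out the Dirac-mass specialization more explicitly.
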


Corollary~\ref{cr:improvable baseline} implies that a policy $\pi$ has a better performance than all the oracles in $\PiE$ if there is a baseline value function $f$ that is improvable w.r.t. $\pi$ (i.e. $A^f(s,\pi)\geq 0$) and dominates the value functions of all oracle policies everywhere (i.e. $f(s)\geq V^k(s)$, $\forall k\in[K], s\in\SS$).
%

This observation suggests a natural value baseline for studying IL with multiple oracles:
\begin{align} \label{eq:max V}
	\textstyle
	\fmax(s) \coloneqq \max_{k\in[K]} V^k(s).
\end{align}

Below we prove that this \benchmark $\fmax$ in \eqref{eq:max V} is improvable with respect to $\pi^\bullet$.
Together with \cref{cr:improvable baseline}, 
this result implies that $\pi^\bullet$ is a valid solution to the idealized IL problem with multiple oracles.
We write the advantage $A^{\fmax}$ with respect to $\fmax$ in short as $\Amax$.

\begin{restatable}{proposition}{VmaxImprovable}	\label{pr:improvable_bullet}
	$\fmax$ in \eqref{eq:max V} is improvable with respect to $\pi^\bullet$, i.e. $\Amax(s,\pi^\bullet) \geq 0$.
\end{restatable}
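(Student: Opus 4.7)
The plan is to unpack the definition of $\Amax(s, \pi^\bullet)$ and exploit two simple facts: (i) by construction, $\pi^\bullet$ at state $s$ draws its action from $\pi^{k_s}$, the oracle that attains the pointwise maximum in $\fmax(s)$, and (ii) $\fmax$ pointwise dominates every individual value function $V^k$, so replacing $\fmax$ by $V^{k_s}$ inside a future-value expectation can only decrease it. Combining these two observations collapses the desired inequality into a consequence of the Bellman identity for $\pi^{k_s}$.

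More concretely, I would first expand, using the definition of the advantage in \eqref{eq:advantage function} and the definition of $\pi^\bullet$ in \eqref{eq:k_s and pi_bullet},
\[
\Amax(s,\pi^\bullet) \;=\; \E_{a\sim\pi^{k_s}|s}\!\left[\, r(s,a) + \E_{s'\sim\PP|s,a}[\fmax(s')] \,\right] - \fmax(s).
\]
Next, since $\fmax(s') = \max_{k} V^k(s') \geq V^{k_s}(s')$ pointwise, I can lower bound the inner expectation by replacing $\fmax(s')$ with $V^{k_s}(s')$:
\[
\Amax(s,\pi^\bullet) \;\geq\; \E_{a\sim\pi^{k_s}|s}\!\left[\, r(s,a) + \E_{s'\sim\PP|s,a}[V^{k_s}(s')] \,\right] - \fmax(s).
\]
Finally, the Bellman equation for $V^{k_s}$ identifies the first term with $V^{k_s}(s)$, and by the definition of $k_s = \argmax_{k} V^k(s)$ we have $V^{k_s}(s) = \fmax(s)$, so the right-hand side equals $0$. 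I should also note that the terminal condition $\fmax(s_T)=0$ required by Lemma~\ref{lm:pdl}/the advantage definition holds because every $V^k(s_T)=0$.

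I do not anticipate a real obstacle — the argument is essentially a one-line manipulation once the right inequality $\fmax \geq V^{k_s}$ is applied at the correct place. The only mild care is to make sure the action distribution at $s$ is indexed by the same oracle $k_s$ that realizes $\fmax(s)$, which is exactly the defining property of $\pi^\bullet$; this is what makes the "max-following" construction work, as opposed to an arbitrary mixture of the oracles.
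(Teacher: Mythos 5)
Your proof is correct and follows essentially the same route as the paper's: expand the advantage at $s$ under $\pi^{k_s}$, lower-bound $\fmax(s')$ by $V^{k_s}(s')$, and recognize the result as the (zero) advantage of $\pi^{k_s}$ with respect to its own value function via the Bellman identity, together with $\fmax(s)=V^{k_s}(s)$. No gaps.
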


\vspace*{-1mm}
\paragraph{Degeneracy of max-following}
 The policy $\pi^\bullet$ above, however, suffers from a degenerate case: when there is one oracle in $\PiE$ that is uniformly better than all the other oracles (say $\pie$), we have $\pi^\bullet=\pie$, whereas we know already $\tpie$ is a uniformly better policy that we can construct using the same information. In this extreme case, in the standard IL setting with one oracle, $\pi^\bullet$ would simply return the suboptimal oracle.

 \vspace*{-1mm}
\paragraph{A max-\emph{aggregation} approach} Having noticed the failure mode of $\pi^\bullet$, we obtain a natural fix by combining the same value baseline~\eqref{eq:max V} with the standard policy improvement operator. We define
\begin{align} \label{eq:policy improving from f}
	\textstyle
	\pimax(a|s) \coloneqq \delta_{a=a_s}, \qquad\text{where} \qquad a_s \coloneqq \argmax_{a \in \AA} \Amax(s,a),
\end{align}
and $\delta$ denotes the delta distribution. In contrast to $\pi^\bullet$, $\pimax$ looks one step ahead and takes the action with the largest advantage under $\fmax$. Note that this is \emph{not} necessarily the same as following highest-value oracle in the current state.
Since $\pimax$ satisfies $\Amax(s,\pimax)\geq \Amax(s,\pi^\bullet)\geq 0$, by \cref{cr:improvable baseline}, $\pimax$ is also a valid solution to the idealized IL problem with multiple oracles.
%
The use of $\pimax$ is novel in IL to our knowledge, although $\pimax$ is called Multiple Path Construction Algorithm in controls~\citep[Chapter 6.4.2]{bertsekas1995dynamic}. \cref{cr:improvable baseline} provides a simple 
proof of why $\pimax$ works.

In general, $V^{\pimax}(s)$ and $V^{\pi^\bullet}(s)$ are not comparable. But, crucially, in the degenerate case above we see that $\pimax$ reduces to $\tpie$ and therefore would perform better than $\pi^\bullet$, although in Appendix~\ref{sec:compare} we also show an MDP where $\pi^\bullet$ is better.
 Intuitively, this happens as $\fmax$ implicitly assumes using a single oracle for the remaining steps, but both $\pi^\bullet$ and $\pimax$ re-optimize their oracle choice at every step, so their relative quality can be arbitrary.
While both $\pi^\bullet$ and $\pimax$ improve upon all the oracles, in this paper, we choose $\pimax$ as our imitation benchmark, because it is consistent with prior works in the single-oracle case and does not require observing the oracles' actions in IL, unlike $\pi^\bullet$.

\blue{Finally, we remark that \citet{barreto2020fast} recently proposed a generalized policy improvement operator, $\argmax_{a \in \AA} \max_{k\in[K]} Q^{\pi^k}(s,a)$, for multi-task RL problems. It yields a policy that is similar to $\pimax$ in \eqref{eq:policy improving from f} and is uniformly better than all the oracles. However, despite similarities, these two policies are overall different. In particular, $\max_{k\in[K]}  Q^{\pi^k}(s,a) - \fmax(s) \leq \Amax(s,a)$, so \eqref{eq:policy improving from f} aims to improve upon a stronger baseline. And, as we will see in the next section, \eqref{eq:policy improving from f} allows for a geometric weighted generalization, which can lead to an even larger policy improvement.}

\subsection{Max-aggregation with Online Learning} \label{sec:IL as online learning}

The previous section shows that improving from the \benchmark $\fmax$ in \eqref{eq:max V} is a key to reconciling the conflicts between oracle policies. However doing so requires the knowledge of the MDP and the oracles' value functions, which are unavailable in the episodic interactive IL setting.

To compete with $\fmax$ without the above assumption, we design an IL algorithm via a reduction to online learning~\citep{zinkevich2003online}, a technique used in many prior works in the single-oracle setting~\citep{ross2011dagger,ross2014reinforcement,chang2015learning,sun2017deeply,cheng2018loki,cheng2019accelerating}.
%
To highlight the main idea, at first we still assume that the oracles' value functions are given, but only the MDP is unknown. Then we show how to handle unknown value functions.
For clarity, we use the subscript in $\pi_n$ to index the learner's policy in $\Pi$ generated in the $n$-th round  of online learning, while using the superscript in $\pi^k$ to index the oracle policy in $\PiE$.

\vspace{-1mm}
\paragraph{Ideal setting with known values} If the MDP dynamics and rewards are unknown, we can treat $d^{\pi_n}$ as the adversary in online learning and define the online loss in the $n$-th round as
\begin{align} \label{eq:online loss}
\textstyle	\ell_n(\pi) \coloneqq -T\E_{s \sim d^{\pi_n}} \left[ \Amax(s,\pi) \right].
\end{align}
By \cref{lm:pdl}, making $\ell_n(\pi_n)$ small ensures that $V^{\pi_n}(d_0)$ is not much worse than $\fmax(d_0)$. 
Formally, averaging this argument over $N$ rounds of online learning, we obtain
\begin{align} \label{eq:regret equality}
	\textstyle
	\frac{1}{N}\sum_{n\in[N]} V^{\pi_n}(d_0) = \fmax(d_0) + \Delta_N - \epsilon_N(\Pi) - \frac{\regret_N}{N},
\end{align}
where we define $\regret_N \coloneqq \textstyle \sum_{n=1}^{N} \ell_n(\pi_n) -  \min_{\pi\in\Pi}  \sum_{n=1}^{N}\ell_n(\pi)$,
\begin{gather}
	\hspace{-1mm} \Delta_N \coloneqq \textstyle \frac{-1}{N}\sum_{n=1}^{N} \ell_n(\pimax),  \label{eq:improvement amount}%
	\text{ and }
	\epsilon_N(\Pi) \coloneqq \textstyle \min_{\pi\in\Pi}  \frac{1}{N} \left( \sum_{n=1}^{N}\ell_n(\pi) - \sum_{n=1}^{N} \ell_n(\pimax) \right).
\end{gather}

In \eqref{eq:regret equality}, the regret characterizes the learning speed of an online algorithm, while $\epsilon_N(\Pi)$ captures the quality of the policy class.
If $\pimax\in\Pi$, then $\epsilon_N(\Pi)=0$; otherwise, $\epsilon_N(\Pi)\geq 0$. 
Furthermore, we have $\Delta_N\geq 0$, because 
we showed $\Amax(s,\pimax)\geq0$ in \cref{sec:IL as policy improvement}.
Thus, when $\pimax \in \Pi$, running a no-regret algorithm (i.e. an algorithm such that $\regret_N=o(N)$) to solve this online learning problem will guarantee producing a policy whose performance at least $\E_{s\sim d_0}[\max_{k\in[K]} V^k(s) ] + \Delta_N + o(1)$ after $N$ rounds.

The above reduction in \eqref{eq:online loss} generalizes AggreVaTE~\citep{ross2014reinforcement} from using $f=V^{\pie}$ in $A^f$ for define the online loss for the single oracle case to using $f=\fmax$ instead, which is also applicable to multiple oracles.
{When an oracle in $\PiE$ dominates the others for all the states, \eqref{eq:online loss} is the same as the online loss in AggreVaTE.}

\paragraph{Effect of approximate oracle values}
Recall that for the above derivation we assumed oracle policy values (and hence $\fmax$) are given. In practice, $\fmax$ is unavailable and needs to be approximated by some $\fmaxh$. Let $\Ah$ denote the shorthand of $A^{\fmaxh}$.
We can treat the approximation error as bias and variance in the feedback signal, such as the sample estimate of the gradient below:
\begin{align} \label{eq:aggrevate online gradient}
\nabla \what{\ell}_n(\pi_n)  &= -T \E_{s \sim d^{\pi_n}} \E_{a \sim \pi|s}\left[ \nabla \log \pi(a|s) \Ah(s,a)  \right] |_{\pi=\pi_n},
\end{align}
where $\nabla$ is with respect to the policy. We summarize the approximation effects as a meta theorem, \blue{where generally $\beta$ and $\nu$ increase as the problem horizon increases.}
\begin{restatable}{theorem}{MetaTheorem}  \label{th:meta performance theroem}
	Suppose a first-order online algorithm that satisfies $\E[\regret_N] \leq O( \beta N + \sqrt{\nu N})$ is adopted, where $\beta$ and $\nu$ are the bias and the variance of the gradient estimates, respectively. Then
	\begin{align}
		 \E[ \max_{n\in[N]} V^{\pi_n}(d_0)] \geq \E_{s\sim d_0}[\max_{k\in[K]} V^k(s) ] + \E[\Delta_N - \epsilon_N(\Pi)] - O(\beta + \sqrt{\nu}N^{-1/2})
	\end{align}
	where the expectation is over the randomness in feedback and the online algorithm.
\end{restatable}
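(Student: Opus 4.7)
The plan is to start from the regret equality~\eqref{eq:regret equality}, which was established under the (hypothetical) assumption that $\fmax$ is available when forming the online losses. That identity gives
\[
\frac{1}{N}\sum_{n\in[N]} V^{\pi_n}(d_0) \;=\; \fmax(d_0) + \Delta_N - \epsilon_N(\Pi) - \frac{\regret_N}{N},
\]
where $\fmax(d_0) = \E_{s\sim d_0}[\max_{k\in[K]} V^k(s)]$ by definition. Since the maximum of a finite set of numbers is at least their average, $\max_{n\in[N]} V^{\pi_n}(d_0) \geq \frac{1}{N}\sum_{n\in[N]} V^{\pi_n}(d_0)$, and hence
\[
\max_{n\in[N]} V^{\pi_n}(d_0) \;\geq\; \fmax(d_0) + \Delta_N - \epsilon_N(\Pi) - \frac{\regret_N}{N}.
\]

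Taking expectations of both sides with respect to all randomness (the stochastic transitions, the sampled rollouts, and the internal randomness of the online algorithm) and using the assumed regret bound $\E[\regret_N] \leq O(\beta N + \sqrt{\nu N})$ yields
\[
\E\!\left[\max_{n\in[N]} V^{\pi_n}(d_0)\right] \;\geq\; \fmax(d_0) + \E[\Delta_N - \epsilon_N(\Pi)] - O\!\left(\beta + \sqrt{\nu}\,N^{-1/2}\right),
\]
which is exactly the claim. The division of the two regret terms by $N$ is what converts $\beta N$ into $\beta$ and $\sqrt{\nu N}$ into $\sqrt{\nu} N^{-1/2}$.

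The main subtlety, and what I expect to be the only real obstacle, is justifying that \eqref{eq:regret equality} continues to hold \emph{in expectation} even though the online algorithm only has access to biased/noisy gradient estimates built from $\fmaxh$ rather than exact evaluations of $\ell_n$. The resolution is that \eqref{eq:regret equality} is an algebraic identity derived via \cref{lm:pdl} applied to each $\pi_n$; it does not require that the algorithm observe $\ell_n$ exactly. The approximation error of $\fmaxh$ (and sampling noise) enters only through the $\regret_N$ term, because $\regret_N$ is defined against the true losses $\ell_n$ while the updates use noisy estimates of $\nabla \ell_n$. That is precisely why the hypothesis of the theorem is stated as a regret bound depending on the bias $\beta$ and variance $\nu$ of the gradient oracle: it encapsulates, in a single black-box assumption, the statistical effect of estimating $\fmax$ and of Monte Carlo sampling. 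Once this is noted, the remaining steps are the simple manipulations above; no further handling of $\fmaxh$ vs.\ $\fmax$ is needed, and the positivity of $\Delta_N$ (established in \cref{sec:IL as policy improvement}) is what makes the bound a genuine improvement over $\fmax(d_0)$.
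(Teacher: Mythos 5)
Your proof is correct and follows essentially the same route as the paper's: apply the algebraic identity \eqref{eq:regret equality}, bound the average return by the maximum, take expectations, and invoke the assumed regret bound to absorb $\E[\regret_N]/N$ into the $O(\beta+\sqrt{\nu}N^{-1/2})$ term. Your added remark that \eqref{eq:regret equality} is a pathwise identity (so the bias/noise of $\fmaxh$ enters only through the regret hypothesis) is exactly the implicit justification the paper relies on.
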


\cref{th:meta performance theroem} describes considerations of using $\fmaxh$ in place of $\fmax$.
For the single-oracle case, $\fmaxh$ can be an unbiased Monte-Carlo estimate (i.e. $\nabla \what{\ell}_n = \nabla \ell_n$);
but a sample estimate of such $\nabla \what{\ell}_n(\pi_n)$ suffers from a variance that is $T$-times larger than the Monte-Carlo estimate of policy gradient due to the restriction of RIRO data collection protocol\footnote{As $V^{k_s}$ is not the value  of $\pi_n$ but $\pi^{k_s}$, computing an unbiased estimate of $\nabla\ell_n(\pi_n)$ requires uniformly selecting the switching time $\ts\in\{0,\dots, T-1\}$ in the RIRO setting, which amplifies the variance by $O(T)$. 
}.
%
Alternatively, one can use function approximators~\citep{sun2017deeply} as $\fmaxh$ to shift the variance from the gradient estimate to learning $\fmaxh$.
In this case, \eqref{eq:aggrevate online gradient} becomes akin to the actor-critic policy gradient. 
But when the accuracy of the value estimate $\fmaxh$ is bad,
 the bias in \eqref{eq:aggrevate online gradient} can also compromise the policy learning.
%

For the multi-oracle case,
unbiased Monte-Carlo estimates of $\fmax$ are infeasible, because $\fmax(s)=V^{k_s}(s)$ and $\pi^{k_s}$ is unknown (i.e. we do not know the best oracle policy at state $s$).
Therefore, $\fmaxh$ in \eqref{eq:aggrevate online gradient} must be a function approximator. But, due to the max operator in $\fmax$, learning $\fmaxh$ becomes challenging as all the oracles' value functions need to ba approximated uniformly well.
%


\begin{algorithm}[t]
	{\small
		\caption{\alg for IL with multiple oracles}\label{alg:mamba}
		\begin{algorithmic} [1]
			\renewcommand{\algorithmicensure}{\textbf{Input:}}
			\renewcommand{\algorithmicrequire}{\textbf{Output:}}
			\ENSURE Initial learner policy $\pi_1$, oracle polices $\{\pi^k\}_{k\in[K]}$, function approximators $\{\what{V}^k\}_{k\in[K]}$.
			\REQUIRE The best policy in $\{\pi_1, \dots, \pi_N\}$.
			\FOR {$n = 1\dots N-1$}
			\STATE Uniformly sample $\ts\in[T-1]$ and $k\in[K]$.
			\STATE Roll-in $\pi_n$ up to $\ts$ and switch to $\pi^k$ to complete the remaining trajectory to collect data $\DD_n$.
			\STATE Update $\what{V}^k$ using $\DD_n$ (e.g. using Monte-Carlo estimates).
			\STATE Roll-in $\pi_n$ for the full $T$-horizon to collect data $\DD_n'$.
			\STATE Compute the sample estimate $g_n$ of $\nabla \widehat{\ell}_n (\pi_n; \lambda)$ in \eqref{eq:lambda weighted gradient estimate} using $\DD_n'$ and $\fmaxh(s) = \max_{k\in[K]} \what{V}^k(s)$.
			\STATE Update $\pi_n$ to $\pi_{n+1}$ by giving $g_n$ to a first-order online learning algorithm (e.g. mirror descent).
			\ENDFOR
		\end{algorithmic}
	}
\end{algorithm}

\section{\alg: Multi-Step Policy Improvement upon Multiple Oracles} \label{sec:practical algorithm}

We propose \alg as a practical realization of the first-order reduction idea in \cref{th:meta performance theroem} (shown in \cref{{alg:mamba}}).
As discussed, obtaining a good sample estimate of~\eqref{eq:aggrevate online gradient} is nontrivial. 
As a workaround, we will design \alg  based on an alternate online loss $\ell_n(\pi;\lambda)$
that shares the same property as $\ell_n(\pi)$ in \eqref{eq:online loss} but has a gradient expression with tunable bias-variance trade-off. \blue{More importantly, we prove that adapting $\lambda$ leads to a continuous transition from one-step policy improvement to solving a full-scale RL problem. Consequently, fine-tuning this $\lambda$ ``knob" can additionally trade off the limited performance gain of performing one-step policy improvement and the sample inefficiency of solving a full-scale RL problem in order to obtain the best finite-sample policy performance in learning.}

\subsection{Trade-off between One-Step Policy Improvement and Full RL} 
\blue{The alternate online loss $\ell_n(\pi;\lambda)$ is based on the geoemtric weighting technique commonly used in RL algorithms (such as TD-$\lambda$~\cite{sutton1988learning}).} Specifically, for $\lambda\in[0,1]$, we propose to define the new online loss in the $n$-th round as
\begin{align} \label{eq:online loss with lambda weights}
	\ell_n(\pi;\lambda) \coloneqq  - (1-\lambda) T \E_{s \sim d^{\pi_n}}\left[ \Amaxpi_\lambda(s,\pi) \right]  - \lambda \E_{s \sim d_0} \left[ \Amaxpi_\lambda(s,\pi) \right]
\end{align}
where we define a $\lambda$-weighted advantage
\begin{align} \label{eq:lambda advantage}
	\textstyle \Amaxpi_\lambda(s,a)
	\coloneqq (1-\lambda) \sum_{i=0}^\infty \lambda^i  \Amaxpi_{(i)}(s,a)
\end{align}
by combining various $i$-step advantages:
\[
\Amaxpi_{(i)}(s_t,a_t) \coloneqq \E_{\xi_t \sim \rho^\pi|s_t} [r(s_t, a_t) + \dots + r(s_{t+i}, a_{t+i}) +  \fmax(s_{t+i+1}) ]  - \fmax(s_t).\]

\blue{The hyperparameter $\lambda$ in \eqref{eq:online loss with lambda weights} controls the attainable policy performance and the sample complexity of learning with \eqref{eq:online loss with lambda weights}. To examine this, let us first note some identities due to the geometric weighting: $\Amaxpi_0 = \Amaxpi_{(0)} = \Amax$ and $\Amaxpi_{1} = \Amaxpi_{(\infty)} = Q^\pi - \fmax$.
Therefore, when $\lambda=1$, the online loss in \eqref{eq:online loss with lambda weights} becomes the original RL objective for every round (one can show that $\ell_n(\pi;1) = \fmax(d_0) - V^\pi(d_0)$ for all $n$), where policy learning aims to discover the optimal policy but is sample inefficient. On the other hand, when $\lambda=0$ (we define $0^0=1$), \eqref{eq:online loss with lambda weights} reduces back to \eqref{eq:online loss} (i.e., $\ell_n(\pi;0)  = \ell_n(\pi)$), where policy learning is about one-step policy improvement from the max-aggregated baseline (namely IL). Although doing so does not necessarily yield the optimal policy, it can be done in a sample-efficient manner. 
By tuning $\lambda$ we can trade off performance bias and sample complexity to achieve the best performance given a \emph{finite} number of samples.
}


We can make a connection of \eqref{eq:online loss with lambda weights} to the IL literature. When there is a single oracle (i.e. $\fmax= V^\pie$), we can interpret the online loss \eqref{eq:online loss with lambda weights} in terms of known IL objectives.
In \eqref{eq:online loss with lambda weights}, the first term is the $\lambda$-weighted version of the AggreVaTe loss~\citep{ross2014reinforcement}, and the second term is the $\lambda$-weighted version of the THOR loss~\citep{sun2018truncated}. But none of these prior IL algorithms makes use of the $\lambda$-weighted advantages.

\blue{Lastly, let us formally establish the relationship between \eqref{eq:online loss with lambda weights} and the RL objective by a generalization of the Performance Difference Lemma (\cref{lm:pdl}) to take into account geometric weighting.
\begin{restatable}
	{lemma}{LambdaWeightedPDL} \label{pr:lambda weighed PDL}
	For any policy $\pi$, any $\lambda\in[0,1]$, and any baseline value function $f:\SS\to\R$,
	\begin{align}
		\textstyle
		V^{\pi}(d_0) - f(d_0) =  (1-\lambda) T \E_{s \sim d^{\pi}}\left[ A_\lambda^{f,\pi}(s,\pi) \right]  + \lambda \E_{s \sim d_0} \left[ A^{f,\pi}_\lambda(s,\pi) \right],
	\end{align}
	where $A^{f,\pi}_\lambda$ is defined like $\Amaxpi_\lambda$ but with a general $f$ instead of $\fmax$.
\end{restatable}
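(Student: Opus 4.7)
The plan is to reduce both sides of the identity to a trajectory-level expectation of one-step TD residuals and then match coefficients term-by-term. For a trajectory $\tau \sim \pi$ starting from $d_0$ and a baseline $f$ with $f(s_T) = 0$, define the TD residual $\delta_t^f \coloneqq r(s_t, a_t) + f(s_{t+1}) - f(s_t)$, adopting the convention $\delta_t^f = 0$ for $t \geq T$ (extending $r$ and $f$ by zero beyond the horizon). A direct telescope gives the $i$-step identity
\begin{align*}
A^{f,\pi}_{(i)}(s_t, a_t) = \E_{\xi_t \sim \rho^\pi|s_t, a_t}\!\left[\sum_{j=0}^{i} \delta_{t+j}^f\right].
\end{align*}
Plugging this into the geometric combination defining $A^{f,\pi}_\lambda$ and swapping the order of summation via $(1-\lambda)\sum_{i \geq j} \lambda^i = \lambda^j$ yields the GAE-style formula $A^{f,\pi}_\lambda(s_t, a_t) = \E_{\xi_t \sim \rho^\pi|s_t, a_t}\!\left[\sum_{j=0}^{T-1-t} \lambda^j\, \delta_{t+j}^f\right]$.

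Next I would rewrite the claimed right-hand side by using $T\,\E_{s \sim d^\pi}[\,\cdot\,] = \sum_{t=0}^{T-1}\E_{s_t \sim d_t^\pi}[\,\cdot\,]$ and the tower property, turning everything into a single expectation over a full trajectory $\tau \sim \pi$:
\begin{align*}
(1-\lambda)\sum_{t=0}^{T-1}\E_{\tau}\!\left[\sum_{j=0}^{T-1-t}\lambda^j\, \delta_{t+j}^f\right] + \lambda\,\E_{\tau}\!\left[\sum_{j=0}^{T-1}\lambda^j\, \delta_j^f\right].
\end{align*}
Reindexing $m = t + j$ and collecting the coefficient of $\delta_m^f$, the first double sum contributes $(1-\lambda)\sum_{t=0}^{m}\lambda^{m-t} = 1 - \lambda^{m+1}$ while the second contributes $\lambda \cdot \lambda^m = \lambda^{m+1}$, summing to $1$. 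Therefore the right-hand side equals $\E_\tau\!\left[\sum_{m=0}^{T-1}\delta_m^f\right]$.

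To close the loop I would invoke the standard Performance Difference Lemma~(\cref{lm:pdl}): noting that $A^f(s_t,\pi) = \E[\delta_t^f \mid s_t]$, this gives $V^\pi(d_0) - f(d_0) = T\,\E_{s\sim d^\pi}[A^f(s,\pi)] = \E_\tau\!\left[\sum_{t=0}^{T-1}\delta_t^f\right]$, which matches the expression derived in the previous paragraph. The only delicate ingredient is the bookkeeping around the horizon truncation $t + j \leq T - 1$ in the double sum and the interchange of the infinite sum over $i$ with the expectation; I expect this to be the most tedious step, but it is not conceptually hard because all tails vanish under the $\delta^f = 0$ convention past the horizon, so every series terminates in finitely many terms.
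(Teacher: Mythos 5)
Your proof is correct, but it takes a genuinely different route from the paper's. The paper first proves a ``non-even'' Performance Difference Lemma (\cref{lm:noneven performance difference lemma}), which decomposes $V^\pi(d_0)-f(d_0)$ over an arbitrary partition of the horizon into segments, then instantiates it for segment lengths $1,2,3,\dots$ to obtain a family of identities $k\Theta = \sum_{j=0}^{k-2}\E_{d_0}[A_{(j)}(\cdot,\pi)] + \sum_t \E_{d_t^\pi}[A_{(k-1)}(\cdot,\pi)]$, and finally forms the geometric combination of these, using $(1-\lambda)(1+2\lambda+3\lambda^2+\cdots)=\tfrac{1}{1-\lambda}$ to recover the claim. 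You instead reduce everything to one-step TD residuals: your telescoping identity $A^{f,\pi}_{(i)}(s_t,a_t)=\E[\sum_{j=0}^{i}\delta^f_{t+j}]$ and the resulting GAE form of $A^{f,\pi}_\lambda$ are essentially the content of \cref{lm:lambda advantage (practical form)} (which the paper proves separately and does \emph{not} use in its proof of this lemma), after which the identity follows from collecting the coefficient of each $\delta^f_m$ --- which sums to $(1-\lambda^{m+1})+\lambda^{m+1}=1$ --- and one application of the standard \cref{lm:pdl}. Your argument is more self-contained and unifies this lemma with the practical-form identity, at the cost of losing the extra generality of the paper's intermediate lemma (which holds for non-Markovian processes and arbitrary horizon partitions and is of independent interest). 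The only point deserving a remark is the endpoint $\lambda=1$: the interchange $(1-\lambda)\sum_{i\ge j}\lambda^i=\lambda^j$ is a genuine identity only for $\lambda<1$, and at $\lambda=1$ one must either pass to the limit or appeal directly to the finite-horizon truncation $\delta^f_t=0$ for $t\ge T$; the paper's own derivation has the same caveat (it explicitly assumes $\lambda<1$ in the geometric-series step), so this is not a gap relative to the paper, but it is worth one sentence in a final write-up.
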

In other words, the new online loss function $\ell_n(\pi;\lambda)$ satisfies an equality similar to \cref{lm:pdl}, which $\ell_n(\pi)$ relies on to establish the reduction in \eqref{eq:regret equality}. Now, with the proper generalization given by \cref{pr:lambda weighed PDL}, we can formally justify  learning with $\ell_n (\pi_n; \lambda)$.
\begin{restatable}{theorem}{MambaPerformance} \label{th:mamba performance}
Performing no-regret online learning w.r.t. \eqref{eq:online loss with lambda weights} has the guarantee in \cref{th:meta performance theroem}, except now 
$\epsilon_N(\Pi)$ can be negative when $\lambda>0$.
\end{restatable}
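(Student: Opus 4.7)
The plan is to mimic the reduction-to-online-learning derivation used for $\ell_n(\pi)$ in \cref{th:meta performance theroem}, but with the $\lambda$-weighted loss $\ell_n(\pi;\lambda)$ from \eqref{eq:online loss with lambda weights} in place of \eqref{eq:online loss}, substituting \cref{pr:lambda weighed PDL} for \cref{lm:pdl}. The loss \eqref{eq:online loss with lambda weights} is engineered so that when evaluated at $\pi=\pi_n$ its state distribution $d^{\pi_n}$ matches the one appearing in \cref{pr:lambda weighed PDL}, which yields the per-round identity
\[ V^{\pi_n}(d_0) - \fmax(d_0) = -\ell_n(\pi_n;\lambda). \]
Averaging over $n\in[N]$ and then adding and subtracting $\min_{\pi\in\Pi}\frac{1}{N}\sum_n \ell_n(\pi;\lambda)$ reproduces the analogue of \eqref{eq:regret equality},
\[ \frac{1}{N}\sum_{n=1}^N V^{\pi_n}(d_0) = \fmax(d_0) + \Delta_N - \epsilon_N(\Pi) - \frac{\regret_N}{N}, \]
where $\Delta_N$, $\epsilon_N(\Pi)$, and $\regret_N$ are redefined exactly as in \cref{sec:IL as online learning} but with $\ell_n(\cdot;\lambda)$ replacing $\ell_n(\cdot)$.

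Next I would analyze the signs of the two policy-class-dependent terms. To show $\Delta_N \geq 0$, it suffices to establish $A_\lambda^{\fmax,\pimax}(s,\pimax) \geq 0$ pointwise, which reduces in turn to $A_{(i)}^{\fmax,\pimax}(s,\pimax) \geq 0$ for every $i\geq 0$. I would prove this by induction on $i$: the base case $i=0$ is exactly $\Amax(s,\pimax) \geq 0$ from \cref{sec:IL as policy improvement}, and the inductive step uses the tower property to peel off the first step of the $\pimax$-rollout, invoke the hypothesis at the successor state, and close with the one-step improvement of $\pimax$; the geometric weighting in \eqref{eq:lambda advantage} then inherits non-negativity. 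In contrast, $\epsilon_N(\Pi)$ may become strictly negative once $\lambda>0$ because $\ell_n(\pi;\lambda)$ no longer measures only one-step improvement: at the extreme $\lambda=1$, \cref{pr:lambda weighed PDL} gives $\ell_n(\pi;1) = \fmax(d_0) - V^\pi(d_0)$, so $\epsilon_N(\Pi) = V^{\pimax}(d_0) - \max_{\pi\in\Pi} V^\pi(d_0) \leq 0$ whenever some $\pi\in\Pi$ globally outperforms $\pimax$.

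Finally I would apply $\max_{n\in[N]} V^{\pi_n}(d_0) \geq \frac{1}{N}\sum_n V^{\pi_n}(d_0)$, take expectations, and plug in the assumed regret bound $\E[\regret_N] \leq O(\beta N + \sqrt{\nu N})$ to arrive at the claimed $O(\beta + \sqrt{\nu} N^{-1/2})$ slack in the lower bound; the bias/variance constants propagate through the first-order online-learning regret argument exactly as in \cref{th:meta performance theroem}. I expect the main obstacle to be the inductive argument for $\Delta_N \geq 0$, since with $\lambda>0$ the multi-step rollouts of $\pimax$ cannot be controlled by the single pointwise advantage inequality that sufficed in the $\lambda=0$ case; once that induction is in place, the remainder is a direct algebraic transcription of the proof of \cref{th:meta performance theroem}.
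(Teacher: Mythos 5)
Your proposal is correct and follows essentially the same route as the paper: the per-round identity $V^{\pi_n}(d_0)-\fmax(d_0)=-\ell_n(\pi_n;\lambda)$ from \cref{pr:lambda weighed PDL}, the regret decomposition as in \eqref{eq:regret equality}, the observation that $\epsilon_N(\Pi)$ may turn negative since $\pimax$ need not minimize the multi-step loss, and the reduction of $\Delta_N\geq 0$ to $A^{\fmax,\pimax}_{(i)}(s,\pimax)\geq 0$ for all $i$. The only cosmetic difference is that the paper proves this last fact by a backward recursion through an intermediate $Q^{\max}$ quantity whereas you peel the rollout forward; both amount to the same telescoping $A^{\fmax,\pimax}_{(i)}(s_0,\pimax)=\sum_{t=0}^{i}\E[\Amax(s_t,\pimax)]$, so the ``main obstacle'' you anticipate is in fact handled by exactly the single pointwise inequality $\Amax(\cdot,\pimax)\geq 0$.
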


\cref{th:mamba performance} shows that learning with $\ell_n(\pi;\lambda)$  has a similar performance guarantee to using $\ell_n(\pi)$  in \cref{th:meta performance theroem}, but with one important exception: now $\epsilon_N(\Pi)$ can be negative (which is in our favor), because $\pimax$ may not be the best policy for the multi-step advantage in $\ell_n(\pi;\lambda)$ when $\lambda>0$. \blue{This again can be seen from the fact that optimizing $\ell_n(\pi;1)$ is equivalent to direct RL.} As a result, when using larger $\lambda$ in \alg, larger improvements can be made from the oracle polices as we move from one-step policy improvement toward full-fledged RL.
However, looking multiple steps ahead with high $\lambda$ would also increase the feedback variance $\nu$ in \cref{th:meta performance theroem}, which results in a slower convergence rate. In practice, $\lambda$ needs to be tuned to achieve the best finite-sample performance.
}
%

\vspace{-1mm}
\subsection{Simple Gradient Estimation}
\vspace{-1.5mm}

\blue{While the online loss $\ell_n(\pi;\lambda)$ in \eqref{eq:online loss with lambda weights} appears complicated, interestingly, its gradient $\nabla \ell_n(\pi;\lambda)$ has a very clean expression, as we prove below.
\begin{restatable}{lemma}{LambdaWeightedOnlineGradient} \label{pr:lambda weighed online gradient}
	For any $\lambda\in[0,1]$, any baseline value function $f:\SS\to\R$, and any policy $\pi$, the following holds:
	\begin{align}
		\textstyle
		&h(\pi;\lambda) \coloneqq (1-\lambda) T \E_{s \sim d^{\mu}}\left[ A_\lambda^{f,\pi}(s,\pi) \right]  + \lambda \E_{s \sim d_0} \left[ A^{f,\pi}_\lambda(s,\pi) \right]\\
		&\nabla h(\pi;\lambda) |_{\mu=\pi} =  T \E_{s \sim d^{\pi}} \E_{a \sim \pi|s}[ \nabla \log\pi(a|s) A_\lambda^{f,\pi} (s,a)  ],
	\end{align}
	where $d^\mu$ denotes the average state distribution of a policy $\mu$.
\end{restatable}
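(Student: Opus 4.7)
The plan is to split the gradient via the standard log-derivative trick and then exploit the self-referential (Bellman-like) structure of $A_\lambda^{f,\pi}$ to solve for the gradient in closed form. First I note that the only $\pi$-dependence in $h(\pi;\lambda)$ sits inside $A_\lambda^{f,\pi}(s,\pi)$ (since $\mu$, $d^\mu$, $d_0$, and $f$ are treated as constants when differentiating); therefore the problem reduces to computing $W(s) := \nabla_\pi A_\lambda^{f,\pi}(s,\pi)$ pointwise and then averaging against the state measure $(1-\lambda)T\,d^\pi + \lambda\,d_0$.

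To compute $W(s)$, I would first rewrite the $\lambda$-weighted advantage in its GAE/TD form. Using the telescoping $f(s_{t+i+1}) - f(s_t) = \sum_{k=0}^{i}\delta^f_{t+k}$ with $\delta^f_t = r(s_t,a_t) + f(s_{t+1}) - f(s_t)$, the $i$-step advantages collapse to $A_{(i)}^{f,\pi}(s_t,a_t) = \E_{\rho^\pi|s_t}[\sum_{k=0}^{i}\delta^f_{t+k}]$, and after swapping the order of summation I obtain the recursion
\begin{equation*}
A_\lambda^{f,\pi}(s_t,a_t) = \E_{s_{t+1}\sim\PP|s_t,a_t}\!\bigl[\delta^f_t + \lambda\, A_\lambda^{f,\pi}(s_{t+1},\pi)\bigr].
\end{equation*}
Since $\delta^f_t$ does not depend on $\pi$, differentiating and applying the log-derivative trick to $A_\lambda^{f,\pi}(s,\pi) = \E_{a\sim\pi|s}[A_\lambda^{f,\pi}(s,a)]$ yields the linear recursion $W(s) = \phi(s) + \lambda\,\E_{a\sim\pi|s}\E_{s'\sim\PP|s,a}[W(s')]$, where I write $\phi(s) := \E_{a\sim\pi|s}[\nabla\log\pi(a|s) A_\lambda^{f,\pi}(s,a)]$ for the target integrand. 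Unrolling gives the explicit solution $W(s_t) = \E_{\xi_t\sim\rho^\pi|s_t}\bigl[\sum_{k\geq 0}\lambda^k\phi(s_{t+k})\bigr]$, with the sum effectively truncated at the horizon since $f(s_T)=0$.

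The final step is a purely combinatorial identity. Let $M_i := \E_{s\sim d_i^\pi}[\phi(s)]$, so that $T\E_{s\sim d^\pi}[\phi(s)] = \sum_{i=0}^{T-1}M_i$. Rolling a $\pi$-trajectory from $d_0$ places mass $\lambda^k M_k$ on the $k$-th summand, giving $\E_{s\sim d_0}[W(s)] = \sum_{k=0}^{T-1}\lambda^k M_k$. Starting instead from $d^\pi$ and applying Fubini with the change of variable $i=j+k$ yields $T\E_{s\sim d^\pi}[W(s)] = \sum_{i=0}^{T-1} M_i\sum_{m=0}^{i}\lambda^m = \sum_{i=0}^{T-1} M_i \frac{1-\lambda^{i+1}}{1-\lambda}$. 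Combining the two with weights $(1-\lambda)$ and $\lambda$ makes the $\lambda^{i+1}$ contributions cancel exactly, leaving $\sum_{i=0}^{T-1}M_i = T\E_{s\sim d^\pi}[\phi(s)]$, which is the claim.

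The main obstacle I anticipate is bookkeeping around the finite horizon, in particular making sure that truncation of the $\lambda$-series at $T-1-t$ is handled consistently across the Bellman unrolling and the change-of-variable step, so that the clean cancellation between the two terms of $h(\pi;\lambda)$ still goes through; the algebra itself is short, but a careful statement of boundary conditions ($\phi(s_t)=0$ for $t\geq T$ and $f(s_T)=0$) is what makes it go through. Once that is in place, the proof is essentially three lines: log-derivative trick, solve the recursion, and the geometric identity above.
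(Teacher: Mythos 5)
Your proposal is correct, and it reaches the result by a route that differs from the paper's in one substantive way. Both arguments hinge on the same two ingredients: an intermediate identity expressing the gradient of $\E_{s\sim d}[A_\lambda^{f,\pi}(s,\pi)]$ started from a point mass as $\sum_{k\ge 0}\lambda^k \E_{s\sim d_k^\pi}[\phi(s)]$ with $\phi(s)=\E_{a\sim\pi|s}[\nabla\log\pi(a|s)A_\lambda^{f,\pi}(s,a)]$, and then the geometric cancellation $(1-\lambda)\tfrac{1-\lambda^{i+1}}{1-\lambda}+\lambda\cdot\lambda^{i}=1$ after Fubini over the two state measures; your final combinatorial step is essentially identical to the paper's. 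Where you diverge is in how the intermediate identity is obtained. The paper differentiates each $i$-step advantage $A_{(i)}^{f,\pi}(s,\pi)$ directly via the policy-gradient theorem, then performs the $\lambda$-weighted double sum over $(i,t)$ and reindexes. You instead pass to the TD form $A_\lambda^{f,\pi}(s_t,a_t)=\E_{s_{t+1}\sim\PP|s_t,a_t}\bigl[\delta^f_t+\lambda A_\lambda^{f,\pi}(s_{t+1},\pi)\bigr]$ (which is essentially the paper's Lemma on the practical form of $\Ah_\lambda^\pi$, proved there for a different purpose), differentiate the recursion to get $W(s)=\phi(s)+\lambda\,\E_{a\sim\pi|s}\E_{s'\sim\PP|s,a}[W(s')]$, and unroll. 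Your route is arguably more structural and makes the finite-horizon boundary conditions ($f(s_T)=0$, $\phi\equiv 0$ beyond the horizon) easier to track in one place, at the cost of having to first justify the Bellman form of $A_\lambda^{f,\pi}$; the paper's route avoids the recursion but pays with heavier double-sum bookkeeping. Both are complete proofs of the same statement.
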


Using \cref{pr:lambda weighed online gradient}, we design a gradient estimator for $\ell_n(\pi;\lambda)$ by approximating $\fmax$ in $\nabla \ell_n(\pi;\lambda)$ with a function approximator $\fmaxh$:
\begin{align} \label{eq:lambda weighted gradient estimate}
\nabla \widehat{\ell}_n (\pi_n; \lambda)  &=  -T\E_{s \sim d^{\pi_n}} \E_{a \sim \pi|s}[ \nabla \log\pi(a|s) \Ah_\lambda^{\pi}(s,a)  ] |_{\pi=\pi_n},
\end{align}
where $\Ah_\lambda$ is defined by replacing $\fmax$ in~\eqref{eq:lambda advantage} with $\fmaxh$. Implementation-wise, the infinite sum in $\Ah_\lambda$ can be computed by standard dynamic programming.
\begin{restatable}{lemma}{LambdaAdvantagePracticalFrom} \label{lm:lambda advantage (practical form)}
	Define $ \Ah(s,a) \coloneqq r(s,a) + \E_{s'|s,a}[\fmaxh(s')] - \fmaxh(s)$. It holds that for all $\lambda \in[0,1]$,
	\begin{align} \label{eq:lambda advantage (practical form)}
		\textstyle
		\Ah_\lambda^\pi(s_t,a_t) = \E_{\xi_t \sim \rho^\pi|s_t} \left[ \sum_{\tau=t}^{T-1} \lambda^{\tau-t} \Ah(a_\tau, s_\tau) \right]
	\end{align}
\end{restatable}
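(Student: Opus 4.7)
The plan is to use the standard GAE-style telescoping argument: show that each $i$-step advantage $\Ah_{(i)}^\pi(s_t,a_t)$ equals the expected sum of one-step TD errors $\Ah(s_\tau,a_\tau)$ from $\tau=t$ to $\tau=t+i$, and then collapse the $\lambda$-weighted combination using a summation swap and a geometric-series identity.

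First I would establish the telescoping identity for the $i$-step advantage. Starting from
\[\Ah(s_\tau,a_\tau) = r(s_\tau,a_\tau) + \E_{s_{\tau+1}\sim\PP|s_\tau,a_\tau}[\fmaxh(s_{\tau+1})] - \fmaxh(s_\tau),\]
and applying the tower rule under $\rho^\pi(\cdot\mid s_t)$, I can replace the conditional expectation by $\fmaxh(s_{\tau+1})$ inside the outer expectation. Summing over $\tau\in\{t,\dots,t+i\}$ causes the $\pm\fmaxh(s_\tau)$ terms to telescope, leaving
\[\E_{\xi_t\sim\rho^\pi|s_t}\Big[\sum_{\tau=t}^{t+i}\Ah(s_\tau,a_\tau)\Big] = \E_{\xi_t\sim\rho^\pi|s_t}\Big[\sum_{\tau=t}^{t+i}r(s_\tau,a_\tau) + \fmaxh(s_{t+i+1})\Big] - \fmaxh(s_t),\]
which is exactly $\Ah_{(i)}^\pi(s_t,a_t)$ for $0\le i\le T-1-t$. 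For $i\ge T-1-t$, the trajectory terminates at $T$ and $\fmaxh(s_T)=0$, so $\Ah_{(i)}^\pi(s_t,a_t)$ stabilizes at its Monte-Carlo value $\Ah_{(T-1-t)}^\pi(s_t,a_t) = \E_{\xi_t}[\sum_{\tau=t}^{T-1}\Ah(s_\tau,a_\tau)]$.

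Next I would substitute these identities into the defining series $\Ah_\lambda^\pi(s_t,a_t) = (1-\lambda)\sum_{i=0}^\infty \lambda^i \Ah_{(i)}^\pi(s_t,a_t)$, split it at $i=T-1-t$, and interchange the (now finite) order of summation. The coefficient of a fixed $\Ah(s_\tau,a_\tau)$ for $\tau\in\{t,\dots,T-1\}$ is
\[(1-\lambda)\sum_{i=\tau-t}^{T-2-t}\lambda^i + \lambda^{T-1-t} = \big(\lambda^{\tau-t}-\lambda^{T-1-t}\big) + \lambda^{T-1-t} = \lambda^{\tau-t},\]
and pulling the common outer expectation yields $\E_{\xi_t\sim\rho^\pi|s_t}\big[\sum_{\tau=t}^{T-1}\lambda^{\tau-t}\Ah(s_\tau,a_\tau)\big]$, which is exactly \eqref{eq:lambda advantage (practical form)}.

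The main obstacle is bookkeeping rather than any conceptual difficulty: one must handle the finite-horizon boundary carefully so that the tail mass $\lambda^{T-1-t}$ from indices $i\ge T-1-t$ exactly compensates for the truncation of the telescoping identity at $\tau=T-1$, and apply the tower rule consistently so the one-step conditional expectations inside $\Ah$ merge cleanly with the outer expectation under $\rho^\pi(\cdot\mid s_t)$.
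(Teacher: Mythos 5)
Your proof is correct and is essentially the same computation as the paper's: both rely on swapping the order of summation, the geometric-series identity $(1-\lambda)\sum_{i\ge \tau-t}\lambda^i=\lambda^{\tau-t}$, and telescoping of the $\fmaxh$ terms with the boundary condition $\fmaxh(s_T)=0$. The only difference is cosmetic ordering — you first telescope each $\Ah_{(i)}^\pi$ into a sum of one-step TD errors and then collapse the $\lambda$-weights, whereas the paper swaps the sums first and telescopes at the end — and your coefficient bookkeeping at the horizon boundary checks out.
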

Therefore, given $\fmaxh$ as a function approximator, an unbiased estimate of \eqref{eq:lambda weighted gradient estimate} can be obtained by sampling trajectories from $\pi$ directly, without any RIRO interleaving with the oracles. This gradient estimator is reminiscent of GAE for policy gradient~\citep{schulman2015high}, but translated to IL.
}

\blue{The gradient expression in \eqref{eq:lambda weighted gradient estimate} reveals that $\lambda$ also plays a role in terms of bias-variance trade-off, in addition to the transition from one-step improvement to full-scale RL discussed above. Comparing \eqref{eq:aggrevate online gradient} and \eqref{eq:lambda weighted gradient estimate}, we see that $\nabla \widehat{\ell}_n (\pi; \lambda)$ in~\eqref{eq:lambda weighted gradient estimate} replaces $\Ah$ in $\nabla \widehat{\ell}_n (\pi)$ in \eqref{eq:aggrevate online gradient} with the $\lambda$-weighted version $\Ah_\lambda^\pi$. 
Controlling $\lambda$ regulates the effects of the approximation error $\fmaxh-\fmax$ on the difference $\Ah_\lambda^{\pi}-{\Amaxpi_\lambda}$, which in turn determines the gradient bias $\nabla \widehat{\ell}_n (\pi; \lambda) - \nabla \ell_n (\pi; \lambda)$ (namely $\beta$ in \cref{th:meta performance theroem}). This mechanism is similar to the properties of the GAE policy gradient~\citep{schulman2015high}.
We recover
 the gradient in \eqref{eq:aggrevate online gradient} when $\lambda =0$ and the policy gradient when $\lambda=1$.
 }
%


Finally, we emphasize that $\nabla \widehat{\ell}_n (\pi; \lambda)$ in \eqref{eq:lambda weighted gradient estimate} is \emph{not} an approximation of $\nabla \widehat{\ell}_n (\pi)$ in \eqref{eq:aggrevate online gradient}, because generally $\nabla \ell_n (\pi_n; \lambda) \neq \nabla \ell_n (\pi_n)$ even when $\fmaxh=\fmax$, except for  $\fmax = V^{\pi_n}$ \blue{(in this case, for all $\lambda\in[0,1]$, $\nabla \ell_n (\pi_n; \lambda) = \nabla \ell_n (\pi_n) = -\nabla V^{\pi_n}(d_0)$, which is the negative policy gradient).}
Therefore, while the GAE policy gradient \citep{schulman2015high} is an approximation of the policy gradient,
$\nabla \what{\ell}_n (\pi)$  and $\nabla \what{\ell}_n (\pi; \lambda)$ are gradient approximations of \emph{different} online loss functions in \eqref{eq:online loss} and \eqref{eq:online loss with lambda weights}.

\vspace{-2mm}
\section{Experiments and Discussion} \label{sec:experiments}
\vspace{-1.5mm}


\begin{wrapfigure}{r}{0.3\textwidth}
	\vspace{-6mm}
	\begin{center}
		\includegraphics[width=0.3\textwidth]{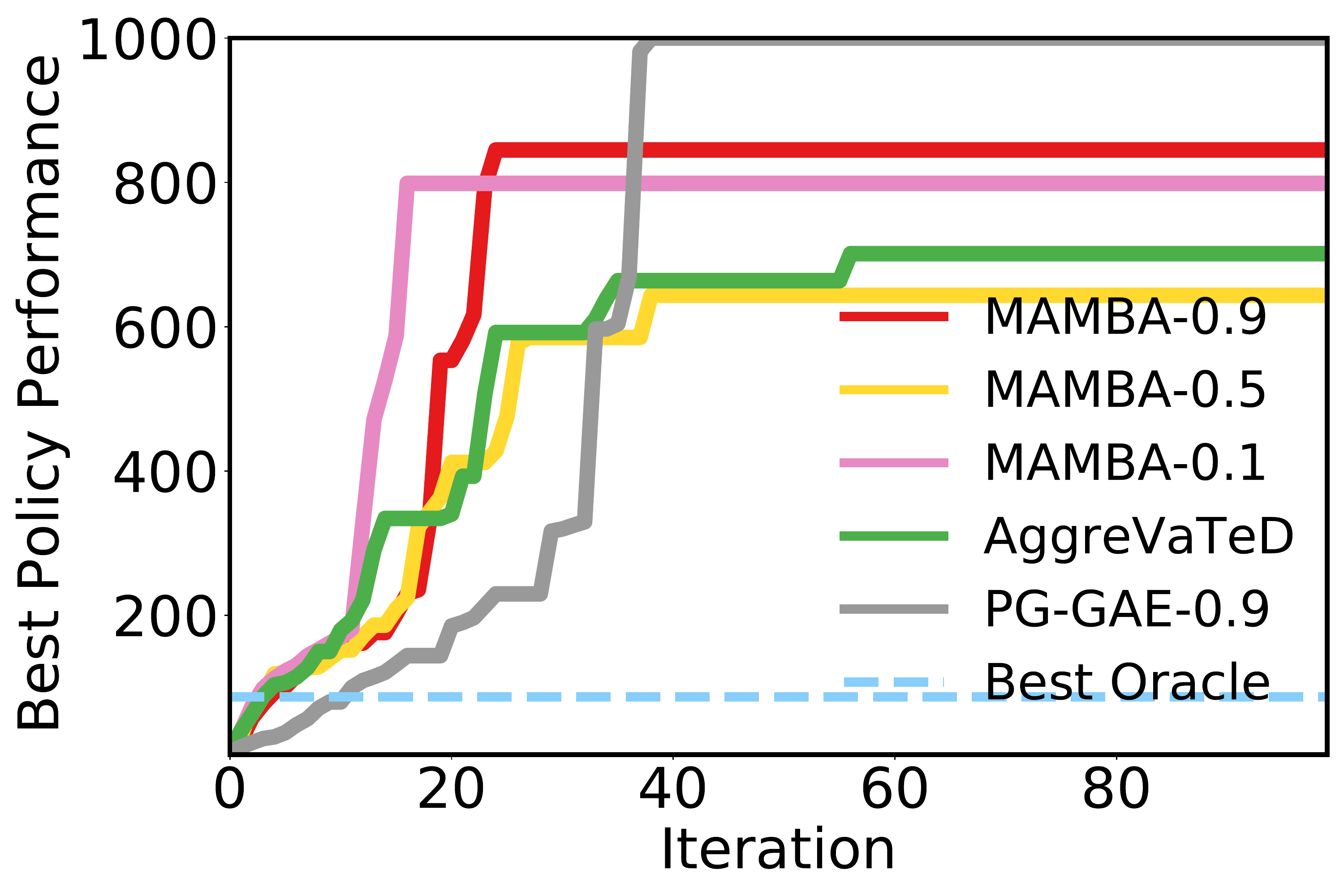}
		\vspace{-7mm}
	\end{center}
	\caption{\small \alg with different $\lambda$ values in a single-oracle setup in CartPole. A curve shows an algorithm's median performance across 8 random seeds.}
	\label{fig:lambda effects (cp)}
	\vspace{-3mm}
\end{wrapfigure}

We corroborate our theoretical discoveries with simulations of IL from multiple oracles. We compare \alg with two representative  algorithms: GAE Policy Gradient~\citep{schulman2015high} (\pg with $\lambda=0.9$) for direct RL and \aggd~\citep{sun2017deeply} for IL with a single oracle.
Because we can view these algorithms as different first-order feedback for policy optimization, comparing their performance allows us to study two important questions: 1) whether the proposed GAE-style gradient in \eqref{eq:lambda weighted gradient estimate} is an effective update direction for IL and 2) whether using multiple oracles helps the agent learn faster.

Four continuous Gym~\citep{brockman2016openai} environments are used: CartPole and DoubleInvertedPendulum (DIP) based on DART physics engine~\citep{lee2018dart}, and Halfcheetah and Ant based on Mujoco physics engine~\citep{todorov2012mujoco}.
To facilitate a meaningful comparison, we let these three algorithms use the same first-order optimizer\footnote{ADAM~\citep{kingma2014adam} for CartPole, and Natural Gradient Descent~\citep{amari1998natural} for DIP, Halfcheetah, and Ant}, train the same initial neural network policies, and share the same random seeds.
In each training iteration, an algorithm would perform $H$ rollouts following the RIRO paradigm (see also \cref{alg:mamba}),
where $H=8$ for CartPole and DIP and $H=256$ for Halfcheetah and Ant.
%
Each oracle policy here is a partially trained, \emph{suboptimal} neural network\footnote{The best and the worst oracles in scores of 87 and 9 in CartPole, 4244 and 2440 in DIP, 1726 and 1395 in Halfcheetah, and 1051  and 776 in Ant.}, and its value function approximator used in \alg is trained online along policy optimization, by Monte-Carlo estimates obtained from the RIRO data collection. Please see \cref{sec:exp details} for implementation details.
The codes are provided at \texttt{https://github.com/microsoft/MAMBA}.

\vspace{-1mm}
\paragraph{Effects of $\lambda$-weighting} First we consider in \cref{fig:lambda effects (cp)} the \emph{single}-oracle IL setting with the oracle that has the highest return in the CartPole environment. We see that with the help of the oracle policy, \aggd (which is \alg with $\lambda=0$) improves faster than \pg. However, while \aggd learns to significantly outperform the oracle, it does not reach the optimal performance, like \pg. To improve the learner performance, we use \alg with $\lambda>0$ to learn from the same suboptimal oracle. As shown in \cref{th:mamba performance} using a positive $\lambda$ can increase the amount of improvement that can be made from the oracle compared with $\lambda=0$. 
The trend in \cref{fig:lambda effects (cp)} supports this insight, where high $\lambda=0.9$ allows the learner to reach higher return. 
We found that using the middle $\lambda=0.5$ gives the worst performance, likely because it settles badly in the trade-off of properties.
In the following experiments, we will use $\lambda=0.9$ for \alg as it performs the best here. 

\begin{figure}[t]
	\centering
	\hspace{-15mm}
	%
	%
	\begin{subfigure}[b]{0.32\textwidth}
		\centering
		\includegraphics[width=0.8\textwidth]{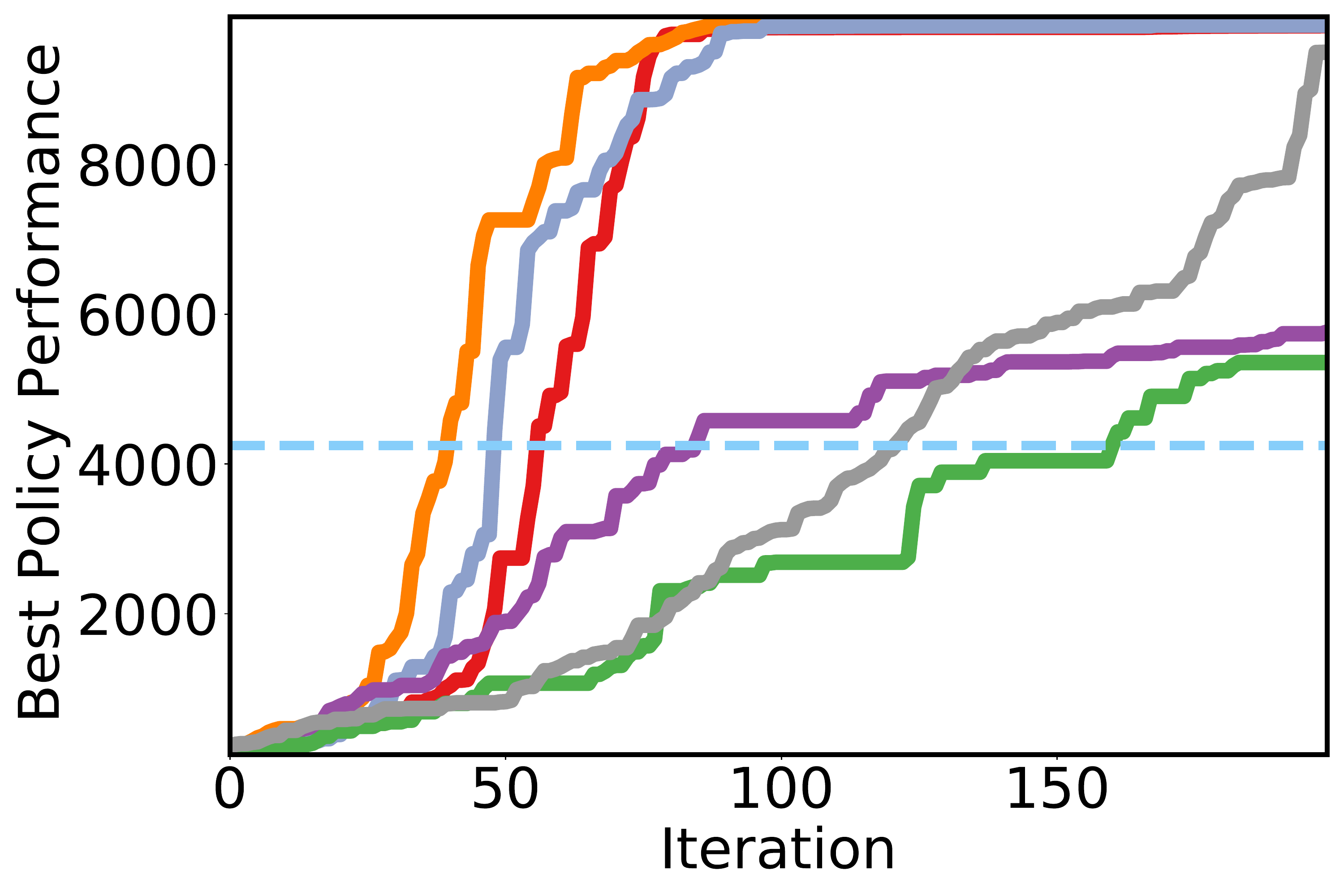}
		\caption{\small DIP}
		\label{fig:oracles effects (dip)}
	\end{subfigure}
	\hspace{-10mm}
	\begin{subfigure}[b]{0.32\textwidth}
		\centering
		\includegraphics[width=0.8\textwidth]{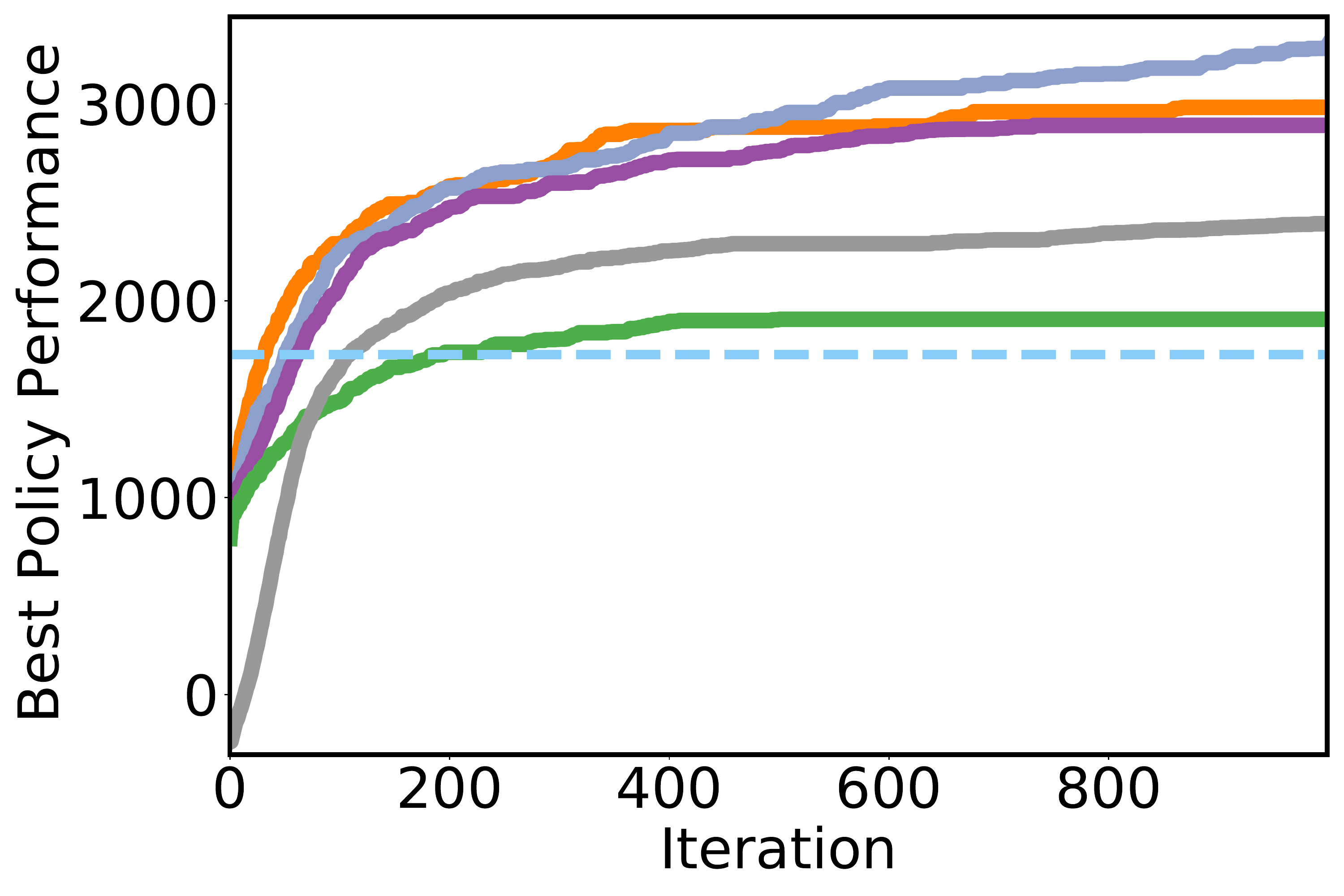}
		\caption{\small Halfcheetah}
		\label{fig:oracles effects (halfcheetah)}
	\end{subfigure}
	\hspace{-10mm}
	\begin{subfigure}[b]{0.32\textwidth}
		\centering
		\includegraphics[width=0.8\textwidth]{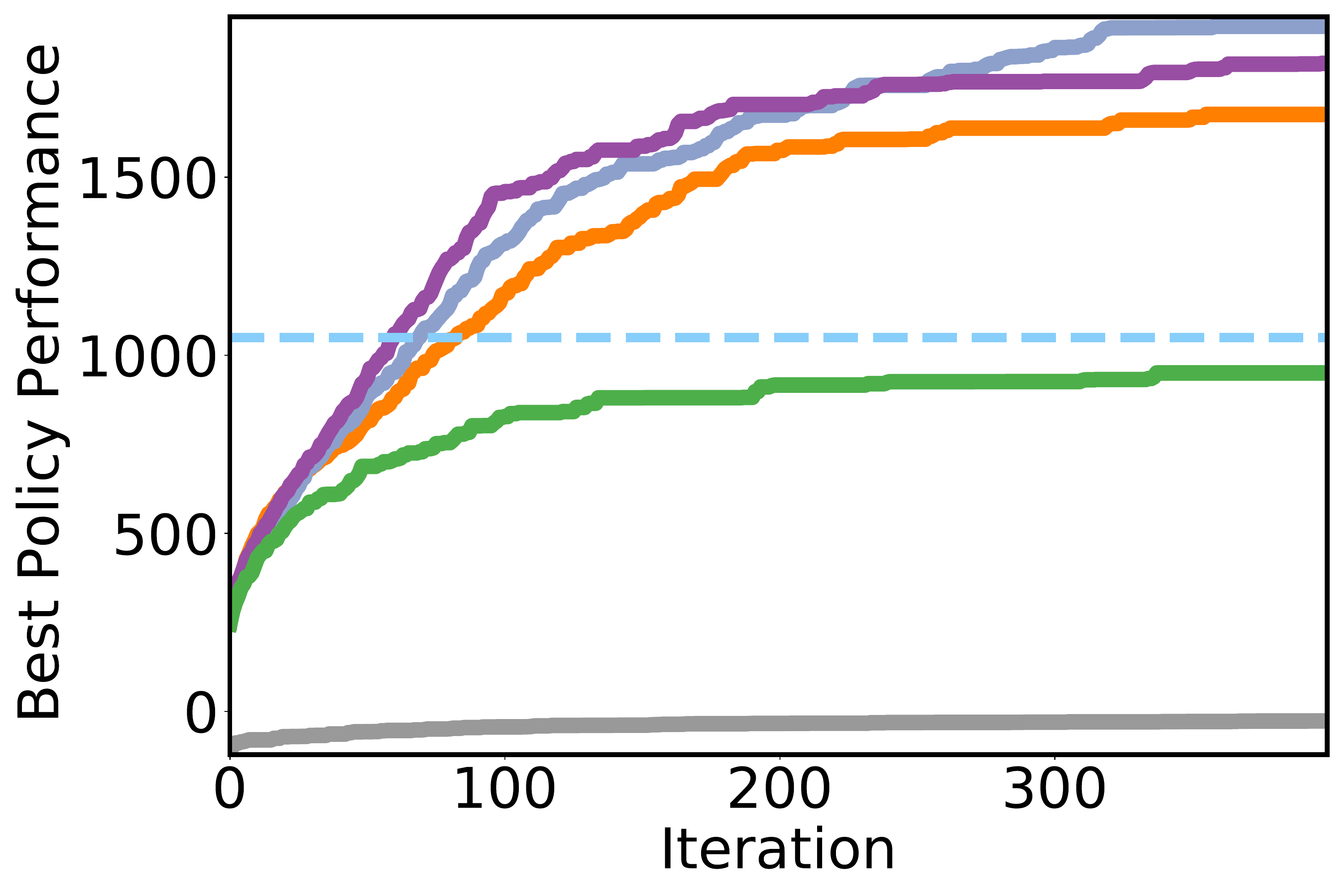}
		\caption{\small Ant}
		\label{fig:oracles effects (ant)}
	\end{subfigure}
	\hspace{-9mm}
	\begin{subfigure}[b]{0.25\textwidth}
		\centering
		\includegraphics[width=0.6\textwidth]{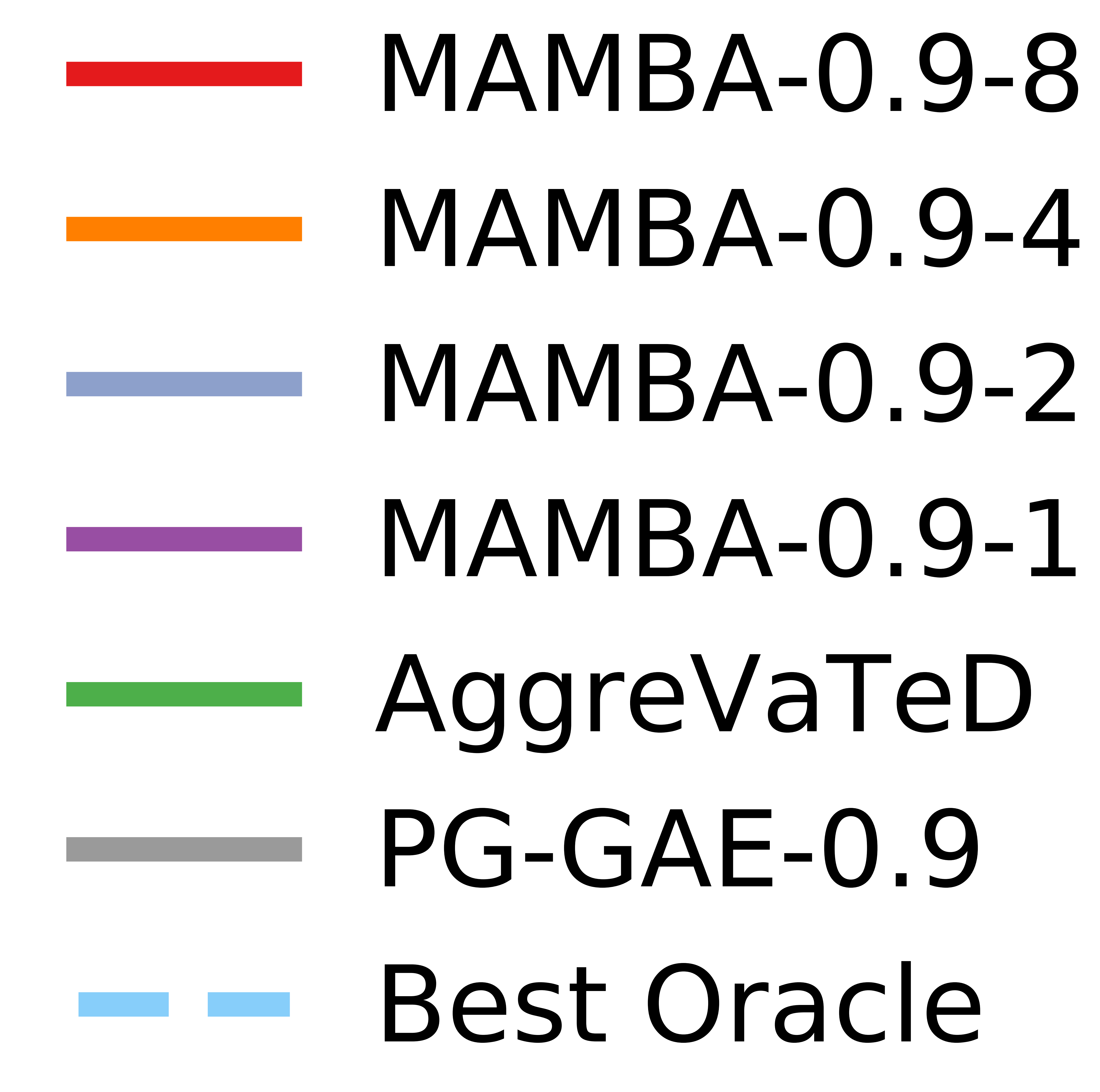}
		\vspace{8mm}
	\end{subfigure}
	\hspace{-20mm}
	\caption{\small
			Comparison of \alg with different number of oracles ($\lambda=0.9$).
			A curve shows an algorithm's median performance across 8 random seeds.
	}
	\vspace{-5mm}
	\label{fig:controlled exps}
\end{figure}

\vspace{-1mm}

\paragraph{Effects of multiple oracles} 
 We show the effects of using multiple oracles in \cref{fig:controlled exps} for the remaining three environments (the results of CartPole can be found in \cref{fig:all-results} and \cref{sec:exp details}). Here we run \alg with $\lambda=0.9$ with 1, 2, 4, or 8 oracles. We index these oracles in a descending order of their performances with respect to the initial state distribution;
 e.g., MAMBA-0.9-1 uses the best oracle and  MAMBA-0.9-2 uses the top two.
Interestingly, by including more but \emph{strictly weaker} oracles, \alg starts to improve the performance of policy optimization. Overall \alg greatly outperforms \pg and \aggd across all the environments, even when just using a single oracle.\footnote{\blue{The bad performance of \pg in Ant is due to that all learners can only collect a fixed number of trajectories in each iteration, as opposed to a fixed number of samples used in usual RL benchmarks. This setting is a harder RL problem and better resembles real-world data collection where reset is expensive.}}
%
\blue{The benefit of using more oracles becomes smaller, however, as we move to the right of \cref{fig:controlled exps} into higher dimensional problems. Although using more oracles can potentially yield higher performance, the learner also needs to spend more time to learn the oracles' value functions.
Therefore, under the same interaction budget, the value approximation quality worsens when there are more oracles.
This phenomenon manifests particularly in our direct implementation, which trains each value function estimator independently with Monte-Carlo samples. We believe that the scalability for dimensionality can be improved by using off-policy techniques and sharing parameters across different networks. We leave this as an important future direction.}

\vspace{-1mm}
\paragraph{Summary}
We conclude this paper by revisiting \cref{fig:all-results}, which showcases the best multi-oracle settings in \cref{fig:controlled exps}. Overall these results support the benefits of IL from multiple oracles and the new GAE-style IL gradient in \eqref{eq:lambda weighted gradient estimate}.
%
%
In conclusion, we propose a novel theoretical foundation and algorithm \alg for IL with multiple oracle policies. We study how the conflicts between different oracles can be resolved through the \benchmark and propose a new GAE-style gradient for the IL setting, which can also be used to improve the robustness and performance of existing single-oracle IL algorithms.
We provide regret-based theoretical guarantees on \alg and demonstrate its properties  empirically. The experimental results show that \alg is able to improve upon multiple, very suboptimal oracle policies to achieve the optimal performance, faster than both the pure RL method (\pg~\citep{schulman2015high}) and the single-oracle IL algorithm (\aggd~\citep{sun2017deeply}).

\clearpage

\section*{Broader Impact}
This paper is theoretical in nature, and so we expect the ethical and
societal consequences of our specific results to be minimal. More
broadly, we do expect that reinforcement learning will have
significant impact on society. There is much potential for benefits to
humanity in the often-referenced application domains of precision
medicine, personalized education, and elsewhere. There is also much
potential for harms, both malicious and unintentional. To this end, we
hope that research into the foundations of reinforcement learning can
help enable these applications and mitigate harms through the
development of algorithms that are efficient, robust, and safe.

\bibliographystyle{unsrtnat}
\bibliography{ref}

\clearpage
\appendix

\begin{center}
	{\Large\textbf{Appendix}}
\end{center}

\section{MDP Examples for \cref{sec:conceptual algorithm}} \label{sec:compare}

\paragraph{Problem with selecting oracles based on initial value.}
In the example of Figure~\ref{fig:mdp_example_2}, each oracle $\pi_\ell$ and $\pi_r$ individually gets same the suboptimal reward of $1/2$.
Alternatively, we can switch between the oracles once to get a reward of $3/4$, and twice to get the optimal reward of $1$.
\begin{figure}[H]
	\centering
	\includegraphics[width=0.8\textwidth, trim=15mm 30mm 15mm 10mm, clip]{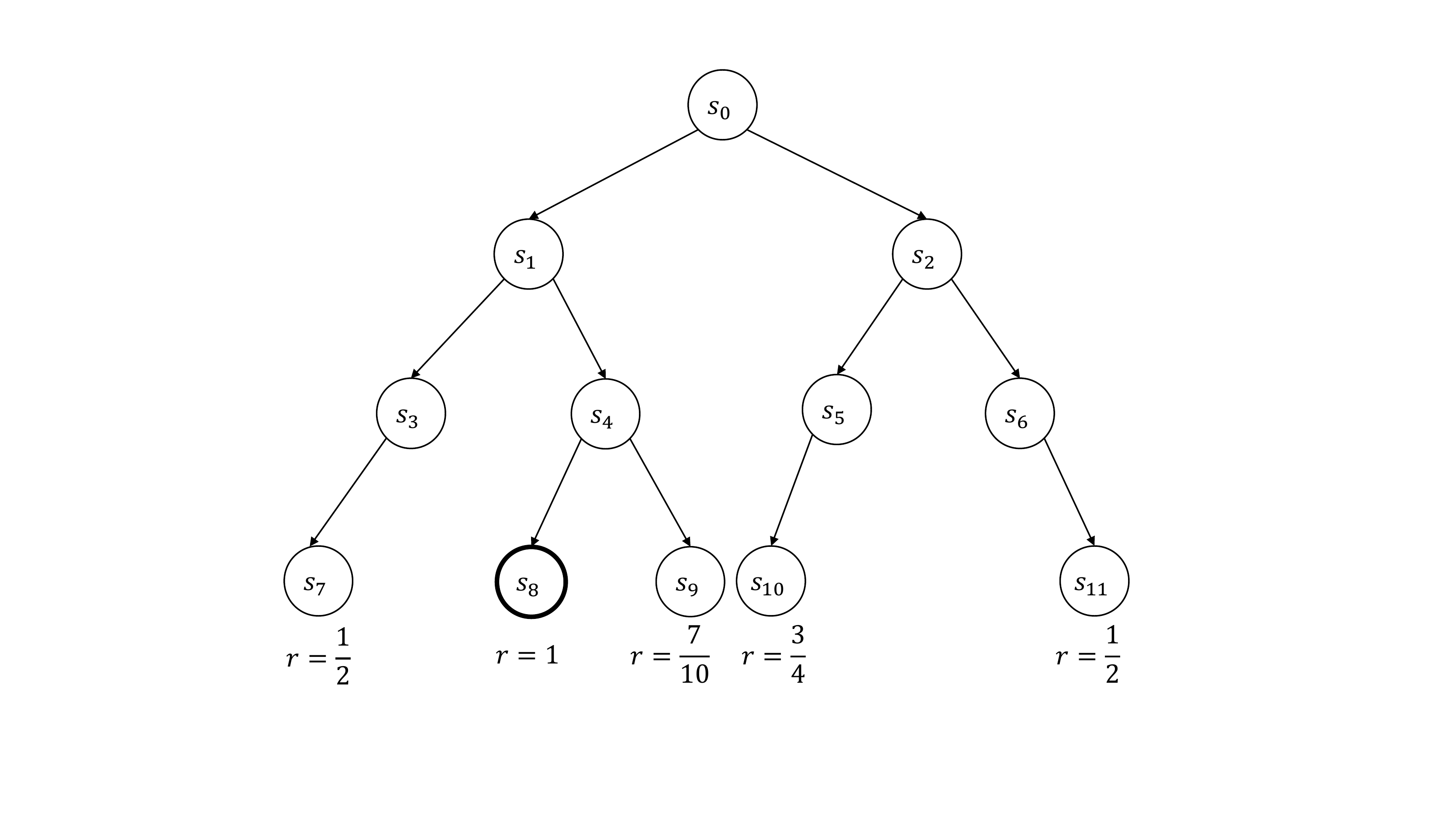}
	\caption{An example MDP for illustrative purposes. All terminal states not shown give a reward of 0 and intermediate states have no rewards. Two oracle policies $\pi_\ell$ and $\pi_r$ choose the left and right actions respectively in each state. The optimal terminal state is outlined in bold.}
	\label{fig:mdp_example_2}
\end{figure}

\paragraph{Ordering of $\pi^\bullet$ and $\pimax$.}
Consider the example MDP of Figure~\ref{fig:mdp_example_1}. In the state $s_0$, the policy $\pi^\bullet$ selects the oracle with largest value in $s_0$ and goes left. It subsequently selects the right oracle in $s_1$ and left in $s_4$ to get the optimal reward. $\pimax$ on the other hand chooses between the left and right actions in $s_0$ based on $\fmax(s_1) = 0.7$ and $\fmax(s_2) = 3/4$. Consequently it goes right and eventually obtains a suboptimal reward of $3/4$. In this case, we see that $\pi^\bullet$ is better than $\pimax$. On the other hand, if we swap the rewards of $s_7$ and $s_{11}$, then $\pi^\bullet$ chooses the right action in $s_0$ and gets a suboptimal reward. Further swapping the rewards of $s_9$ and $s_{10}$ makes $\pimax$ pick the left action in $s_0$ and it eventually reaches the optimal reward. This illustrates clearly that $\pimax$ and $\pi^\bullet$ are incomparable in general.
\begin{figure}[H]
	\centering
	\includegraphics[width=0.8\textwidth, trim=15mm 30mm 15mm 10mm, clip]{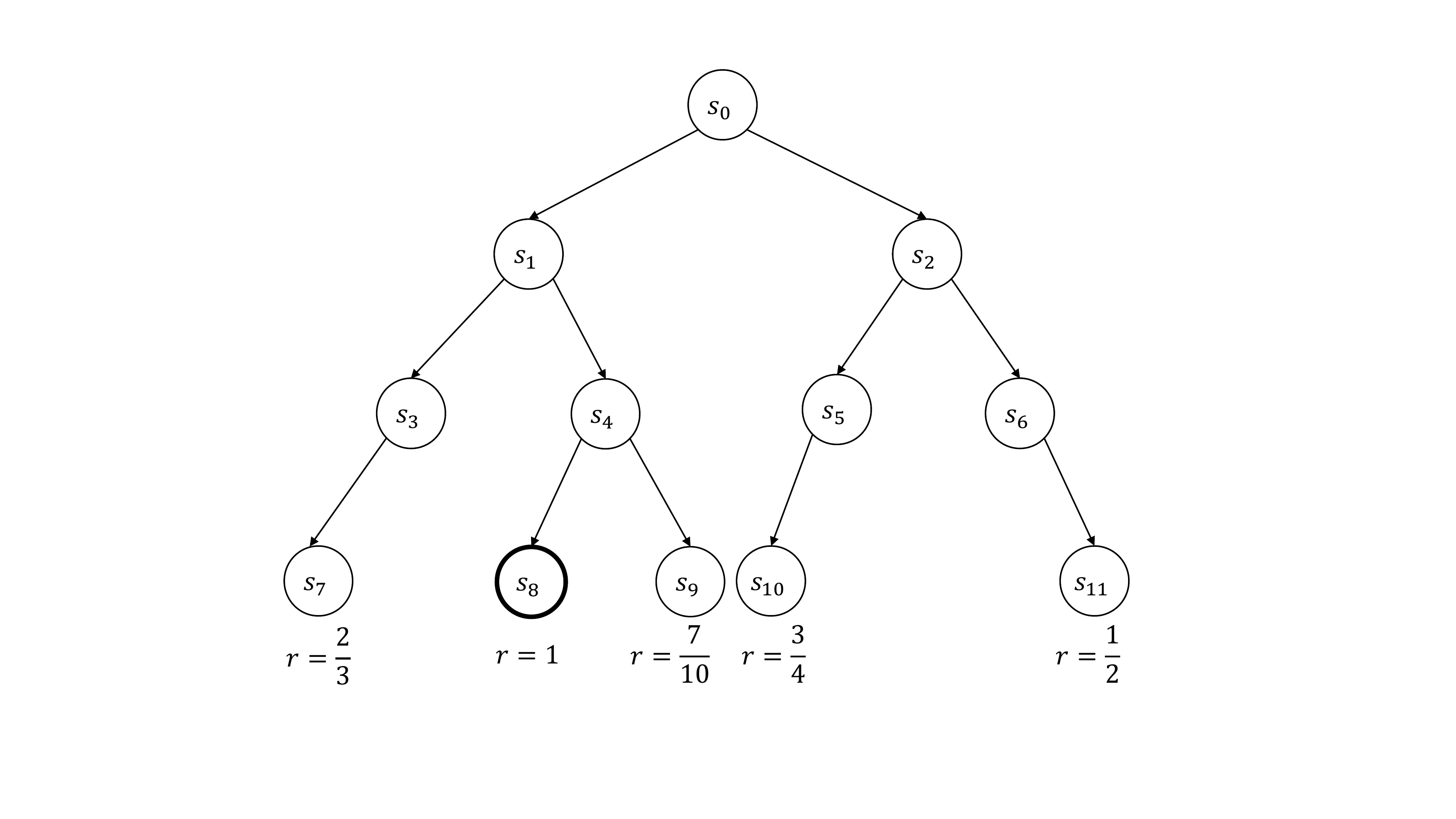}
	\caption{An alternate example MDP for illustrative purposes. All terminal states not shown give a reward of 0 and intermediate states have no rewards. Two oracle policies $\pi_\ell$ and $\pi_r$ choose the left and right actions respectively in each state. The optimal terminal state is outlined in bold.}
	\label{fig:mdp_example_1}
\end{figure}

\section{Additional Notes on Related Work}

Several prior works proposed empirical approaches to IL settings with multiple oracles. InfoGAIL~\citep{li2017infogail} is an extension of GAIL~\citep{ho2016gail} that aims at automatically identifying semantically meaningful latent factors that can explain variations in demonstrations across oracles. It assumes that demonstrations come from a mixture-of-oracles policy, where each demonstration is generated by sampling a value of the latent factors from a prior and using it to condition an oracle's action choices. InfoGAIL tries to recover this oracle mixture. In contrast, \alg can be viewed as choosing actions based on learned estimates of oracles' value functions without imitating any single oracle or their mixture directly. In multi-modal IL \citep{hausman2017multimodal}, latent factors conceptually similar to those in InfoGAIL correspond to different skills being demonstrated by the oracles, and \citet{tamar2018mi}'s approach focuses on settings where these factors characterize different oracles' intentions. OIL~\citep{li2018oil} is more similar to \alg: like \alg, it uses individual oracle policies' state values to decide on an action in a given state. However, OIL does so by using the best-performing oracle in a given state as the learner's ``critic" and doesn't justify its approach theoretically.

At least two algorithms have used a Bayesian approach to decide which oracle to trust in a multiple-oracle setting. AC-Teach~\citep{kurenkov2019acteach} models each oracle with a set of attributes and relies on a Bayesian approach to decide which action to take based on their demonstrations. \citet{gimelfarb2018bmc} assumes that oracles propose reward functions, some of which are inaccurate, and uses Bayesian model combination to aggregate their advice. Out of the two, AC-Teach can be regarded as a Bayesian counterpart of \alg, but, like other existing approaches to learning from multiple oracles, doesn't come with a theoretical analysis.

\blue{Lastly, we remark that the improvement made from the suboptimal oracles in \alg is attributed to using reward information in IL, similar to \aggd~\citep{sun2017deeply} but different from behavior cloning~\citep{pomerleau89alvinn} or DAgger~\citep{ross2011dagger}. While we allow weaker oracle policies than are typical in IL literature, they still need to have meaningful behaviors to provide an informative advantage \alg to improve upon; e.g., they cannot be completely uniformly random policies as is often done in the batch RL setting.
}

\section{Proofs} \label{app:proofs}

\subsection{Proof of \cref{lm:pdl}}
\PDL*
\begin{proof}
	By definition of $d^\pi$, we can write
	\[
		V^\pi(d_0) = T \E_{s \sim d^\pi}\E_{s \sim \pi|s}[r(s,a)] =  T \E_{s \sim d^\pi}[r(s,\pi)]
	\]
	 On the other hand, we can write
	\begin{align*}
	- f(d_0) &= \sum_{t=1}^{T-1} f(d_t) - \sum_{t=0}^{T-1} f(d_t) = T\E_{s \sim d^\pi}[\E_{a\sim\pi|s}\E_{s' \sim \PP|s,a}[f(s')] - f(s)]
	\end{align*}
	Combing the two equalities shows the result.
\end{proof}

\subsection{Proof of \cref{cr:improvable baseline}}
\ImprovableBaselinValueFunction*

\begin{proof}
	Because \cref{lm:pdl} holds for any MDP, given the state $s$ in \cref{cr:improvable baseline} we can define a new MDP whose initial state is at $s$ and properly adapt the problem horizon. Then \cref{cr:improvable baseline} follows directly from applying \cref{lm:pdl} to this new MDP.
\end{proof}

\subsection{Proof of \cref{pr:improvable_bullet}}

\VmaxImprovable*
\begin{proof}
	Let us recall the definition~\eqref{eq:k_s and pi_bullet} of $k_s$ and let us assume without loss of generality that $k_s = 1$. We observe that
	\begin{align*}
	\Amax(s,\pi^\bullet)
	&= r(s,\pi^\bullet) +  \E_{a\sim\pi^\bullet|s}\E_{s'\sim\PP|s,a}[\fmax(s')] - \fmax(s)\\
	&\geq r(s,\pi^\bullet) +  \E_{a\sim\pi^\bullet|s}\E_{s'\sim\PP|s,a}[V^{1}(s')] - V^1(s)
	= A^{V^1}(s,\pi^{1}) \geq 0
	\end{align*}
    where the last step follows since $\pi^\bullet(a|s) = \pi^{k_s}(a|s) = \pi^1(a|s)$ and the advantage of a policy with respect to its value function is always 0.
\end{proof}

\subsection{Proof of \cref{th:meta performance theroem}}
\MetaTheorem*
\begin{proof}
	By using \eqref{eq:regret equality} and the assumption on the first-order algorithm, we can write
	\begin{align*}
		\E\left[ \frac{1}{N}\sum_{n\in[N]} V^{\pi_n}(d_0) \right]
		&= \fmax(d_0) + \E \left[\Delta_N - \epsilon_N(\Pi) - \frac{\regret_N}{N} \right]\\
		&\geq \fmax(d_0) +  \E \left[\Delta_N - \epsilon_N(\Pi) \right] - O\left(\beta + \sqrt{\frac{\nu}{N}} \right)
	\end{align*}
	Finally, using $\frac{1}{N}\sum_{n\in[N]} V^{\pi_n}(d_0) \leq \max_{n\in[N]} V^{\pi_n}(d_0)$ and the definition of $\fmax$, we have the final statement.
\end{proof}

\subsection{Proof of \cref{pr:lambda weighed PDL}}

\LambdaWeightedPDL*

\begin{proof}

The proof is uses a new generalization of the Performance Difference Lemma (\cref{lm:pdl}), which we state generally for non-Markovian processes. A similar equality holds for the infinite-horizon discounted problems.

\begin{lemma}[Non-even performance difference lemma]
   \label{lm:noneven performance difference lemma}
   Let $\pi$ be a policy and let $f$ be any function that is history dependent such that $\E_{h_T\sim d_T^\pi}[f(h_T)]= 0$.
   Let $\tau_0, \tau_1, \tau_2, \dots \tau_I$ be monotonically increasing integers where $\tau_0=0$ and $\tau_I=T$.
   For any non-Markovian decision process, it holds that,
   \begin{align*}
   V^\pi(d_0) - f(d_0) = \sum_{k=0}^{I-1} \E_{h_{\tau_k} \sim d_{\tau_k}^\pi}[ A_{(i_k)}^{f,\pi} (h_{\tau_k}, \pi)]
   \end{align*}
   where $i_k = \tau_{k+1}-\tau_k-1$.
\end{lemma}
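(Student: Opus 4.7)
The claim is a telescoping identity that generalizes the standard Performance Difference Lemma (\cref{lm:pdl}) from equal chunks of length one to an arbitrary partition $0 = \tau_0 < \tau_1 < \cdots < \tau_I = T$ of the horizon. The plan is to unpack each $i_k$-step advantage at the start of its block, recognize that it contributes the expected rewards collected over $[\tau_k, \tau_{k+1}-1]$ together with a boundary difference of $f$, and then sum across all blocks so that the reward terms assemble into $V^\pi(d_0)$ while the $f$ boundary terms collapse telescopically.

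Concretely, I would first fix a single block $k$ and use the natural history-dependent extension of the $i$-step advantage (consistent with the non-Markovian setup) to write
\[
A_{(i_k)}^{f,\pi}(h_{\tau_k}, \pi) = \E_{\xi_{\tau_k} \sim \rho^\pi \mid h_{\tau_k}}\!\left[\sum_{t=\tau_k}^{\tau_{k+1}-1} r(s_t, a_t) + f(h_{\tau_{k+1}})\right] - f(h_{\tau_k}).
\]
Taking expectation over $h_{\tau_k} \sim d_{\tau_k}^\pi$ and applying the tower property in the filtration generated by the $\pi$-rollout, I would rewrite this block contribution as
\[
\E_{h_{\tau_k} \sim d_{\tau_k}^\pi}\!\left[A_{(i_k)}^{f,\pi}(h_{\tau_k}, \pi)\right] = \sum_{t=\tau_k}^{\tau_{k+1}-1} \E_{h_t \sim d_t^\pi,\, a_t \sim \pi}[r(s_t, a_t)] + f(d_{\tau_{k+1}}) - f(d_{\tau_k}),
\]
where I abuse notation by writing $f(d_t) \coloneqq \E_{h_t \sim d_t^\pi}[f(h_t)]$.

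Summing over $k = 0, \dots, I-1$ then completes the argument: the reward terms aggregate into $\sum_{t=0}^{T-1} \E[r(s_t, a_t)] = V^\pi(d_0)$ by definition of the value, while the boundary $f$ terms form a telescoping sum $f(d_{\tau_I}) - f(d_{\tau_0}) = f(d_T) - f(d_0)$, which equals $-f(d_0)$ by the hypothesis $\E_{h_T \sim d_T^\pi}[f(h_T)] = 0$. Rearranging yields the claimed identity.

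The only nontrivial point is the history-dependent bookkeeping: since $f$ may depend on the full past trajectory, the conditional expectation of $f(h_{\tau_{k+1}})$ given $h_{\tau_k}$ under $\pi$ must be correctly marginalized into an expectation under $d_{\tau_{k+1}}^\pi$ before the telescope is triggered. Once the tower property is invoked cleanly at each block boundary, the rest is a mechanical telescoping computation that mirrors the proof of \cref{lm:pdl}; I do not anticipate any substantive obstacle beyond this bookkeeping.
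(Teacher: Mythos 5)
Your argument is correct and is essentially the paper's own proof run in the opposite direction: the paper decomposes $V^\pi(d_0)$ into block-wise reward sums and writes $-f(d_0)$ as a telescoping sum over block boundaries, then recognizes each combined block term as $\E_{h_{\tau_k}\sim d_{\tau_k}^\pi}[A_{(i_k)}^{f,\pi}(h_{\tau_k},\pi)]$, whereas you expand each advantage term and reassemble. The tower-property bookkeeping you flag is exactly the (implicit) step the paper uses to pass from $\E_{h_{\tau_k}\sim d_{\tau_k}^\pi}\E_{\rho^\pi|h_{\tau_k}}[\cdot]$ to $\E_{h_t\sim d_t^\pi}[\cdot]$, so there is no gap.
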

\begin{proof}[Proof of \cref{lm:noneven performance difference lemma}]
   By definition,
	\[   V^\pi(d_0)
   = \sum_{t=0}^{T-1}\E_{h_t \sim d_t^\pi}[r(h_t,\pi)]
   = \sum_{k=0}^{I-1}\E_{h_{\tau_k} \sim d_{\tau_k}^\pi } \E_{\rho^\pi |h_{\tau_k}} \left[ \sum_{t=\tau_k}^{\tau_{k+1}-1} r(h_t, a_t)  \right]
   \]
	On the other hand, we can write
	$
   - f(d_0)
   = \sum_{k=1}^{I}f(d_{\tau_k}^\pi) - \sum_{k=0}^{I-1} f(d_{\tau_k}^\pi)
   $.
   Combing the two equalities shows the result.
\end{proof}

Now return to the Markovian case. Using \cref{lm:noneven performance difference lemma}, we derive a $\lambda$-weighted Performance Difference Lemma (\cref{pr:lambda weighed PDL}). A history dependent (discounted) version can be shown similarly.
To simplify writing, we let $\Theta = V^\pi(d_0) - f(d_0)$ and $A_{(i)} = A_{(i)}^{f,\pi}$ as shorthands, and we will omit the dependency on random variables in the expectation. Using \cref{lm:noneven performance difference lemma}, we can write
\begin{align*}
\Theta &= \sum_{t=0,1,\dots,T-1} \E_{d_t^\pi}\E_{\pi}[ A_{(0)} ]\\
2\Theta
&= \sum_{t=0,2,4\dots} \E_{d_t^\pi}\E_{\pi}[ A_{(1)} ]
+  \left( \E_{d_0}\E_{\pi}[ A_{(0)} ] + \sum_{t=1,3\dots} \E_{d_t^\pi}\E_{\pi}[ A_{(1)} ] \right) \\
&= \E_{d_0}\E_{\pi}[ A_{(0)} ] + \sum_{t=0}^{T-1} \E_{d_t^\pi}\E_{\pi}[ A_{(1)} ]\\
3\Theta
&= \sum_{t=0,3,6\dots} \E_{d_t^\pi}\E_{\pi}[ A_{(2)} ]
+ \left(\E_{d_0}\E_{\pi}[ A_{(0)} ] +  \sum_{t=1,4\dots} \E_{d_t^\pi}\E_{\pi}[ A_{(2)} ] \right) \\
&\quad +\left ( \E_{d_0}\E_{\pi}[ A_{(1)} ] + \sum_{t=2,5\dots} \E_{d_t^\pi}\E_{\pi}[ A_{(2)} ] \right) \\
&= \E_{d_0}\E_{\pi}[ A_{(0)} ] + \E_{d_0}\E_{\pi}[ A_{(1)} ] + \sum_{t=0}^{T-1} \E_{d_t^\pi}\E_{\pi}[ A_{(2)} ]\\
&\hspace{2mm}\vdots
\end{align*}

Applying a $\lambda$-weighted over these terms, we then have
\begin{align*}
(1-\lambda)(1+2\lambda+3\lambda^2+\dots)\Theta = T \E_{d^\pi} \E_{\pi}\left[ (1-\lambda) \sum_{i=0}^\infty \lambda^i A_{(i)}  \right] + \lambda \sum_{i=0}^\infty \lambda^i\E_{d_0} \E_{\pi}[ A_{(i)}]
\end{align*}
Because for $\lambda < 1$,
$
\lambda+2\lambda^2+3\lambda^3+\dots =  \frac{\lambda}{(1-\lambda)^2}
$, we have
\begin{align*}
(1-\lambda)(1+2\lambda+3\lambda^2+\dots) = \frac{1-\lambda}{\lambda}(\lambda+2\lambda^2+3\lambda^3+\dots) = \frac{1-\lambda}{\lambda} \frac{\lambda}{(1-\lambda)^2} = \frac{1}{1-\lambda}
\end{align*}
The above derivation implies that
\begin{align*}
\Theta
&= V^\pi(d_0) - f(d_0) = (1-\lambda)T \E_{d^\pi} \E_{\pi}\left[ (1-\lambda) \sum_{i=0}^\infty \lambda^i A_{(i)}  \right]  + \lambda (1-\lambda) \sum_{i=0}^\infty \lambda^i \E_{d_0} \E_{\pi} \left[  A_{(i)} \right]
\end{align*}

\end{proof}

\subsection{Proof of \cref{pr:lambda weighed online gradient}}
\LambdaWeightedOnlineGradient*

\begin{proof}

We first show the gradient expression in the second term in $h(\pi;\lambda)$.
\begin{lemma} \label{lm:gradient of lambda weighted THOR}
\begin{align*}
\nabla \E_{s \sim d_0} \left[  A_\lambda^{f,\pi}(s,\pi) \right]
= \sum_{t=0}^{T-1} \lambda^t \E_{s_t \sim d_t^\pi } \E_{a_t \sim \pi|s_t} \left[ \nabla \log\pi(a_t|s_t) A_\lambda^{f,\pi}(s_t,a_t)  \right]
\end{align*}
\end{lemma}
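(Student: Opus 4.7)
The plan is to reduce the gradient computation to a standard score-function (REINFORCE) derivation. First, by \cref{lm:lambda advantage (practical form)}, the one-step quantity $A(s,a) \coloneqq r(s,a) + \E_{s'\sim\PP|s,a}[f(s')] - f(s)$ satisfies $A_\lambda^{f,\pi}(s_t,a_t) = \E_{\xi_t\sim\rho^\pi|s_t,a_t}\!\bigl[\sum_{\tau=t}^{T-1}\lambda^{\tau-t} A(s_\tau,a_\tau)\bigr]$. Taking the outer expectation over $a_0\sim\pi|s_0$ and $s_0\sim d_0$ and using the tower property yields
\begin{align*}
\E_{s_0\sim d_0}\!\left[A_\lambda^{f,\pi}(s_0,\pi)\right] = \E_{\xi_0\sim\rho^\pi}\!\left[\sum_{t=0}^{T-1}\lambda^{t} A(s_t,a_t)\right].
\end{align*}
Crucially, $A(s,a)$ does not depend on $\pi$, so the only $\pi$-dependence on the right-hand side is through the trajectory measure $\rho^\pi$.

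Next I would apply the log-derivative trick to the trajectory density $d_0(s_0)\prod_{\tau=0}^{T-1}\pi(a_\tau|s_\tau)\PP(s_{\tau+1}|s_\tau,a_\tau)$, which gives
\begin{align*}
\nabla\E_{s_0\sim d_0}\!\left[A_\lambda^{f,\pi}(s_0,\pi)\right] = \sum_{t=0}^{T-1}\sum_{\tau=0}^{T-1}\lambda^{t}\,\E_{\rho^\pi}\!\left[A(s_t,a_t)\,\nabla\log\pi(a_\tau|s_\tau)\right].
\end{align*}
The standard causality argument (conditioning on the trajectory up to time $\tau-1$ and using $\E_{a_\tau\sim\pi|s_\tau}[\nabla\log\pi(a_\tau|s_\tau)]=0$) kills the cross-terms with $\tau > t$. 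Swapping the order of the surviving summation produces
\begin{align*}
\nabla\E_{s_0\sim d_0}\!\left[A_\lambda^{f,\pi}(s_0,\pi)\right] = \sum_{\tau=0}^{T-1}\lambda^{\tau}\,\E_{\rho^\pi}\!\left[\nabla\log\pi(a_\tau|s_\tau)\sum_{t=\tau}^{T-1}\lambda^{t-\tau} A(s_t,a_t)\right].
\end{align*}

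To finish, I would condition the inner expectation on $(s_\tau,a_\tau)$ and invoke \cref{lm:lambda advantage (practical form)} once more to recognize the conditional expectation of $\sum_{t\geq\tau}\lambda^{t-\tau} A(s_t,a_t)$ as $A_\lambda^{f,\pi}(s_\tau,a_\tau)$. Marginalizing back out gives exactly $\sum_{t=0}^{T-1}\lambda^{t}\,\E_{s_t\sim d_t^\pi}\E_{a_t\sim\pi|s_t}[\nabla\log\pi(a_t|s_t)\,A_\lambda^{f,\pi}(s_t,a_t)]$, as claimed.

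The main obstacle is the causality step: carefully showing that the cross-terms with $\tau > t$ vanish. This is routine for finite-horizon policy gradient arguments but needs a clean invocation of the Markov structure together with the zero-mean property of the score function conditional on the history strictly before time $\tau$. Everything else is bookkeeping: two applications of \cref{lm:lambda advantage (practical form)} bracketing a single interchange of summation orders.
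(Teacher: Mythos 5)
Your proof is correct, but it is organized around a different decomposition than the paper's. The paper differentiates each $i$-step advantage $A_{(i)}^{f,\pi}$ separately via a per-horizon policy-gradient identity, $\nabla \E_{s\sim d_0}[A_{(i)}^{f,\pi}(s,\pi)] = \sum_{t=0}^{i}\E[\nabla\log\pi(a_t|s_t)\,A_{(i-t)}^{f,\pi}(s_t,a_t)]$, and only then applies the geometric weighting and a sum interchange over $(i,t)$ to reassemble $A_\lambda^{f,\pi}$. You instead collapse $A_\lambda^{f,\pi}$ up front into a $\lambda$-discounted sum of one-step advantages via the telescoping identity of \cref{lm:lambda advantage (practical form)}, run a single whole-trajectory score-function computation on a $\pi$-independent integrand, kill the acausal cross-terms, and then re-collapse with the same identity. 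Both routes hinge on the same two ingredients --- a causality/zero-mean-score argument and one interchange of a geometric double sum --- but your version makes the causality step fully explicit and done exactly once, and avoids introducing the $Q_{(i)}^{f,\pi}$ machinery, at the cost of leaning on \cref{lm:lambda advantage (practical form)} twice. One small point worth stating if you write this up: that lemma is stated in the paper for the approximate baseline $\fmaxh$, but its proof is purely algebraic telescoping and holds verbatim for any baseline value function $f$ with $f(s_T)=0$, which is what your argument needs; you should also note that the interior sums are finite because rewards vanish beyond the horizon, so the interchange of summation and differentiation is unproblematic.
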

\begin{proof}[Proof of \cref{lm:gradient of lambda weighted THOR}]
Define $Q_{(i-t)}^{f,\pi}(s_t, a_t) \coloneqq \E_{\rho^\pi|s_t, a_t} \left[ \sum_{\tau=0}^{i-t}  r(s_{t+\tau}, a_{t+\tau}) + f(s_{i+1}) \right]$.
By using the definition of $i$-step advantage function $A_{(i)}^{f,\pi}$, we can first rewrite the desired derivative as
\begin{align*}
\nabla \E_{s \sim d_0} \left[ A_{(i)}^{f,\pi}(s,\pi) \right]
&= \nabla  \E_{s \sim d_0} \E_{\rho^\pi|s_0} \left[r(s_0, a_0) + r(s_1, a_1) + \dots + r(s_i, a_i) + f(s_{i+1}) \right] - f(s_t) \\
&= \sum_{t=0}^i \E_{s_t\sim d_t^\pi} \E_{a_t\sim\pi|s_t} \left[ \nabla\log\pi(a_t|s_t)  Q_{(i-t)}^{f,\pi}(s_t, a_t) \right]\\
&= \sum_{t=0}^i \E_{s_t\sim d_t^\pi} \E_{a_t\sim\pi|s_t} \left[ \nabla\log\pi(a_t|s_t)  A_{(i-t)}^{f,\pi}(s_t, a_t) \right]
\end{align*}
where in the last equality we use the fact $\nabla \E_{a\sim\pi|s}[f(s)] = 0$ for any $f:\SS\to\R$.
Therefore, we can write the $\lambda$-weighted version as follows:
\begin{align*}
\nabla \E_{s \sim d_0} \left[  A_\lambda^f(s,\pi) \right]
&=  (1-\lambda) \sum_{i=0}^\infty  \nabla \E_{s \sim d_0} \left[  \lambda^i  A_{(i)}^{f,\pi}(s,a)  \right]\\
&=  (1-\lambda) \sum_{i=0}^\infty  \sum_{t=0}^i \lambda^i \E_{s_t\sim d_t^\pi} \E_{a_t\sim\pi|s_t} \left[ \nabla\log\pi(a_t|s_t) A_{(i-t)}^{f,\pi}(s_t, a_t) \right] \\
&=  (1-\lambda) \sum_{t=0}^{T-1} \sum_{i=t}^\infty  \lambda^i  \E_{s_t\sim d_t^\pi} \E_{a_t\sim\pi|s_t} \left[ \nabla\log\pi(a_t|s_t)  A_{(i-t)}^{f,\pi}(s_t, a_t) \right] \\
&=  (1-\lambda) \sum_{t=0}^{T-1}  \lambda^t \sum_{j=0}^\infty  \lambda^j \E_{s_t\sim d_t^\pi} \E_{a_t\sim\pi|s_t} \left[ \nabla\log\pi(a_t|s_t) A_{(j)}^{f,\pi}(s_t, a_t) \right] \\
&=  \sum_{t=0}^{T-1}  \lambda^t \E_{s_t\sim d_t^\pi} \E_{a_t\sim\pi|s_t} \left[ \nabla\log\pi(a_t|s_t) A_\lambda^{f,\pi}(s_t, a_t) \right]
\end{align*}
\end{proof}

With this intermediate result, we can further derive the gradient expression in the first term in $\nabla h(\pi;\lambda)$ when $\mu=\pi$:
\begin{align*}
T \E_{s \sim d^\pi}  \left[ \nabla  A_\lambda^{f,\pi}(s,\pi) \right]
&= \sum_{t=0}^{T-1}  \E_{s \sim d_t^\pi }  \nabla\left[  A_\lambda^f(s,\pi) \right]\\
&= \sum_{t=0}^{T-1}  \sum_{\tau=t}^{T-1} \lambda^{\tau-t} \E_{s_\tau \sim d_\tau^\pi} \E_{a_\tau \sim\pi|s_\tau} \left[ \nabla\log\pi(a_\tau|s_\tau) A_\lambda^{f,\pi}(s_\tau, a_\tau) \right] \\
&= \sum_{\tau=0}^{T-1}  \E_{s_\tau \sim d_\tau^\pi} \E_{a_\tau \sim\pi|s_\tau} \left[ \nabla\log\pi(a_\tau|s_\tau) A_\lambda^{f,\pi}(s_\tau, a_\tau) \right]  \left( \sum_{t=0}^{\tau}  \lambda^{\tau-t} \right)\\
&= \sum_{t=0}^{T-1}   \frac{1-\lambda^{t+1}}{1-\lambda} \E_{s_t \sim d_t^\pi} \E_{a_t \sim\pi|s_t} \left[ \nabla\log\pi(a_t|s_t) A_\lambda^{f,\pi}(s_t, a_t) \right]
\end{align*}
Finally, combining the two equalities, we arrive at a very clean expression:
\begin{align*}
	\nabla h(\pi;\lambda) |_{\mu=\pi}
&= (1-\lambda) T \E_{s \sim d^{\pi}}  \left[ \nabla  A_\lambda^f(s,\pi) \right]  + \lambda  \E_{s \sim d_0} \left[ \nabla  A_\lambda^f(s,\pi) \right]   \\
&= \sum_{t=0}^{T-1}  \E_{s_t \sim d_t^\pi} \E_{a_t \sim\pi|s_t} \left[ \nabla\log\pi(a_t|s_t) A_\lambda^{f,\pi}(s_t, a_t) \right]  \left(  (1-\lambda)  \frac{1-\lambda^{t+1}}{1-\lambda}+ \lambda \cdot \lambda^t \right)\\
&= \sum_{t=0}^{T-1}  \E_{s_t \sim d_t^\pi} \E_{a_t \sim\pi|s_t} \left[ \nabla\log\pi(a_t|s_t) A_\lambda^{f,\pi}(s_t, a_t) \right]
\end{align*}
\end{proof}

\subsection{Proof of \cref{lm:lambda advantage (practical form)}}

\LambdaAdvantagePracticalFrom*
\begin{proof}
This equality can be derived as follows:
\begin{align*}
	&\Ah_\lambda^\pi(s_t,a_t)\\
	&= (1-\lambda) \E_{\xi_t \sim \rho^\pi|s_t} \left[ \sum_{i=0}^\infty \lambda^i \left( \sum_{\tau=t}^{t+i} r(s_\tau, a_\tau) +\fmaxh(s_{t+i+1}) \right) \right]  -\fmaxh(s_t) \\
	&= \E_{\xi_t \sim \rho^\pi|s_t} \left[  \sum_{\tau=t}^{T-1} r(s_\tau, a_\tau) \left(   (1-\lambda)  \sum_{i=\tau-t}^\infty \lambda^i \right) +  (1-\lambda) \sum_{i=0}^\infty \lambda^i\fmaxh(s_{t+i+1}) \right]  -\fmaxh(s_t) \\
	&= \E_{\xi_t \sim \rho^\pi|s_t} \left[  \sum_{\tau=t}^{T-1} \lambda^{\tau-t} r(s_\tau, a_\tau)  + (1-\lambda) \sum_{\tau=t}^{T-1} \lambda^{\tau-t}\fmaxh(s_{\tau+1}) \right]  -\fmaxh(s_t) \\
	&= \E_{\xi_t \sim \rho^\pi|s_t} \left[  \sum_{\tau=t}^{T-1} \lambda^{\tau-t} r(s_\tau, a_\tau)  +\sum_{\tau=t}^{T-1} \lambda^{\tau-t}\fmaxh(s_{\tau+1}) - \sum_{\tau=t+1}^{T-1} \lambda^{\tau-t}\fmaxh(s_{\tau})  \right]  -\fmaxh(s_t) \\
	&= \E_{\xi_t \sim \rho^\pi|s_t} \left[  \sum_{\tau=t}^{T-1} \lambda^{\tau-t} \left( r(s_\tau, a_\tau) + \fmaxh(s_{\tau+1}) - \fmaxh(s_{\tau}) \right) \right] \\
	&= \E_{\xi_t \sim \rho^\pi|s_t} \left[  \sum_{\tau=t}^{T-1} \lambda^{\tau-t} \Ah(s_\tau, a_\tau) \right] \\
\end{align*}
\end{proof}

\subsection{Proof of \cref{th:mamba performance}} \label{app:proof of th2}
\MambaPerformance*

\begin{proof}


	The proof \cref{th:mamba performance} is based on the non-trivial technical lemma of this general $\lambda$-weighted advantage, which we recall below.
	\LambdaWeightedPDL*


	To prove the theroem, we can then write down an equality like \eqref{eq:regret equality} by the equality $V^{\pi_n}(d_0) - \fmax(d_0) = \ell_n(\pi_n;\lambda)$ we just obtained:
	\begin{align*}
		\frac{1}{N}\sum_{n\in[N]} V^{\pi_n}(d_0) = \fmax(d_0) + \Delta_N - \epsilon_N(\Pi) - \frac{\regret_N}{N}
	\end{align*}
	where $\regret_N$, $\Delta_N $, and $\epsilon_N(\Pi)$ are now defined with respect to the $\lambda$-weighted online loss $\ell_n(\pi;\lambda)$.
	Therefore, running a no-regret algorithm with respect to the approximate gradient \eqref{eq:lambda weighted gradient estimate} of this online loss function $\ell_n(\pi;\lambda)$ would imply a similar performance guarantee shown in \cref{th:meta performance theroem} (see the proof \cref{th:meta performance theroem}).

	Finally, to justify the use of \eqref{eq:lambda weighted gradient estimate}, what remains to be shown is that the term $\Delta_N-\epsilon_N(\Pi)$ behaves similarly as before.
	First we notice that, because $\pimax$ may not be the best policy for the multi-step advantage in the online loss $\ell_n(\pi;\lambda)$, $\epsilon_N(\Pi)$ now be negative (which is in our favor).
	Next, we show that $\Delta_N\geq 0$ is true by \cref{pr:improvement for the lambda weighted loss} (whose proof is given below).
	\begin{restatable}{proposition}{ImprovementFromLambdaLoss} 	\label{pr:improvement for the lambda weighted loss}
		It holds $-\ell_n(\pimax;\lambda)\geq0$.
	\end{restatable}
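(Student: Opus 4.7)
}
The plan is to unpack the definition of $\ell_n(\pi^{\max};\lambda)$ and reduce the claim to showing that the $\lambda$-weighted advantage $A^{\max,\pi^{\max}}_\lambda(s,\pi^{\max}) \geq 0$ pointwise in $s$. Since $A^{\max,\pi^{\max}}_\lambda$ is a convex combination (with weights $(1-\lambda)\lambda^i$) of the $i$-step advantages $A^{\max,\pi^{\max}}_{(i)}$, it suffices to prove $A^{\max,\pi^{\max}}_{(i)}(s,\pi^{\max}) \geq 0$ for every $i\geq 0$ and every state $s$; once this is done, the two expectations making up $-\ell_n(\pi^{\max};\lambda)$ are non-negative and their positive combination is too.

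First I would handle the base case $i=0$. By definition $A^{\max,\pi}_{(0)} = A^{\max}$, and by the definition of $\pi^{\max}$ in \eqref{eq:policy improving from f} we have
\begin{equation*}
A^{\max}(s,\pi^{\max}) \;=\; \max_{a\in\AA} A^{\max}(s,a) \;\geq\; A^{\max}(s,\pi^{\bullet}) \;\geq\; 0,
\end{equation*}
where the last inequality is Proposition \ref{pr:improvable_bullet}. This takes care of $i=0$.

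Next, I would establish a one-step recursion for the $i$-step advantages. Peeling off the first reward from the definition of $A^{\max,\pi}_{(i)}(s_t,a_t)$ and identifying the remaining trajectory expectation as an $(i-1)$-step advantage starting from $s_{t+1}$ yields
\begin{equation*}
A^{\max,\pi}_{(i)}(s,a) \;=\; A^{\max}(s,a) \;+\; \E_{s'\sim\PP|s,a}\!\left[A^{\max,\pi}_{(i-1)}(s',\pi)\right].
\end{equation*}
Specializing to $\pi=\pi^{\max}$ and taking the action expectation under $\pi^{\max}$ gives
\begin{equation*}
A^{\max,\pi^{\max}}_{(i)}(s,\pi^{\max}) \;=\; A^{\max}(s,\pi^{\max}) \;+\; \E_{a\sim\pi^{\max}|s}\E_{s'\sim\PP|s,a}\!\left[A^{\max,\pi^{\max}}_{(i-1)}(s',\pi^{\max})\right].
\end{equation*}
An induction on $i$ then finishes the proof: both terms on the right are non-negative, the first by the base case and the second by the inductive hypothesis applied pointwise in $s'$. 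Summing over $i$ with weights $(1-\lambda)\lambda^i$ gives $A^{\max,\pi^{\max}}_\lambda(s,\pi^{\max})\geq 0$, so
\begin{equation*}
-\ell_n(\pi^{\max};\lambda) \;=\; (1-\lambda)T\,\E_{s\sim d^{\pi_n}}\!\left[A^{\max,\pi^{\max}}_\lambda(s,\pi^{\max})\right] + \lambda\,\E_{s\sim d_0}\!\left[A^{\max,\pi^{\max}}_\lambda(s,\pi^{\max})\right] \;\geq\; 0.
\end{equation*}

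I do not expect any serious obstacle here; the only thing to be careful about is the passage from the trajectory-based definition of $A^{\max,\pi}_{(i)}$ to the clean one-step Bellman-style recursion, and the fact that the argmax property of $\pi^{\max}$ is used only in the base case (not at every level of the induction), so that the recursion plus Proposition \ref{pr:improvable_bullet} carry all the weight.
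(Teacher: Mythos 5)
Your proposal is correct and follows essentially the same route as the paper: both reduce the claim to the key lemma that $\Amax_{(i)}(s,\pimax)\geq 0$ for all $i$ and then conclude by taking the non-negative $\lambda$-weighted combination. The only cosmetic difference is that you establish the lemma by a forward induction on $i$ via a one-step Bellman recursion with the base case $\Amax(s,\pimax)\geq \Amax(s,\pi^\bullet)\geq 0$, whereas the paper unrolls the same inequality $Q^{\fmax}(s,\pimax)\geq \fmax(s)$ backward from the last step using the explicit $\max_k V^k$ structure; the mathematical content is identical.
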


	These results conclude the proof of \cref{th:mamba performance}.

\end{proof}

\subsubsection{Proof of \cref{pr:improvement for the lambda weighted loss}}
\ImprovementFromLambdaLoss*
 \begin{proof}

	We first prove a helpful lemma.
	\begin{lemma} \label{eq:helpful lemma for proving improvement}
		For $\pimax$, it holds that $ \Amax_{(i)}(s,\pimax) \geq 0$.
	\end{lemma}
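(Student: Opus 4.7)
My plan is to prove the lemma by induction on $i \geq 0$, peeling one step off the $i$-step advantage at each inductive stage and invoking Proposition~\ref{pr:improvable_bullet} at the base case.

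\textbf{Base case ($i=0$).} By construction, $\Amax_{(0)}(s,a) = r(s,a) + \E_{s'\sim\PP|s,a}[\fmax(s')] - \fmax(s) = \Amax(s,a)$. Since $\pimax$ is defined in \eqref{eq:policy improving from f} as the greedy policy w.r.t.\ $\Amax$, we have $\Amax_{(0)}(s,\pimax) = \max_a \Amax(s,a) \geq \Amax(s,\pi^\bullet) \geq 0$, where the final inequality is Proposition~\ref{pr:improvable_bullet}.

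\textbf{Inductive step.} Assume $\Amax_{(j)}(s,\pimax) \geq 0$ for every $s \in \SS$ and every $j < i$. The key identity to establish is the one-step recursion
\begin{align*}
\Amax_{(i)}(s_t,a_t) = \Amax(s_t,a_t) + \E_{s_{t+1}\sim\PP|s_t,a_t}\bigl[\Amax_{(i-1)}(s_{t+1},\pimax)\bigr],
\end{align*}
which follows by splitting the defining expectation after the first transition, adding and subtracting $\fmax(s_{t+1})$, and recognising that the remaining $(i-1)$-step expectation under $\rho^{\pimax}\!\mid\! s_{t+1}$ samples the next action from $\pimax$. Taking the expectation over $a_t \sim \pimax(\cdot\mid s_t)$ yields
\begin{align*}
\Amax_{(i)}(s_t,\pimax) = \Amax(s_t,\pimax) + \E_{a_t\sim\pimax,\, s_{t+1}}\bigl[\Amax_{(i-1)}(s_{t+1},\pimax)\bigr].
\end{align*}
Both summands are non-negative: the first by the base-case argument applied at $s_t$, the second by the inductive hypothesis applied state-wise. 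This closes the induction.

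I expect the only real obstacle to be notational bookkeeping: the symbol $\Amaxpi_{(i)}(s,a)$ conditions the trajectory distribution on both $s_t = s$ and $a_t = a$ despite the superscript $\pi$, so the recursion needs to be justified carefully by conditioning on $s_{t+1}$ and observing that continuing with $\pimax$ from $s_{t+1}$ draws the next action from $\pimax(\cdot\mid s_{t+1})$. Once this unrolling is done cleanly, the induction is immediate and no new machinery beyond Proposition~\ref{pr:improvable_bullet} and the greedy definition of $\pimax$ is required.
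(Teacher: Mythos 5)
Your proof is correct and rests on the same core fact as the paper's: the one-step advantage $\Amax(s,\pimax)\geq \Amax(s,\pi^\bullet)\geq 0$, applied at every step along the $\pimax$ trajectory. The paper unrolls the $i$-step advantage backward from the last step (showing $Q^{\max}(s_i,\pimax)\geq \fmax(s_i)$ and recursing toward $s_0$), whereas you peel off the first step and induct forward while citing Proposition~\ref{pr:improvable_bullet} for the base case rather than re-deriving it inline; this is the same decomposition read in the opposite direction.
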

	\begin{proof}[Proof of \cref{eq:helpful lemma for proving improvement}]
		Without loss of generality, take $s=s_0$.
		First we arrange
		\begin{align*}
		\Amax_{(i)}(s_0,\pimax)
		&= \E_{\rho^{\pimax}|s_0} \left[r(s_0, a_0) + r(s_1, a_1) + \dots + r(s_i, a_i) +  V^{k_{s_{i+1}}}(s_{i+1}) \right] - V^{k_{s_0}}(s_0)\\
		&=  \E_{\rho^{\pimax}|s_0} \left[r(s_0, a_0) + r(s_1, a_1) + \dots + Q^{\max}(s_i, \pimax) \right] - V^{k_{s_0}}(s_0)
		\end{align*}
		where we have the inequality
		\begin{align*}
		Q^{\max}(s_i, \pimax)
		&\coloneqq \E_{a_i\sim\pimax|s_i}[ r(s_i, a_i) + \E_{s_{i+1}\sim\PP|s_i, a_i}[ V^{k_{s_{i+1}}}(s_{i+1}) ]] \\
		&\geq \E_{a_i\sim \pi^{k_{s_i}}|s_i}[ r(s_i, a_i) + \E_{s_{i+1}\sim\PP|s_i, a_i}[ V^{k_{s_{i+1}}}(s_{i+1}) ]]\\
		&\geq \E_{a_i\sim \pi^{k_{s_i}}|s_i}[ r(s_i, a_i) + \E_{s_{i+1}\sim\PP|s_i, a_i}[ V^{k_{s_{i}}}(s_{i+1}) ]] \\
		&=  V^{k_{s_i}}(s_i)
		\end{align*}
		By applying this inequality recursively, we get
		\begin{align*}
		\Amax_{(i)}(s_0,\pimax)  \geq
		V^{k_{s_0}}(s_0)-  V^{k_{s_0}}(s_0) \geq  0
		\end{align*}
	\end{proof}
	The lemma above implies $\Amax_\lambda(s,\pimax) \geq 0$ for $\lambda\geq0$ and therefore we have
	\begin{align*}
	-l_n(\pimax;\lambda) =(1-\lambda) T \E_{s \sim d^{\pi_n}}\left[ \Amax_\lambda(s,\pimax) \right]  + \lambda \E_{s \sim d_0} \left[ \Amax_\lambda(s,\pimax) \right]  \geq 0
	\end{align*}
\end{proof}

\section{Experiment Details and Additional Results} \label{sec:exp details}

In this section we describe the details of \alg and additional experimental results.

\subsection{Implementation Details of \alg}




\def\ValueUpdate{\text{MonteCarloRegression}}
\def\SampleSwitchTime{\text{SampleSwitchTime}}
\def\UpdateWhitening{\text{UpdateInputWhitening}}
\def\UpdatePolicy{\text{MirrorDescent}}
\def\tavg{t_{\text{avg}}}
\def\RIRO{\text{RIRO}}



\begin{algorithm*}[h]
	{\small
		\caption{Implementation details of \alg for IL with multiple oracles }
		\begin{algorithmic} [1] \label{alg:mamba details}
			\renewcommand{\algorithmicensure}{\textbf{Input:}}
			\renewcommand{\algorithmicrequire}{\textbf{Output:}}
			\ENSURE Initial learner policy $\pi_1$, oracle polices $\{\pi^k\}_{k\in[K]}$, function approximators $\{\what{V}^k\}_{k\in[K]}$.
			\REQUIRE The best policy in $\{\pi_1, \dots, \pi_N\}$.

			\STATE For each  $k\in[K]$, collect data $\DD^k$ by rolling out $\pi^k$ for the full problem horizon.
			\STATE Update value models $\what{V}^k = \ValueUpdate(\DD^k)$ for $k\in[K]$.
			\STATE$\pi_1 = \UpdateWhitening(\pi_1,\bigcup \DD^k )$

			\FOR {$n = 1\dots N-1$}

			\STATE Sample a trajectory using $\pi_n$ to collect data $\DD_n'$.

			\STATE Let $\ts = \SampleSwitchTime(\tavg) \in [T-1]$, where $\tavg$ is the average trajectory length of $\DD_{1:n-1}'$ (for the first iteration set $\tavg=0$). Sample a RIRO trajectory using $\pi_n$ and then $\pi^k$ after $t\geq\ts$ to collect data $\DD_{\RIRO}$, where $k$ is uniformly sampled in $[K]$.
			If the sampled trajectory in $\DD_{\RIRO}$ is longer than $\ts$, aggregate the trajectory after $\ts$ in $\DD_{\RIRO}$ into $\DD^k$ with importance weight $\frac{1}{Tp(\ts)}$. Otherwise, aggregate $\DD_{\RIRO}$ into $\DD_n'$.
			\STATE Update value model $\what{V}^k = \ValueUpdate(\DD^k)$.
			\STATE Let $\pi_n' = \UpdateWhitening(\pi_n, \DD_n')$ and compute the sampled gradient based on $\DD_n'$ with one-step importance sampling as
			\begin{align*}
				g_n  &= -\sum_{t=0}^{T-1} \nabla \log\pi_n'(a_t|s_t) \frac{\pi_n'(a_t|s_t)}{\pi_n(a_t|s_t)}\left(\sum_{\tau=t}^{T-1} \lambda^{\tau-t} \left(r(s_\tau,a_\tau) + \fmaxh(s_{\tau+1}) - \fmaxh(s_\tau) \right) \right)
			\end{align*}
			where the recursion starts with $\what{V}^k(s_T)=0$ and $\fmaxh(s) = \max_{k\in[K]} \what{V}^k(s)$.
			\STATE Update the policy $\pi_{n+1} = \UpdatePolicy(\pi_n', g_n)$.

			\ENDFOR
		\end{algorithmic}
	}
\end{algorithm*}

We provide the details of \alg in \cref{alg:mamba} as \cref{alg:mamba details}, which closely follows \cref{alg:mamba} with a few minor, practical modifications which we describe below:
\begin{itemize}
	\item The $\UpdateWhitening$ function keeps a moving average of the first and the second moment of the states it has seen, which is used to provide whitened states (by subtracting the estimated mean and dividing by the estimated standard deviation) as the input to the neural network policy.

	\item In \cref{alg:mamba details}, $\ts = \SampleSwitchTime(\tavg)$  samples $\ts$ based on a geometric distribution whose mean is $\tavg$, because in the learner might not always be able to finish the full $T$ steps. The trajectory data are therefore suitably weighted by an importance weight  $\frac{1}{Tp(\ts)}$ to correct for this change from using the uniform distribution in sampling for $\ts$.
\end{itemize}
Apart from these two changes, \cref{alg:mamba details} follows the pseudo-code in \cref{alg:mamba}.

We provide additional details of the experimental setups below.
\begin{itemize}
	\item Time is appended as a feature in addition to the raw state, i.e. $s = (t,\bar{s})$.
	\item The policy is a Gassian with mean modeled by a $(128,128)$ FCNN (fully connected neural network), and the standard deviation is diagonal, learnable and independent of the state. The value function is estimated by a $(256,256)$ FCNN. In these FCNNs, the activation functions are $tanh$ except the last layer is linear. The policy and the value networks are parameterized independently.

	\item $\ValueUpdate$ performs weighted least-squared regression by first whitening the input and then performing 100 (CartPole, DoubleInvertedPendulum) or 800 (HalfCheetah, Ant) steps of ADAM with a batchsize of 128 samples and step size 0.001.
	The target is constructed by Monte-Carlo estimate and one-step importance sampling, if necessary.

	\item $\UpdatePolicy$ is set to be either ADAM~\citep{kingma2014adam} or Natural Gradient Descent (NGD)~\citep{amari1998natural}. We adopt the default hyperparamters of ADAM ($\beta_1=0.9$ and $\beta_2=0.99$) and a stepsize $0.001$. For NGD, we adopt the  ADAM-style adaptive implementation described in \citep[Appendix C.1.4]{cheng2019predictor} using $\beta_2=0.99$ and a stepsize of $0.1$.

	\item $\DD^k$ is limited to data from the past 100 (CartPole, DoubleInvertedPendulum) or 2 (HalfCheetah, Ant) iterations. Policy gradient always keeps 2 iterations of data.

	\item All \alg, \pg, and \aggd follow the protocol in \cref{alg:mamba details}. In the pre-training phase (lines 1-3), each oracle policy (or the learner policy for \pg) would perform 16 (CartPole, DoubleInvertedPendulum) or 512 (HalfCheetah, Ant) rollouts to collect the initial batch of data to train its value function estimator. For HalfCheetah and Ant, these data are also used to pretrain the learner policies by behavior cloning\footnote{We found that without behavior cloning \aggd would not learn to attain the oracle's performance, potentially due to high dimensionality. Therefore, we chose to perform behavior cloning with the best oralce policy as the initialization for all IL algorithms in comparison here.}
	In every iteration, each algorithm would perform 8 (CartPole, DoubleInvertedPendulum) or 256 (HalfCheetah, Ant) rollouts: For \alg and \aggd, half rollouts are used to collect data for updating the value functions (line 6) and half rollouts (line 5) are for computing the gradient. For \pg, all  rollouts are performed by the learner; they are just used to compute the gradient and then update the value function (so there is no correlation).

	\item Additional 8 (CartPole, DoubleInvertedPendulum) or 256 (HalfCheetah, Ant) rollouts are performed to evaluate the policy's performance, which generate the plots presented in the paper.

	\item All environments have continuous states and actions. CartPole and DoubleInvertedPendulum have a problem horizon of 1000 steps. HalfCheetah and Ant have a problem horizon of 500 steps.
	For CartPole, the dimensions of the state and the action spaces are 4 and 1, respectively. For DoubleInvertedPendulum,  the dimensions of the state and the action spaces are 8 and 1, respectively.
	For HalfCheetah, the dimensions of the state and the action spaces 17 and 6, respectively. Finally, for Ant, the dimensions of the state and the action spaces 111 and 8, respectively.
\end{itemize}

\paragraph{Computing Infrastructure}
The CartPole and DoubleInvertedPendulum experiments were conducted using Azure Standard F64s\_v2,
which was based on the Intel Xeon Platinum 8168 processor with 64 cores (base clock 3.4 GHz; turbo clock 3.7 GHz) and 128 GB memory. The HalfCheetah and Ant experiments were conducted using Azure Standard HB120rs\_v2, which was based on the AMD EPYC 7002-series processor with 120 cores (base clock 2.45 GHz; turbo clock 3.1 GHz) and 480 GB memory.
No GPU was used.
The operating system was Ubuntu 18.04.4 LTS. The prototype codes were based on Python and Tensorflow 2.
Using a single process in the setup above, a single seed of the CartPole experiment (100 iterations) took about 27 minutes to 45 minutes to finish, whereas a single seed of the DoubleInvertedPendulum experiment (200 iterations) took about 110 minutes to 125 minutes to finish. For HalfCheetah, 8 cores were used for each seed (1000 iterations) and the experiments took about 19.5 to 30.7 hours. For Ant, 8 cores were used for each seed (400 iterations) and the experiments took about 8 to 12 hours.

\paragraph{Hyperparameter Selection}
For CartPole and DoubleInvertedPendulum, we only performed a rough search of the step sizes of ADAM ($0.01$ vs $0.001$) and Natural Gradient Descent ($0.1$ vs $0.01$). We conducted experiments with different $\lambda$ and number of oracles in order to study their effects on \alg. The main paper presents the results of the pilot study: we first investigated the effect of $\lambda$ by comparing \alg with \aggd and concluded with a choice of $\lambda=0.9$; using this $\lambda$ value, we then studied the effects of the number of oracles. For completeness, we present and discuss results of all the hyperparamters for CartPole and DoubleInvertedPendulum below.
For HalfCheetah, we did a hyperparamter search over the size of replay buffer and the optimization steps for value function fitting such that \aggd can achieve the oracle-level performance. Once that is chose, we apply the same hyperparamters to other algorithms.
For Ant, we simply apply the same hyperparamters used in HalfCheetah.

\paragraph{Oracle Performance}
For Cartpole and DoubleInvertedPendulum, please see the detailed discussion in the next section. For HalfCheetah, the four oracles used in the experiments have scores of 1725.80, 1530.85, 1447.84, and 1395.36, respectively. For Ant, the four oracles used in the experiments have scores of 1050.57, 1038.03, 883.18, 775.70, respectively.

\subsection{Additional Experimental Results of Hyperparamter Effects}

\begin{figure}
	\centering
	\hspace{-15mm}
	\begin{subfigure}{0.5\textwidth}
		\centering
		\includegraphics[width=0.8\textwidth]{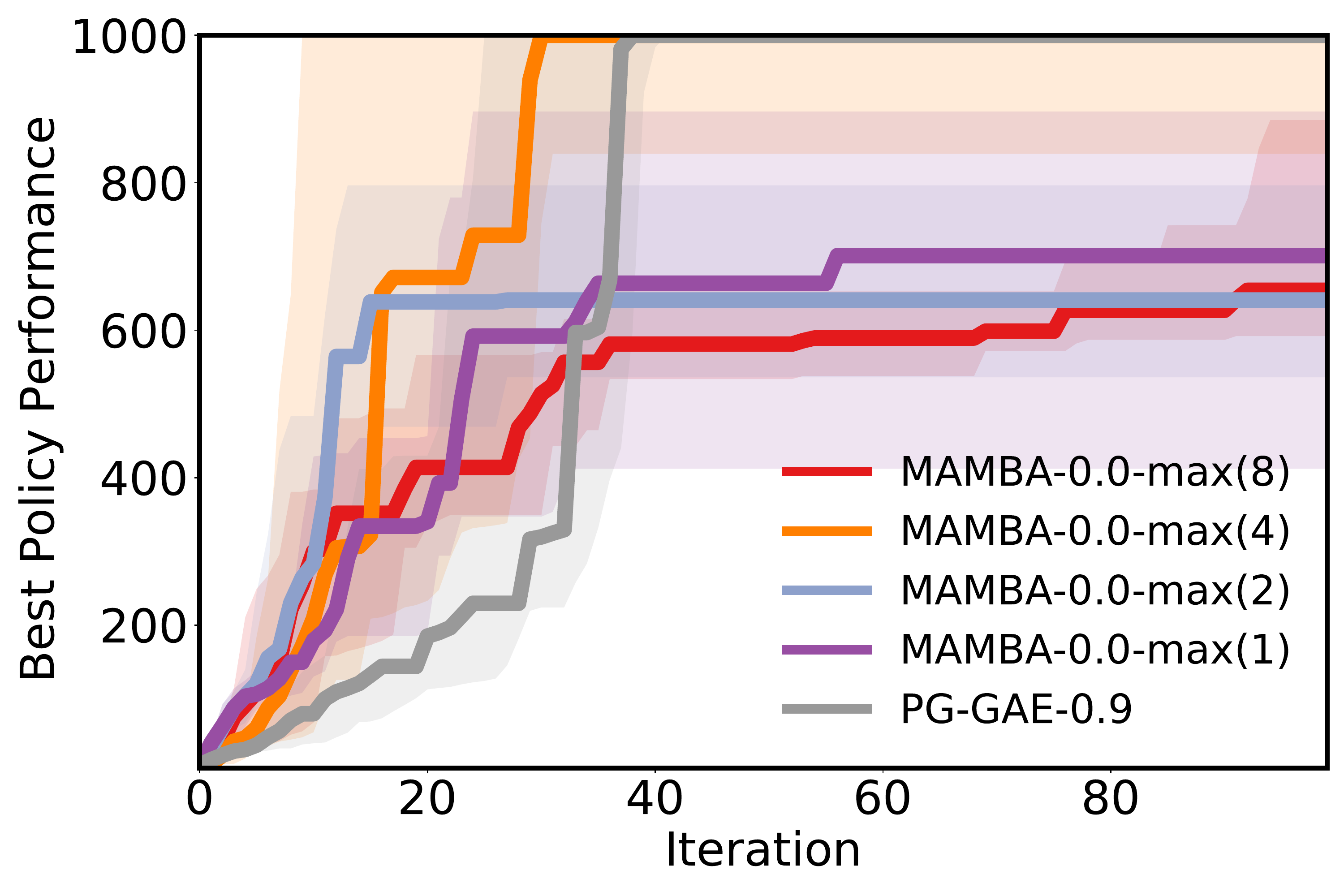}
		\caption{\small $\lambda=0$}
	\end{subfigure}
	\hspace{-10mm}
	\begin{subfigure}{0.5\textwidth}
		\centering
		\includegraphics[width=0.8\textwidth]{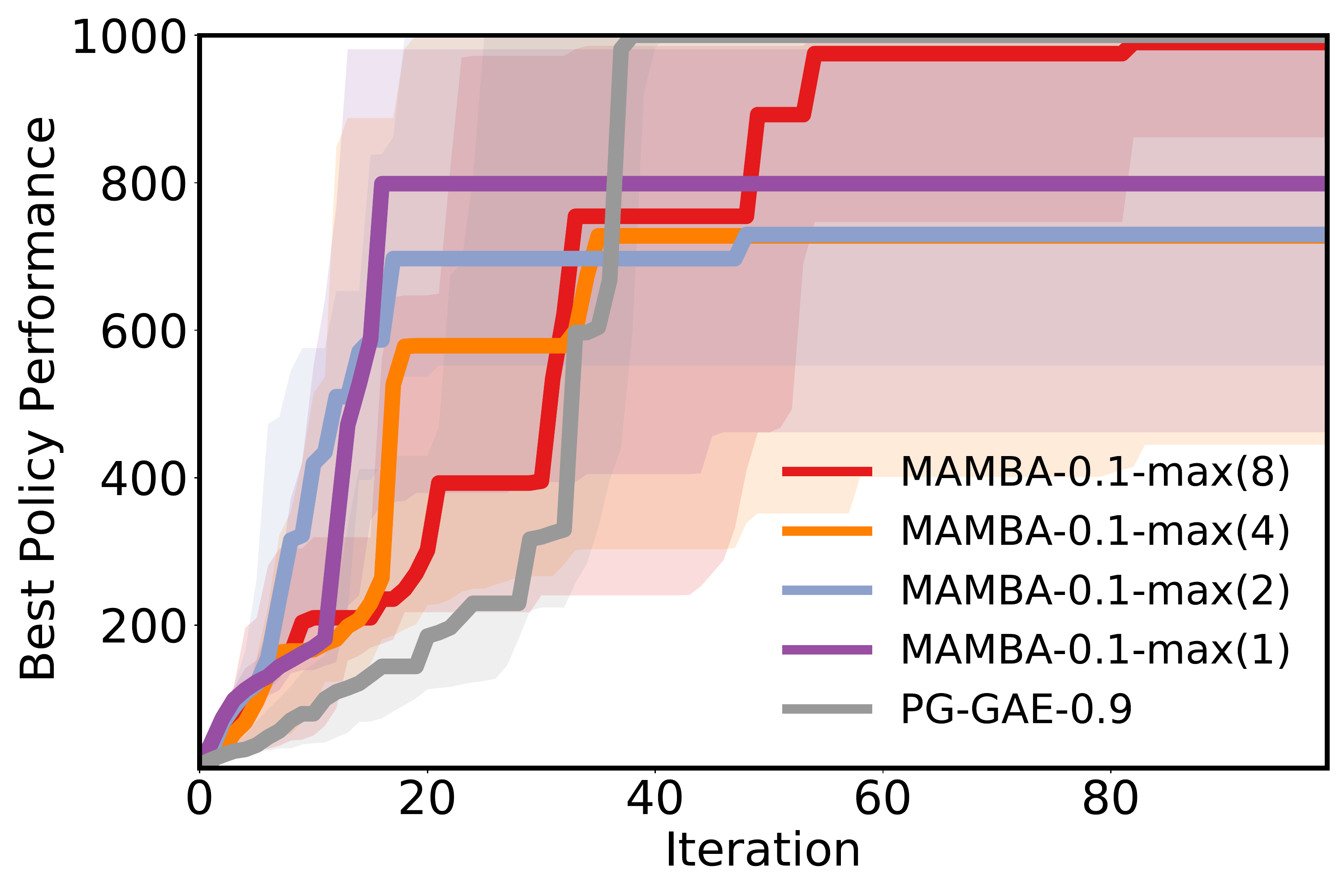}
		\caption{\small $\lambda=0.1$}
	\end{subfigure}
	\hspace{-15mm}
	\\
	\hspace{-15mm}
	\begin{subfigure}{0.5\textwidth}
		\centering
		\includegraphics[width=0.8\textwidth]{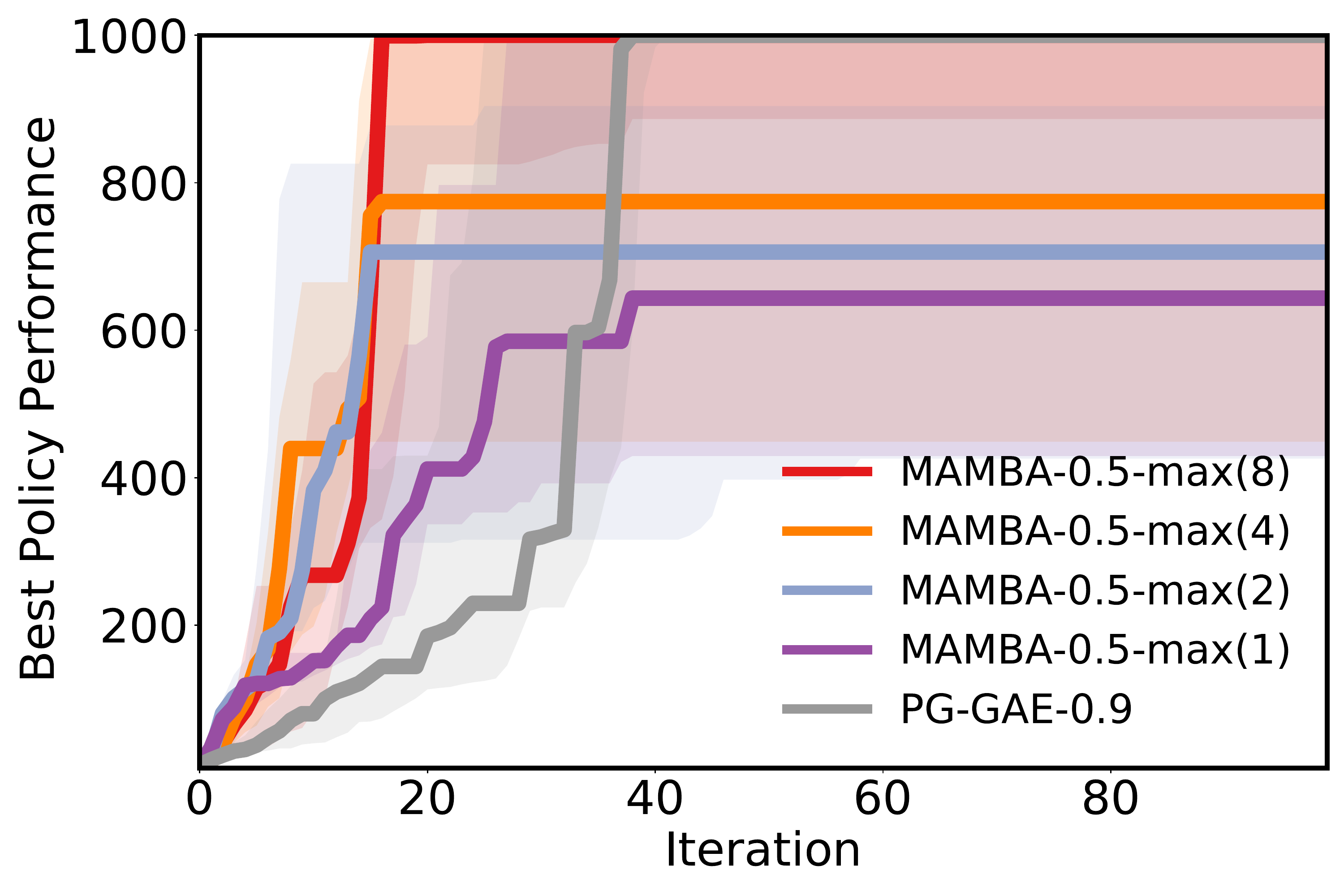}
		\caption{\small $\lambda=0.5$}
	\end{subfigure}
	\hspace{-10mm}
	\begin{subfigure}{0.5\textwidth}
		\centering
		\includegraphics[width=0.8\textwidth]{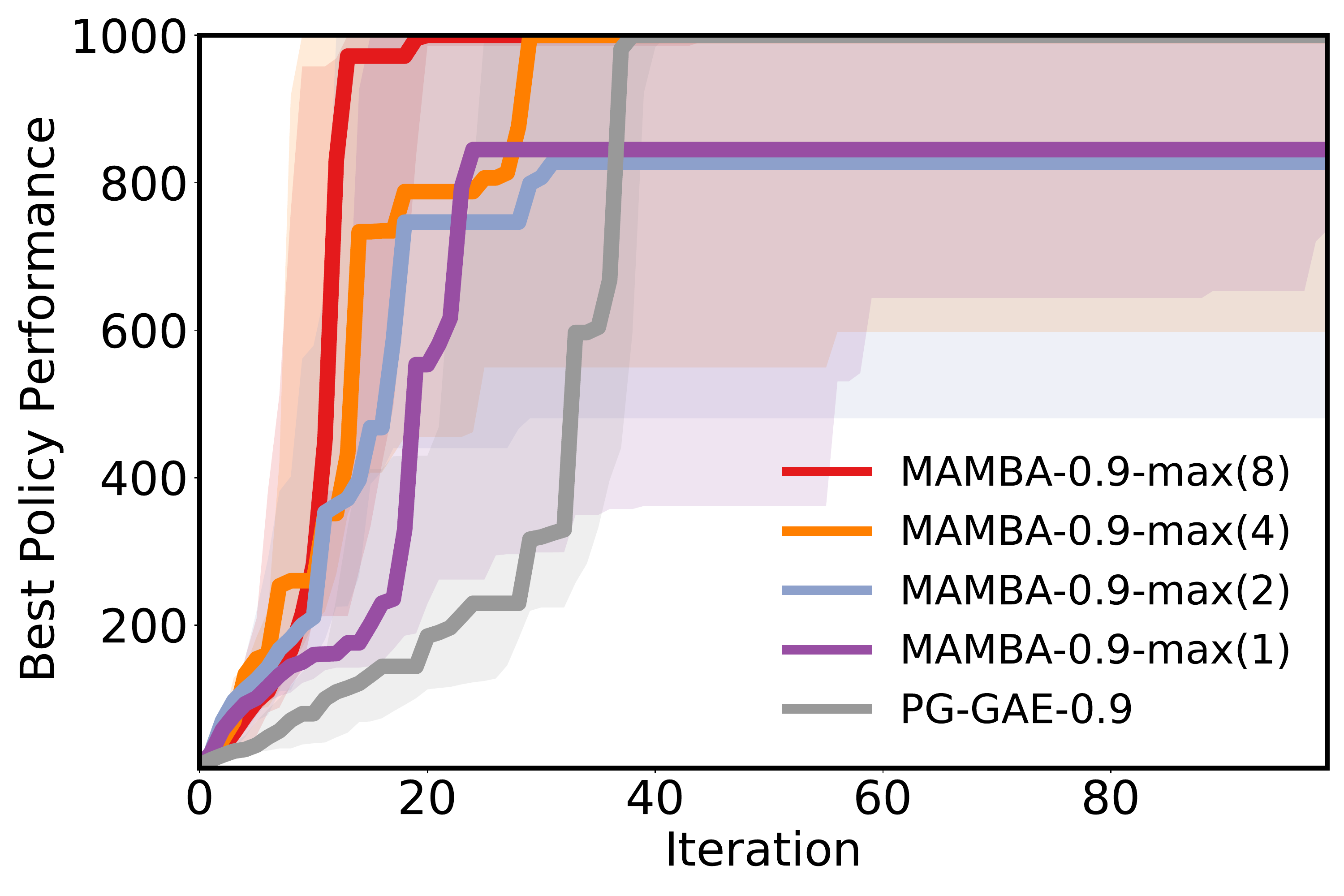}
		\caption{\small $\lambda=0.9$}
	\end{subfigure}
	\hspace{-15mm}
	\\
	\hspace{-25mm}
	\begin{subfigure}{0.5\textwidth}
		\centering
		\includegraphics[width=0.8\textwidth]{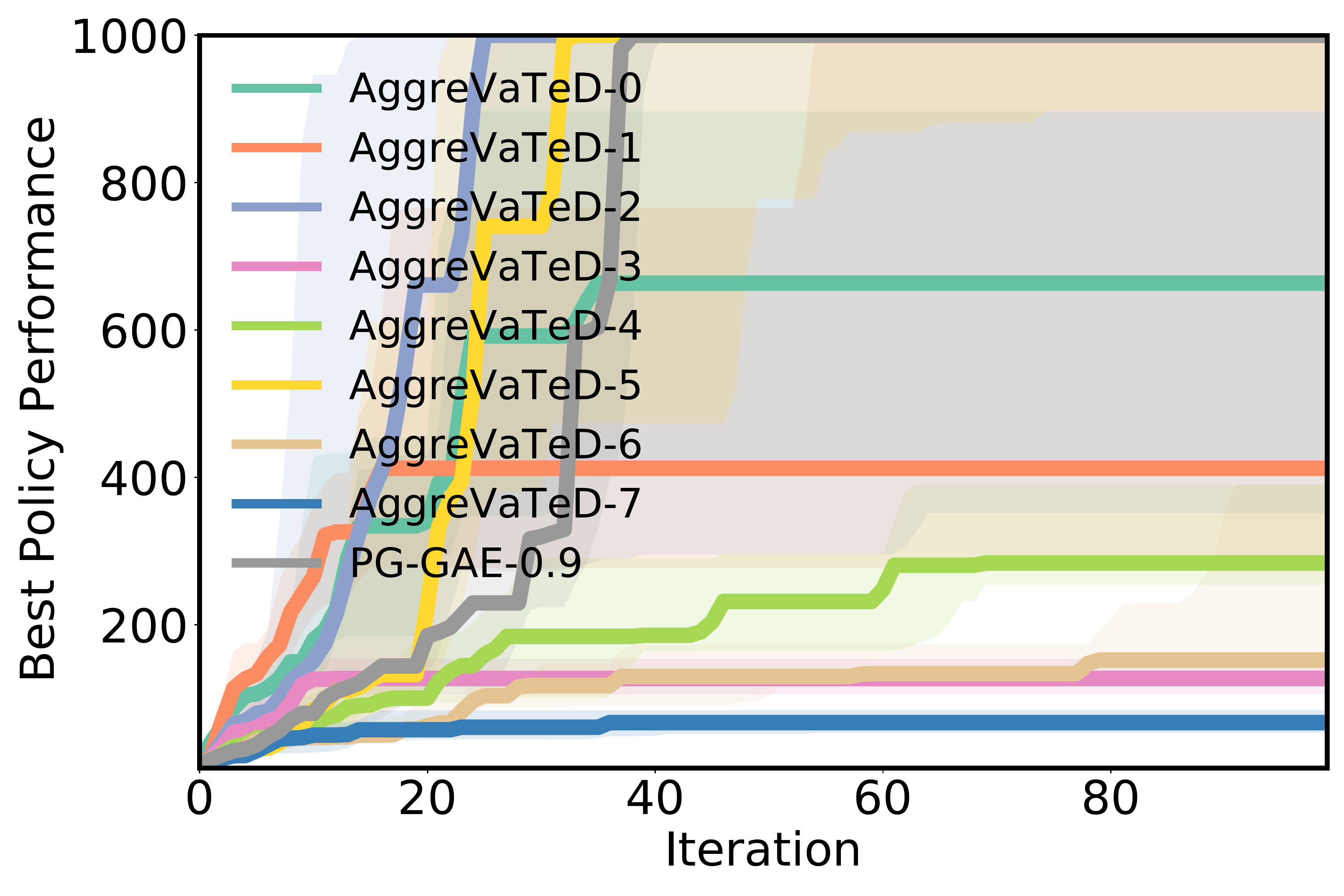}
		\caption{\small \aggd}
	\end{subfigure}
	\hspace{-10mm}
	\begin{subfigure}{0.5\textwidth}
		\footnotesize
		\begin{center}
			\begin{tabular}{c|c }
			  &  Return \\[0.5ex]
			  \hline
			 oracle0 & 89.04\\
			 oracle1 & 61.73 \\
			 oracle2 & 34.78 \\
			 oracle3 & 26.34 \\
			 oracle4 & 18.19 \\
			 oracle5 & 16.36 \\
			 oracle6 & 10.38 \\
			 oracle7 & 9.78
			\end{tabular}
			\end{center}
			\caption{Oracle Performance}
	\end{subfigure}
	\hspace{-10mm}
	\caption{\small
			Performance of the best policies in CarlPole with Bad Oracles.
			(a)-(d) \alg with $\lambda=0, 0.1, 0.5, 0.9$ (e) \aggd with different oracles. (f) The return of each oracle policy.
			A curve shows an algorithm's median performance across 8 random seeds. The center and right figures use the same line colors for all methods.
			The shaded area shows 25th and 75th percentiles.
	}
	\label{fig:CartPole with bad oracles}
\end{figure}

\begin{figure}
	\centering
	\hspace{-15mm}
	\begin{subfigure}{0.5\textwidth}
		\centering
		\includegraphics[width=0.8\textwidth]{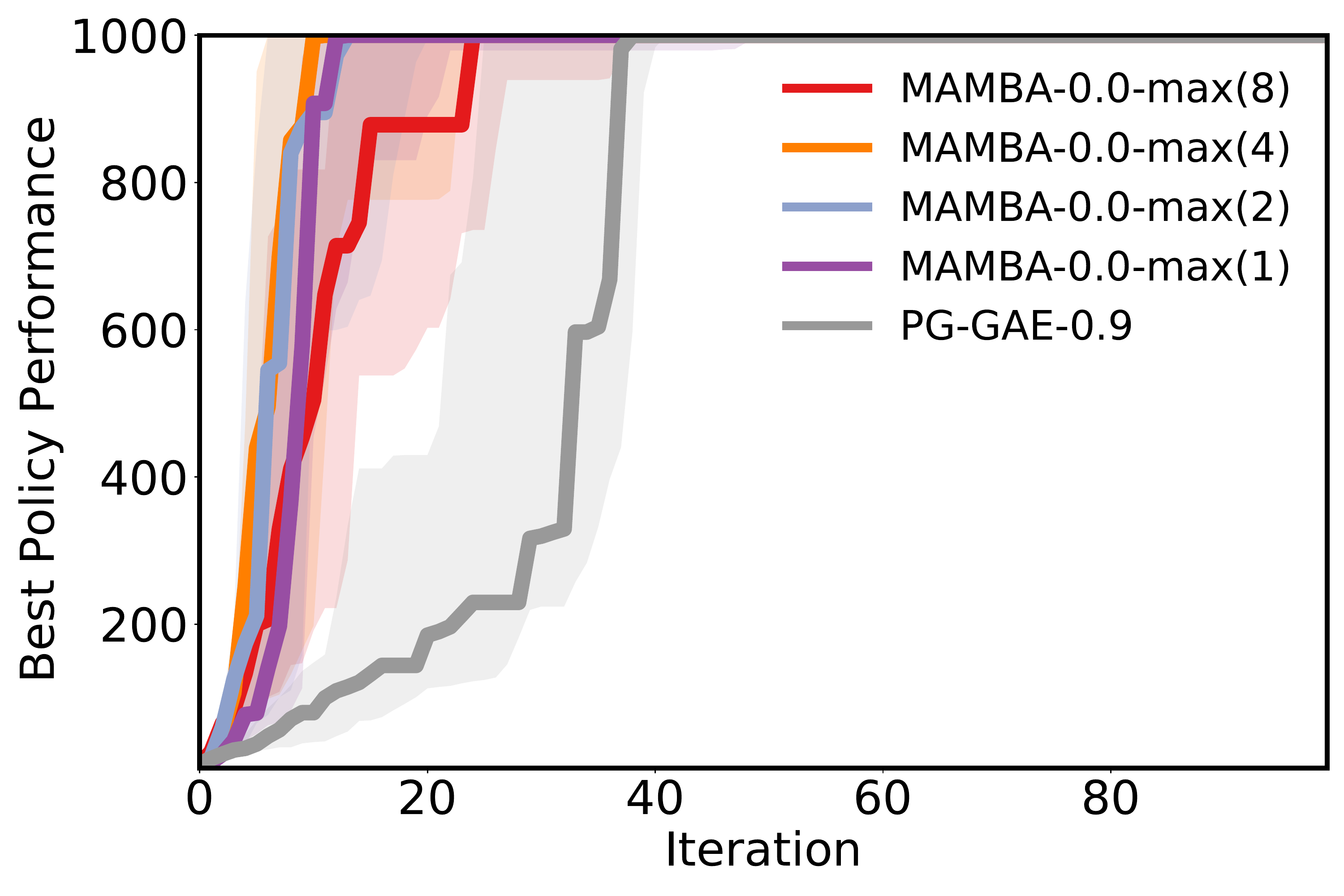}
		\caption{\small $\lambda=0$}
		\label{fig:oracles effects (cp)}
	\end{subfigure}
	\hspace{-10mm}
	\begin{subfigure}{0.5\textwidth}
		\centering
		\includegraphics[width=0.8\textwidth]{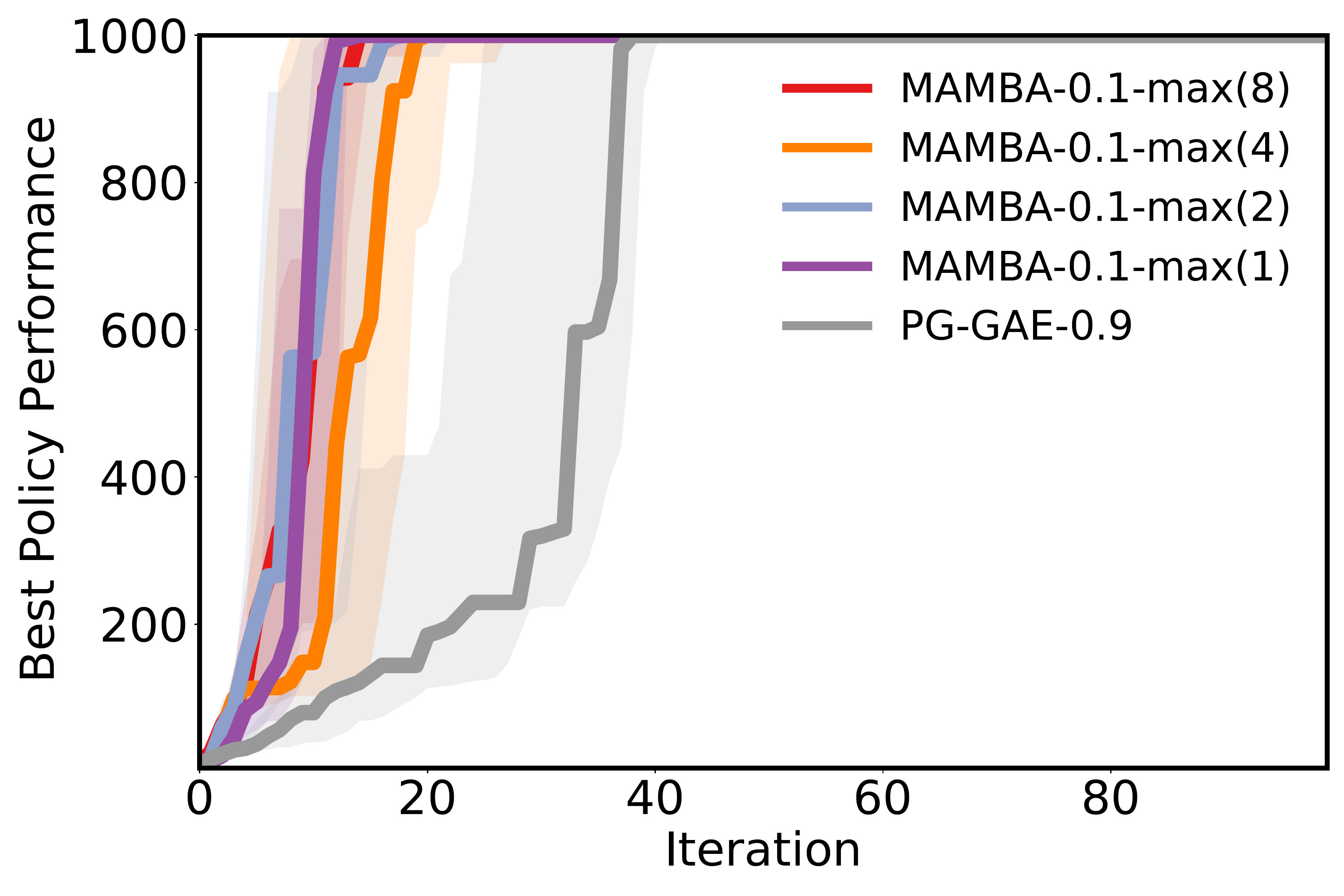}
		\caption{\small $\lambda=0.1$}
	\end{subfigure}
	\hspace{-15mm}
	\\
	\hspace{-15mm}
	\begin{subfigure}{0.5\textwidth}
		\centering
		\includegraphics[width=0.8\textwidth]{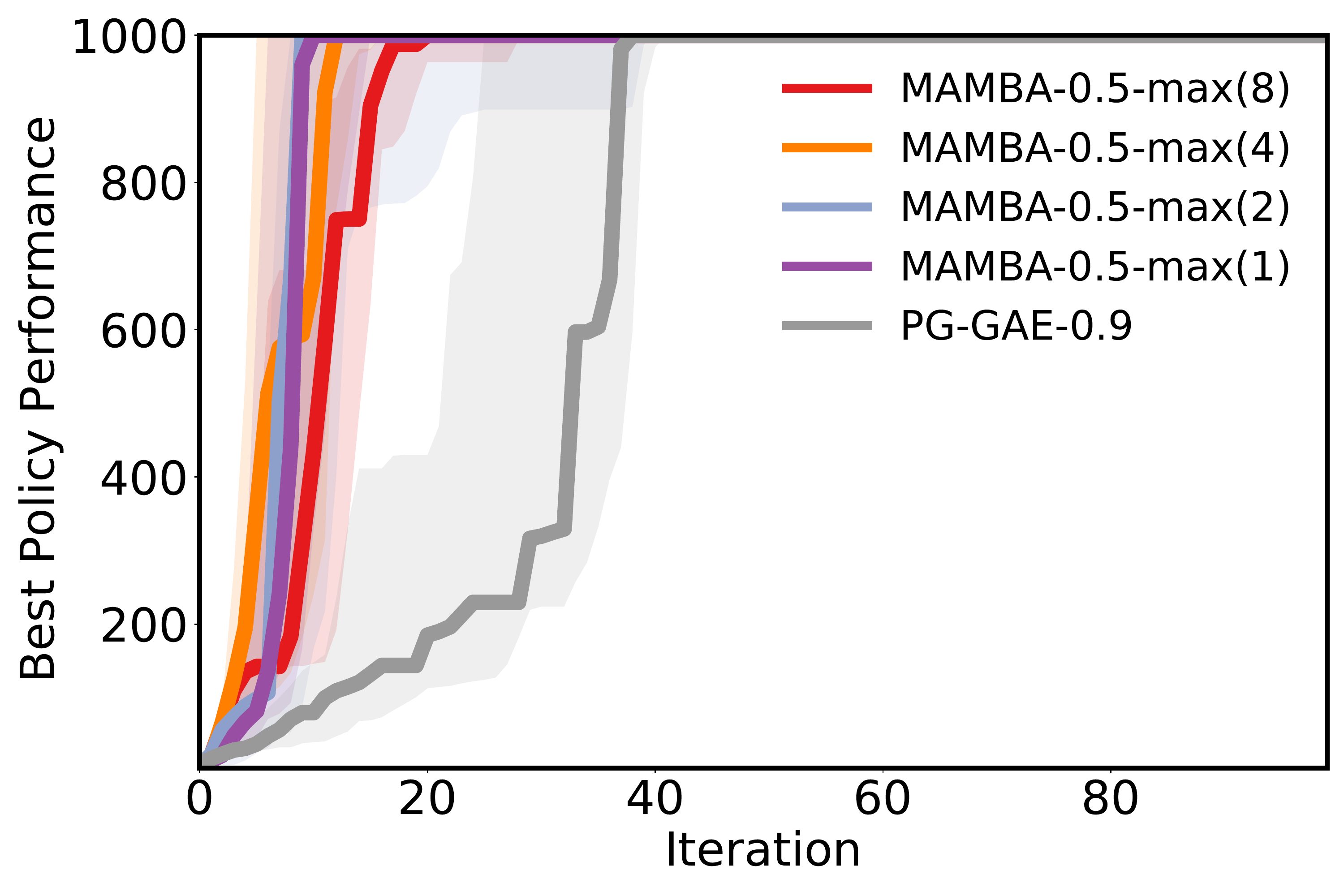}
		\caption{\small $\lambda=0.5$}
		\label{fig:lambda effects}
	\end{subfigure}
	\hspace{-10mm}
	\begin{subfigure}{0.5\textwidth}
		\centering
		\includegraphics[width=0.8\textwidth]{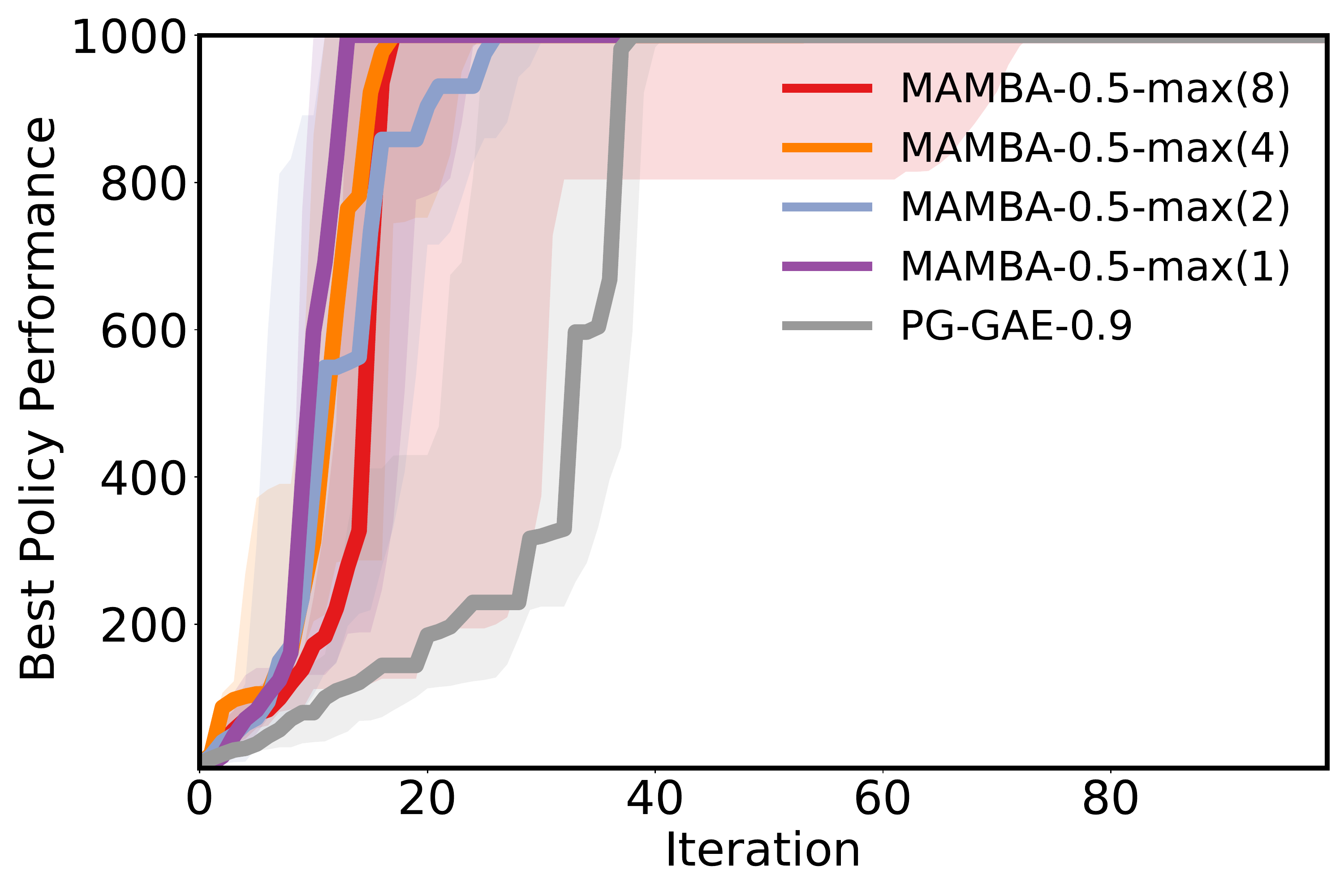}
		\caption{\small $\lambda=0.9$}
	\end{subfigure}
	\hspace{-15mm}
	\\
	\hspace{-25mm}
	\begin{subfigure}{0.5\textwidth}
		\centering
		\includegraphics[width=0.8\textwidth]{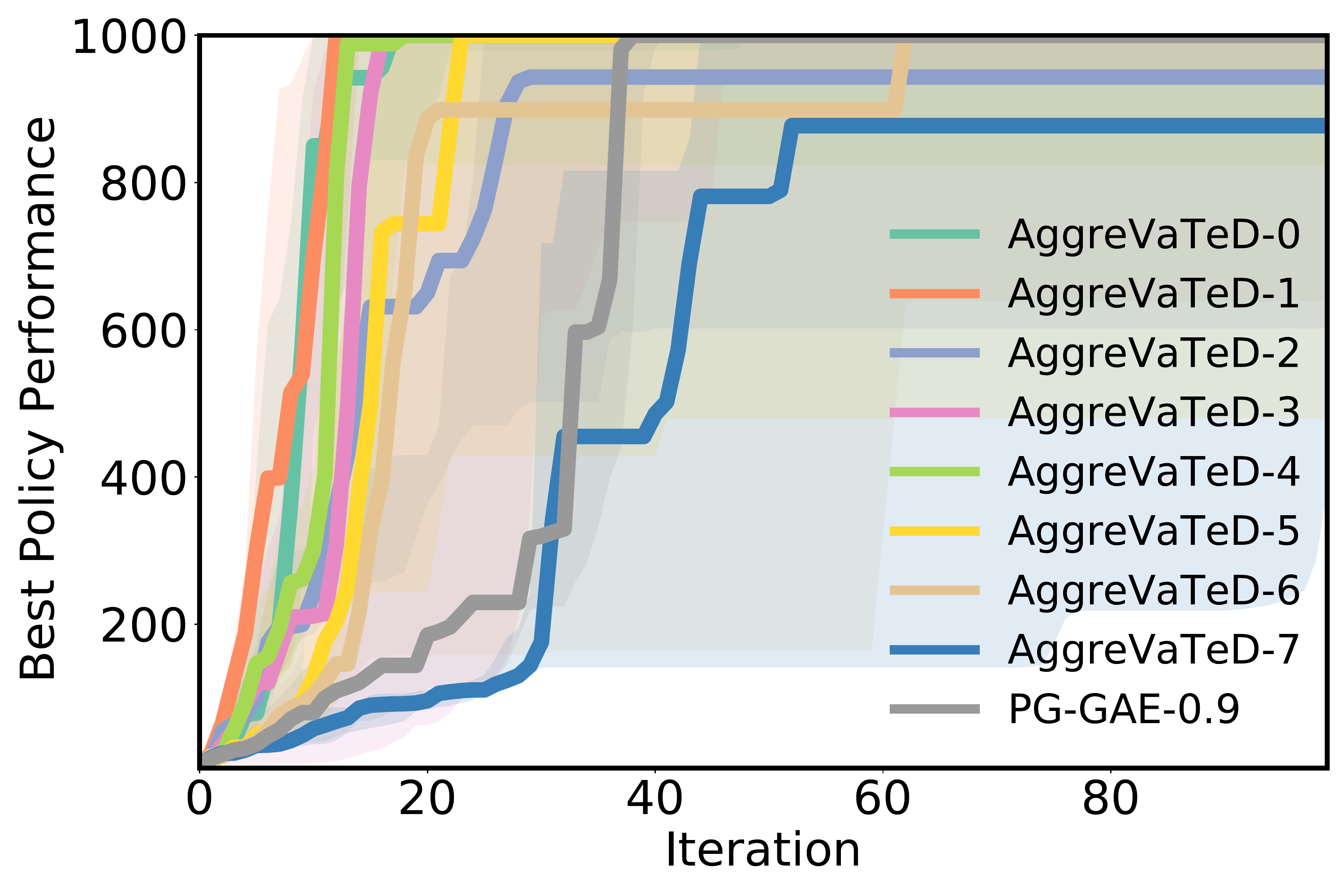}
		\caption{\small \aggd}
		\label{fig:lambda effects}
	\end{subfigure}
	\hspace{-10mm}
	\begin{subfigure}{0.5\textwidth}
		\footnotesize
		\begin{center}
			\begin{tabular}{c|c }
			  &  Return \\[0.5ex]
			  \hline
			 oracle0 & 411.40 \\
			 oracle1 & 226.89 \\
			 oracle2 & 102.80 \\
			 oracle3 & 82.24 \\
			 oracle4 & 70.10 \\
			 oracle5 & 23.14 \\
			 oracle6 & 23.10 \\
			 oracle7 & 21.68
			\end{tabular}
			\end{center}
			\caption{Oracle Performance}
	\end{subfigure}
	\hspace{-10mm}
	\caption{\small
			Performance of the best policies in CarlPole with Mediocre Oracles.
			(a)-(d) \alg with $\lambda=0, 0.1, 0.5, 0.9$ (e) \aggd with different oracles. (f) The return of each oracle policy.
			A curve shows an algorithm's median performance across 8 random seeds. The center and right figures use the same line colors for all methods.
			The shaded area shows 25th and 75th percentiles.
	}
	\label{fig:CartPole with mediocore oracles}
\end{figure}

\begin{figure}
	\centering
	\hspace{-15mm}
	\begin{subfigure}{0.5\textwidth}
		\centering
		\includegraphics[width=0.8\textwidth]{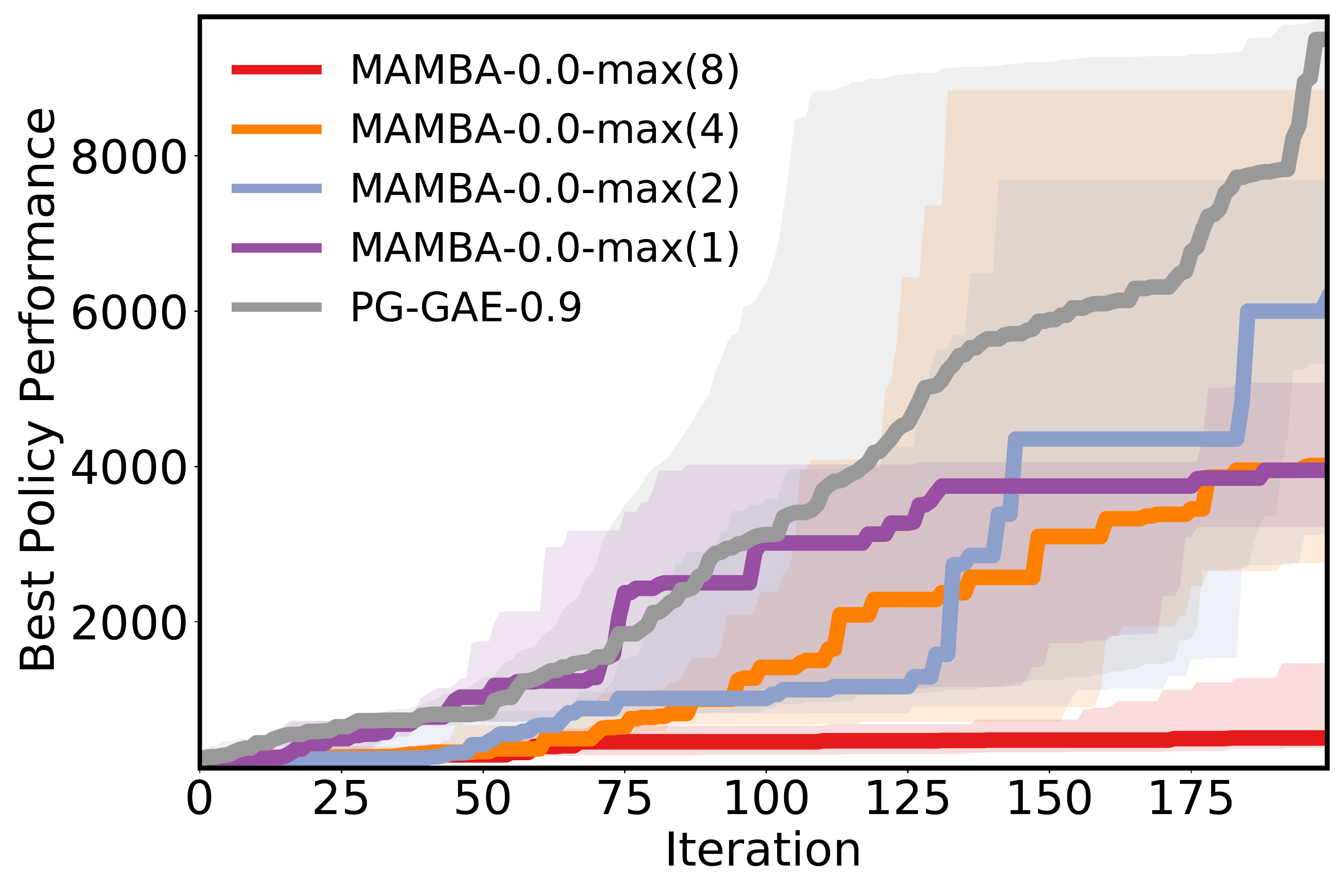}
		\caption{\small $\lambda=0$}
		\label{fig:oracles effects (cp)}
	\end{subfigure}
	\hspace{-10mm}
	\begin{subfigure}{0.5\textwidth}
		\centering
		\includegraphics[width=0.8\textwidth]{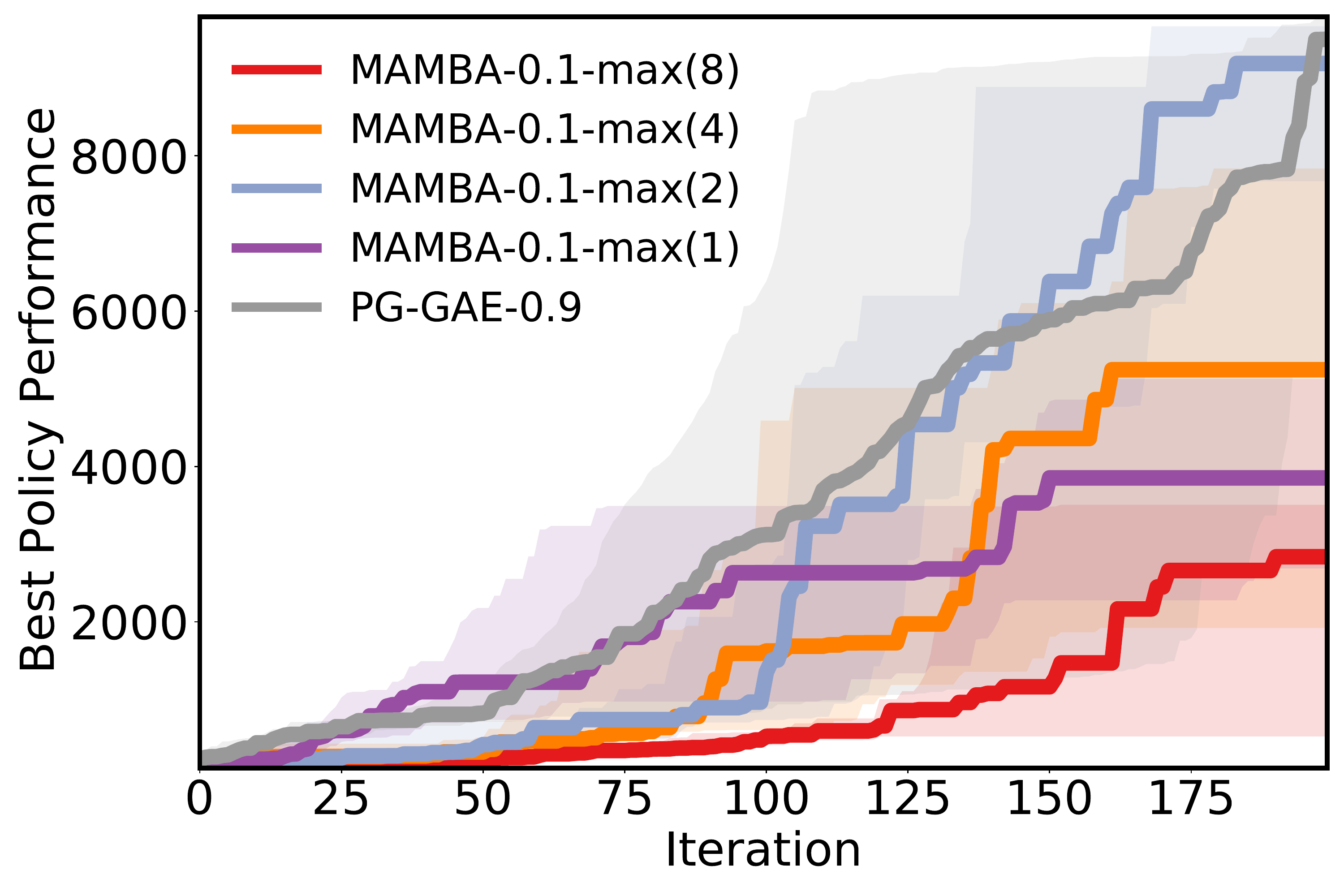}
		\caption{\small $\lambda=0.1$}
	\end{subfigure}
	\hspace{-15mm}
	\\
	\hspace{-15mm}
	\begin{subfigure}{0.5\textwidth}
		\centering
		\includegraphics[width=0.8\textwidth]{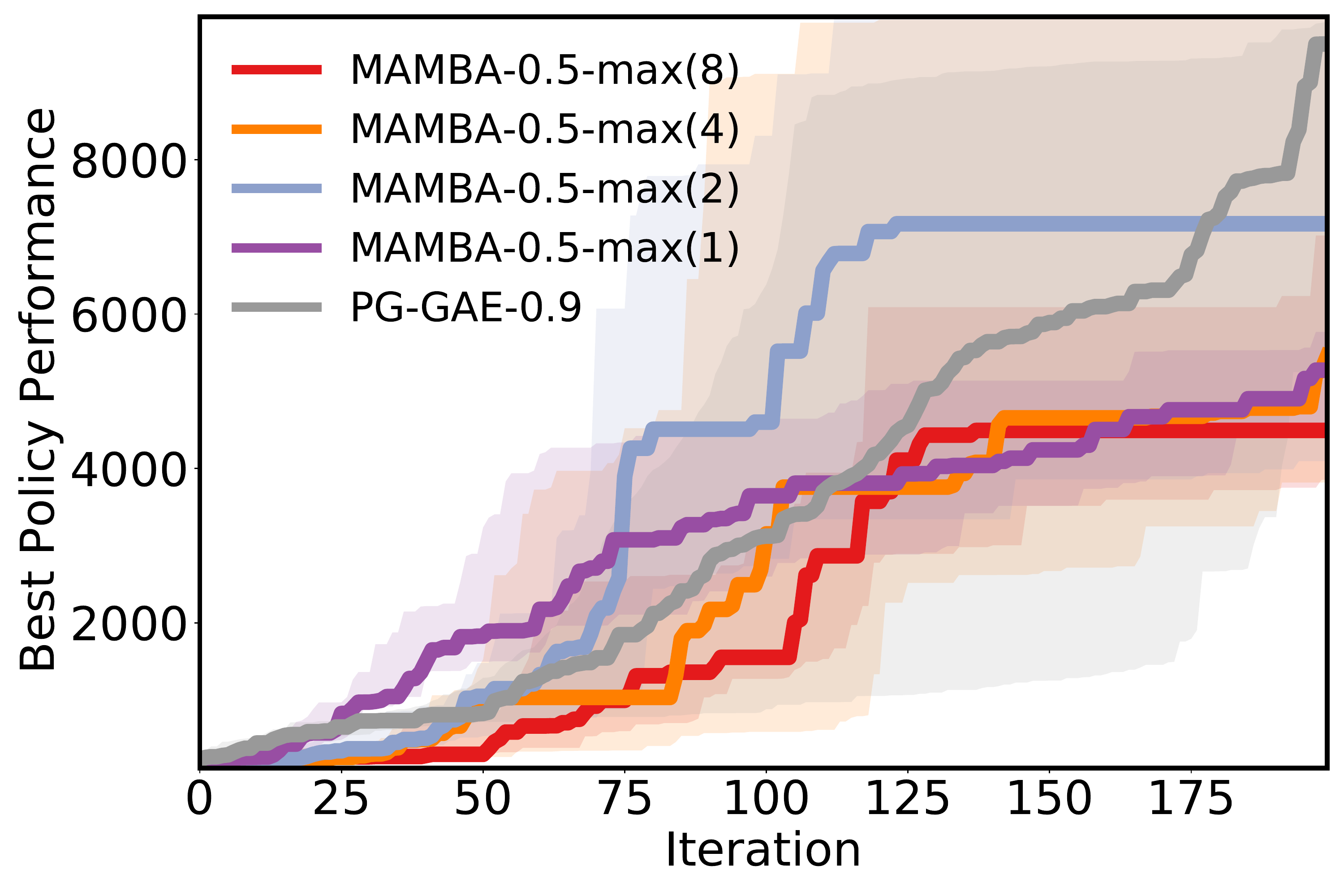}
		\caption{\small $\lambda=0.5$}
		\label{fig:lambda effects}
	\end{subfigure}
	\hspace{-10mm}
	\begin{subfigure}{0.5\textwidth}
		\centering
		\includegraphics[width=0.8\textwidth]{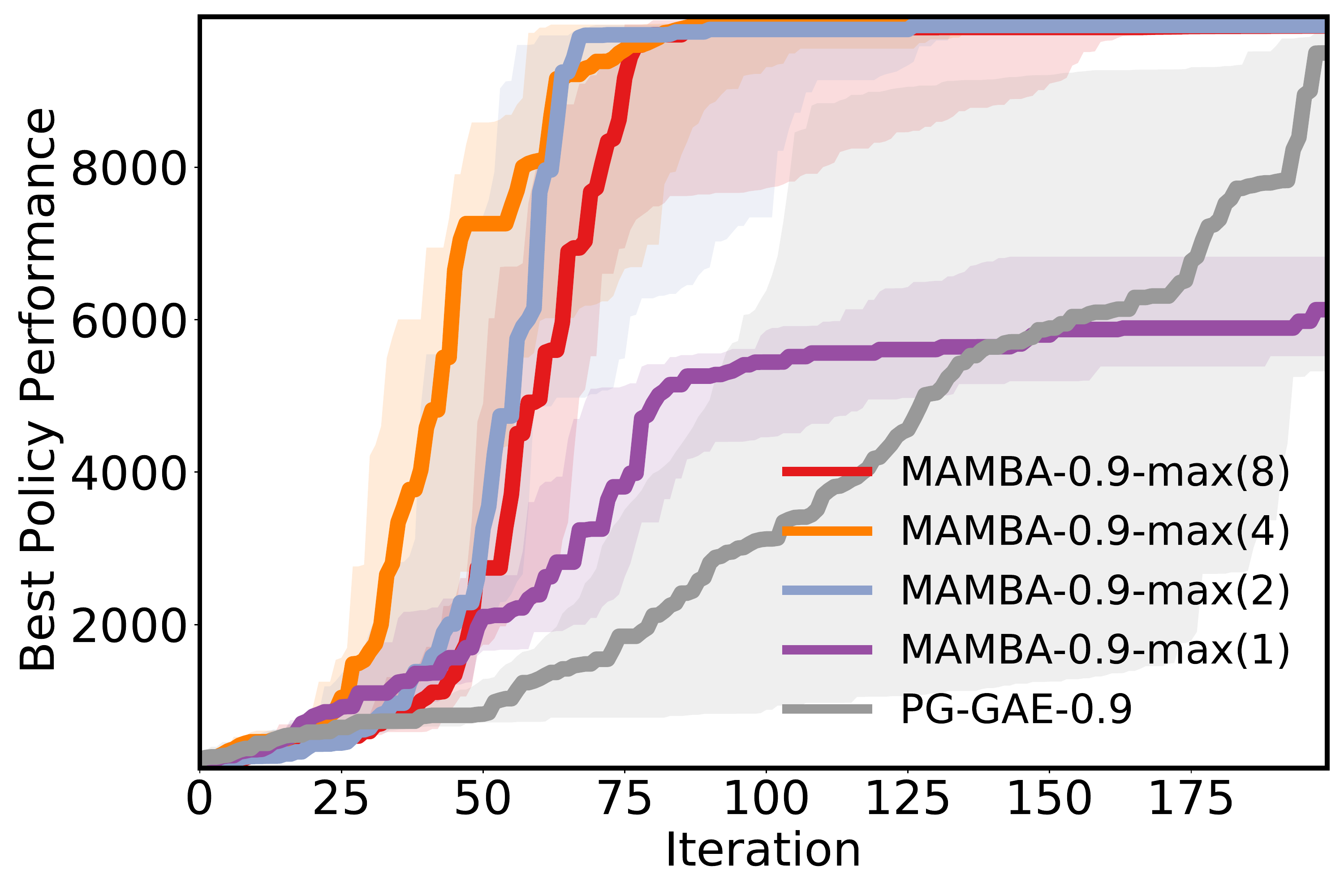}
		\caption{\small $\lambda=0.9$}
	\end{subfigure}
	\hspace{-15mm}
	\\
	\hspace{-25mm}
	\begin{subfigure}{0.5\textwidth}
		\centering
		\includegraphics[width=0.8\textwidth]{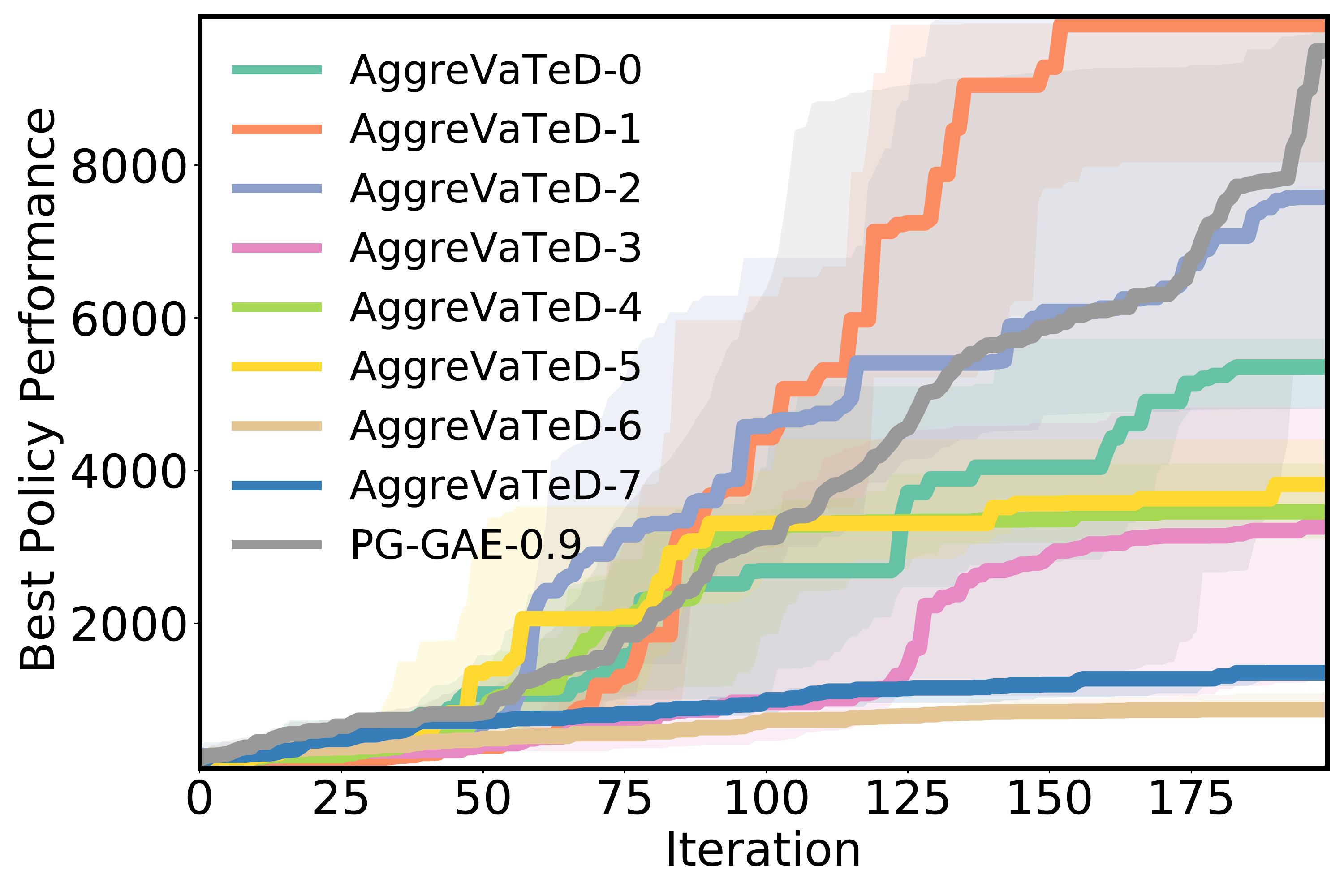}
		\caption{\small \aggd}
		\label{fig:lambda effects}
	\end{subfigure}
	\hspace{-10mm}
	\begin{subfigure}{0.5\textwidth}
		\footnotesize
		\begin{center}
			\begin{tabular}{c|c }
			  &  Return \\[0.5ex]
			  \hline
			 oracle0 & 4244.16\\
			 oracle1 & 3408.81 \\
			 oracle2 & 2775.02 \\
			 oracle3 & 2440.19 \\
			 oracle4 & 2329.90 \\
			 oracle5 & 2177.31 \\
			 oracle6 & 859.82 \\
			 oracle7 & 712.61
			\end{tabular}
			\end{center}
			\caption{Oracle Performance}
	\end{subfigure}
	\hspace{-10mm}
	\caption{\small
			Performance of the best policies in DoubleInvertedPendulum with Bad Oracles.
			(a)-(d) \alg with $\lambda=0, 0.1, 0.5, 0.9$ (e) \aggd with different oracles. (f) The return of each oracle policy.
			A curve shows an algorithm's median performance across 8 random seeds. The center and right figures use the same line colors for all methods.
			The shaded area shows 25th and 75th percentiles.
	}
	\label{fig:DoubleInvertedPendulum with bad oracles}
\end{figure}

\begin{figure}
	\centering
	\hspace{-15mm}
	\begin{subfigure}{0.5\textwidth}
		\centering
		\includegraphics[width=0.8\textwidth]{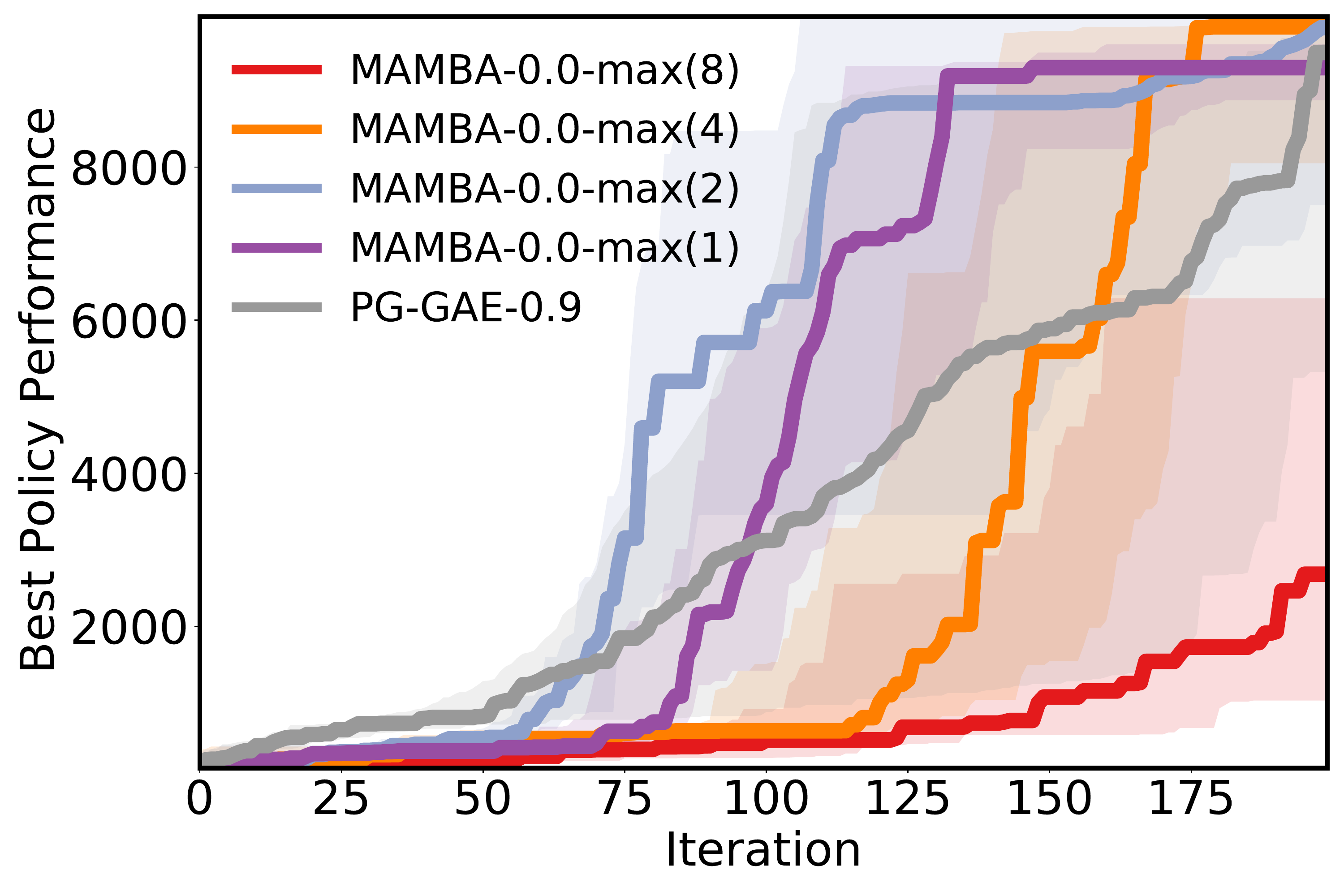}
		\caption{\small $\lambda=0$}
		\label{fig:oracles effects (cp)}
	\end{subfigure}
	\hspace{-10mm}
	\begin{subfigure}{0.5\textwidth}
		\centering
		\includegraphics[width=0.8\textwidth]{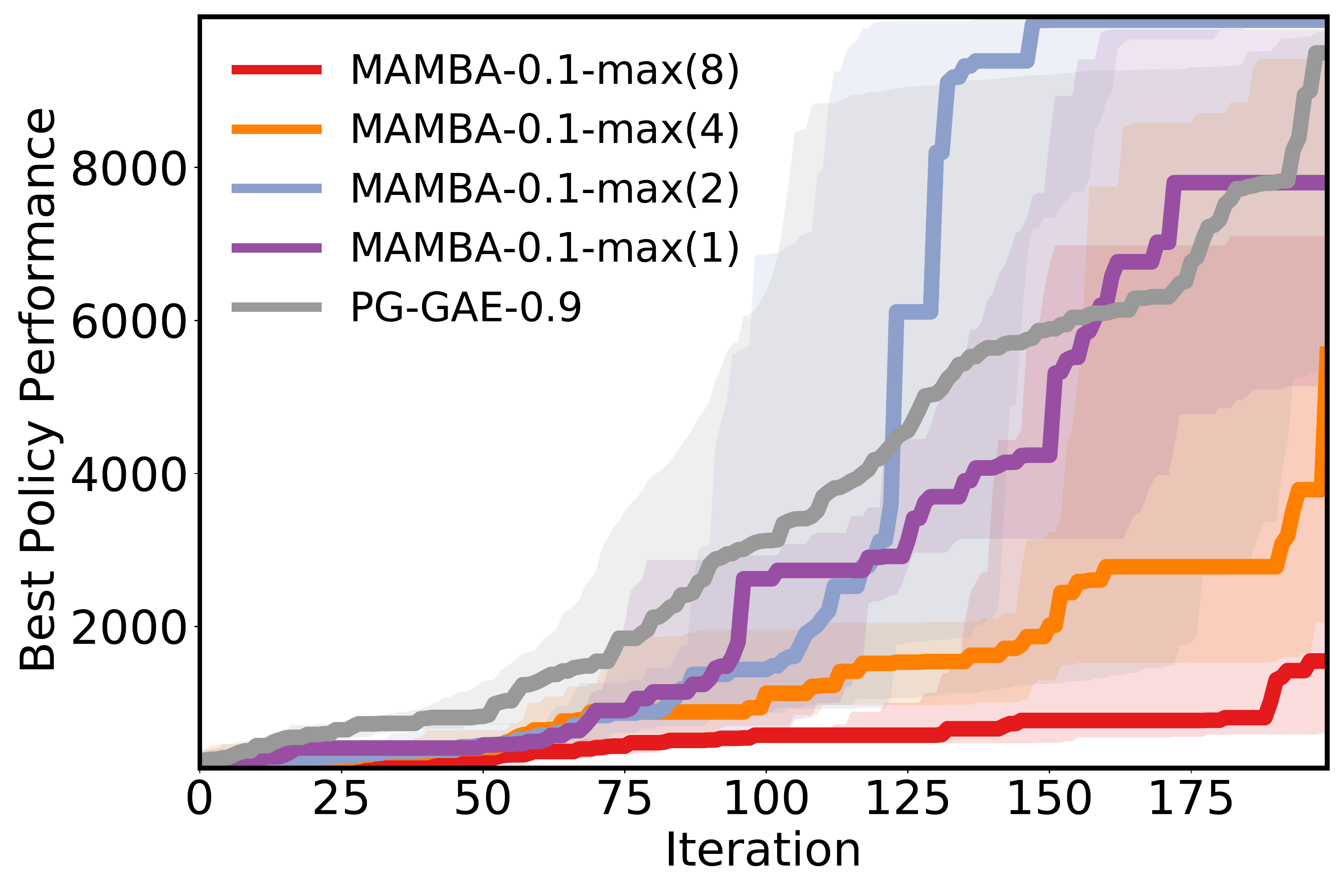}
		\caption{\small $\lambda=0.1$}
	\end{subfigure}
	\hspace{-15mm}
	\\[5mm]
	\hspace{-15mm}
	\begin{subfigure}{0.5\textwidth}
		\centering
		\includegraphics[width=0.8\textwidth]{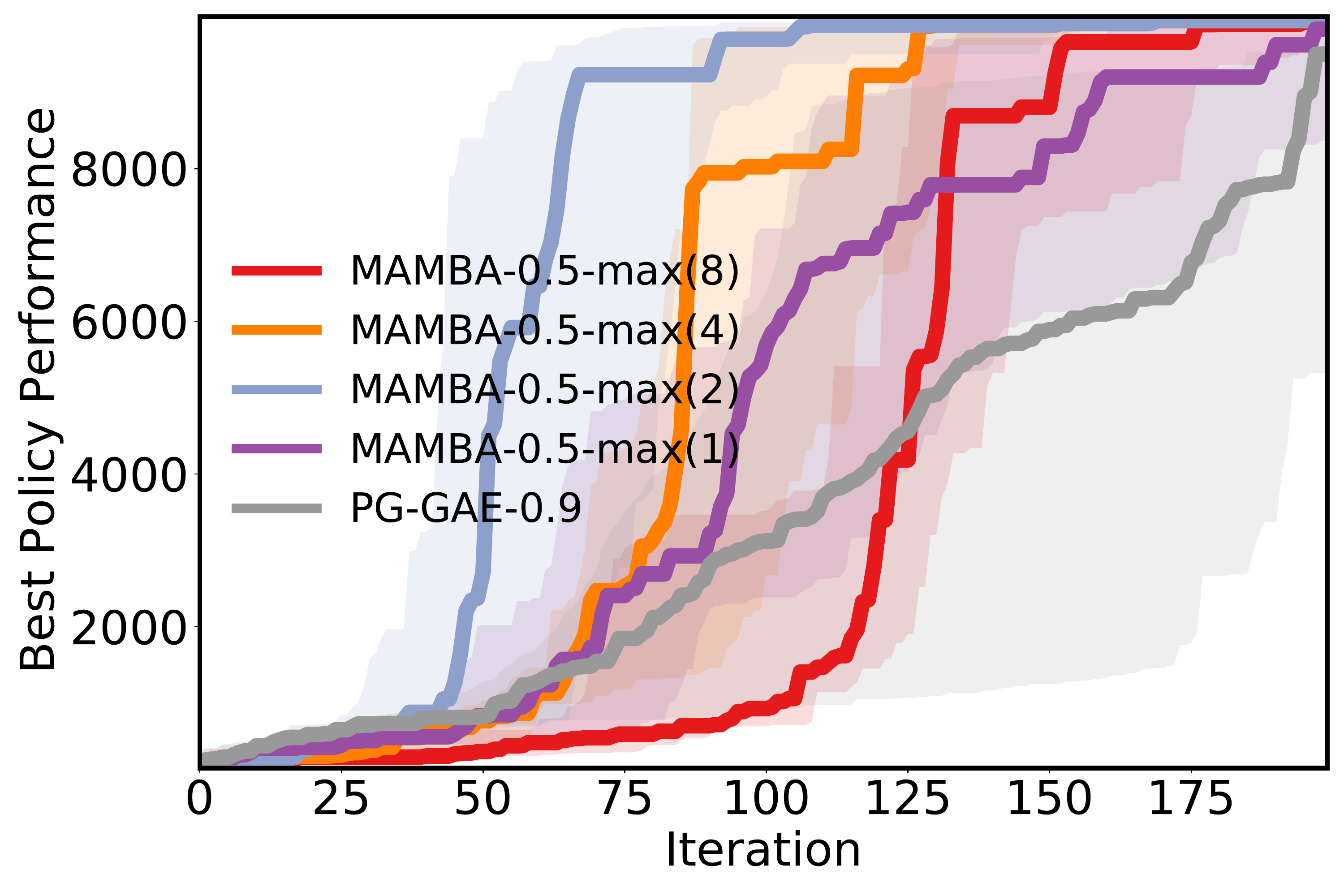}
		\caption{\small $\lambda=0.5$}
		\label{fig:lambda effects}
	\end{subfigure}
	\hspace{-10mm}
	\begin{subfigure}{0.5\textwidth}
		\centering
		\includegraphics[width=0.8\textwidth]{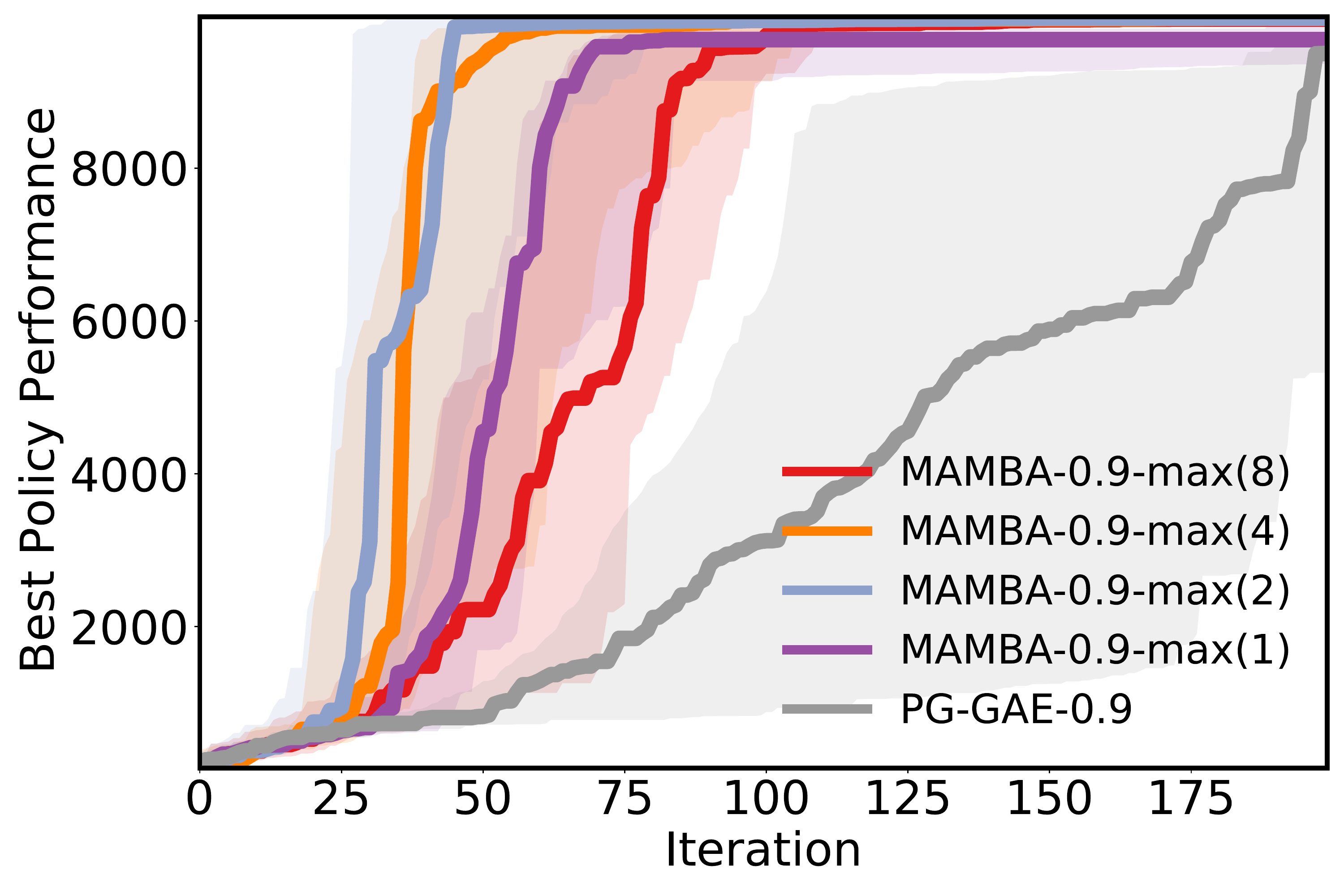}
		\caption{\small $\lambda=0.9$}
	\end{subfigure}
	\hspace{-15mm}
	\\[5mm]
	\hspace{-25mm}
	\begin{subfigure}{0.5\textwidth}
		\centering
		\includegraphics[width=0.8\textwidth]{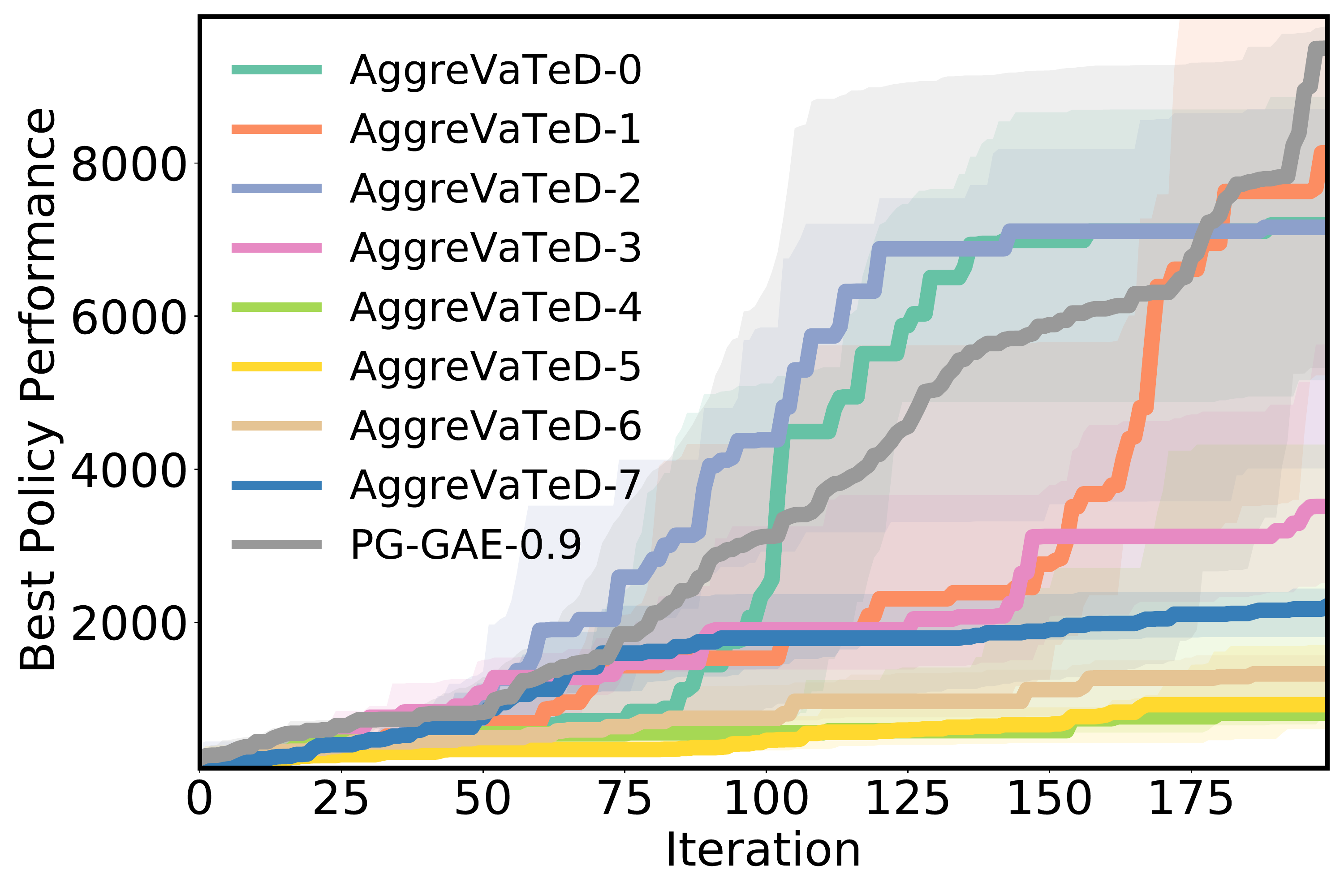}
		\caption{\small \aggd}
		\label{fig:lambda effects}
	\end{subfigure}
	\hspace{-10mm}
	\begin{subfigure}{0.5\textwidth}
		\footnotesize
		\begin{center}
			\begin{tabular}{c|c }
			  &  Return \\[0.5ex]
			  \hline
			 oracle0 & 9453.84 \\
			 oracle1 & 9079.43 \\
			 oracle2 & 6193.54 \\
			 oracle3 & 5791.93 \\
			 oracle4 & 4227.62 \\
			 oracle5 & 4206.01 \\
			 oracle6 & 1535.97 \\
			 oracle7 & 1480.38
			\end{tabular}
			\end{center}
			\caption{Oracle Performance}
	\end{subfigure}
	\hspace{-10mm}
	\caption{\small
			Performance of the best policies in DoubleInvertedPendulum with Mediocre Oracles.
			(a)-(d) \alg with $\lambda=0, 0.1, 0.5, 0.9$ (e) \aggd with different oracles. (f) The return of each oracle policy.
			A curve shows an algorithm's median performance across 8 random seeds. The center and right figures use the same line colors for all methods.
			The shaded area shows 25th and 75th percentiles.
	}
	\label{fig:DoubleInvertedPendulum with mediocre oracles}
\end{figure}

In this section, we include in \cref{fig:CartPole with bad oracles,fig:CartPole with mediocore oracles,fig:DoubleInvertedPendulum with bad oracles,fig:DoubleInvertedPendulum with mediocre oracles} additional experimental results of CartPole and DoubleInvertedPendulum environments. The purpose of these extra results is to provide a more comprehensive picture of the properties of \alg under various hyperparamter settings,. 

\paragraph{Setup}
For each of the environments (CartPole and DoubleInvertedPendulum), we conduct experiments with Bad Oracles (\cref{fig:CartPole with bad oracles,fig:DoubleInvertedPendulum with bad oracles}) and Mediocre Oracles (\cref{fig:CartPole with mediocore oracles,fig:DoubleInvertedPendulum with mediocre oracles}), where the results of the Bad Oracles are the ones presented in the main paper. In each experiment, we run \alg with $\lambda=0,0.1,0.5$ and $0.9$, and with the number of oracles varying between $1,2,4$ and $8$. In addition, we run \aggd with each of the oracles (whereas the main paper only presents the results of the oracles with the highest return). Finally, for baselines, we include the learning curve of \pg as well as the return of each oracle. Recall that the oracles are indexed in a descending order of their returns, which are estimated by performing 8 rollouts.

\subsubsection{Brittleness of \aggd} \label{sec:aggrevated brittleness}
First, the experiments of \aggd highlight that performing IL via policy improvement\footnote{\aggd is an approximate policy improvement algorithm~\citep{sun2017deeply}.} from the best oracle (in terms of the return) does not always lead to the best learner policy, as constantly shown in \cref{fig:CartPole with bad oracles,fig:CartPole with mediocore oracles,fig:DoubleInvertedPendulum with bad oracles,fig:DoubleInvertedPendulum with mediocre oracles}. In general there is no upper bound on the amount of performance change that policy improvement can make, because the policy improvement operator is myopic, only looking at one step ahead. As a result, running \aggd with the best oracle does not always give the best performance that can be made with an oracle chosen in the hindsight.
Another factor to the differences between the best foresight and hindsight oracles is that the return of each oracle is only estimated by 8 rollouts here.

Our experimental results show that such sensitivity is reduced in \alg: even in the single-oracle setting, using a $\lambda>0$ in \alg generally leads to more robust performance than \aggd using the same, best oracle, which is namely \alg with $\lambda=0$. We should remark that this robustness is not fully due to the bias-variance trade-off property~\citep{schulman2015high}, but also by large attributed to the incorporation of the multi-step information in online loss in \eqref{eq:online loss with lambda weights} (cf. \cref{th:mamba performance}). By using $\lambda>0$, \alg can start to see beyond one-step improvement and becomes less dependent on the oracle quality. In the experiments, we observe by picking a large enough $\lambda$, \alg with a single oracle usually gives comparable if not better performance than \aggd with the best policy chosen in the \emph{hindsight}.

\subsubsection{Effects of $\lambda$-weighting}
Beyond the single-oracle scenario discussed above, we see consistently in \cref{fig:CartPole with bad oracles,fig:CartPole with mediocore oracles,fig:DoubleInvertedPendulum with bad oracles,fig:DoubleInvertedPendulum with mediocre oracles} that using a non-zero $\lambda$ improves the performance of \alg.
The importance of $\lambda$ is noticeable particularly in setups with Bad Oracles, as well as in the experiments with the higher-dimensional environment DoubleInvertedPendulum. Generally, when the oracles are bad (as in \cref{fig:CartPole with bad oracles,fig:DoubleInvertedPendulum with bad oracles}), using a larger $\lambda$ provides the learner a chance to outperform the suboptimal oracles as suggested by \cref{th:mamba performance}, because the online loss function in \eqref{eq:online loss with lambda weights} starts to use multi-step information. On the other hand, when the top oracles' performance is better and the state space is not high-dimensional, as in CartPole with Mediocre Oracles in \cref{fig:CartPole with mediocore oracles}, the effects of $\lambda$ is less prominent. The usage of $\lambda>0$ also helps reduce the dependency on function approximation error, which is a known GAE property~\citep{schulman2015high}, as we see in the experiments with DoubleInvertedPendulum in \cref{fig:DoubleInvertedPendulum with bad oracles,fig:DoubleInvertedPendulum with mediocre oracles}.

\subsubsection{Effects of multiple oracles}
Using more than one oracles generally lead to better performance across \cref{fig:CartPole with bad oracles,fig:CartPole with mediocore oracles,fig:DoubleInvertedPendulum with bad oracles,fig:DoubleInvertedPendulum with mediocre oracles}. In view of \cref{th:mamba performance}, using more oracles can improve the quality of the baseline value function, though at the cost of having a higher bias in the function approximators (because more approximators need to be learned). We see that the benefit of using more oracles particularly shows up when higher values of $\lambda$ are used; the change is smaller in the single-oracle settings.

However, in the settings with Mediocre Oracles in \cref{fig:CartPole with mediocore oracles,fig:DoubleInvertedPendulum with mediocre oracles}, increasing the number of oracles beyond a certain threshold degrades the performance of \alg. Since a fixed number of rollouts are performed in each iteration, having more oracles implies that the learner would need to spend more iterations to learn the oracle value functions to a fixed accuracy. In turn, this extra exploration reflects as slower policy improvement. Especially, because using more oracles here means including strictly weaker oracles, this phenomenon is visible, e.g., in \cref{fig:DoubleInvertedPendulum with mediocre oracles}.

\begin{figure}[h!]
	\centering
	\hspace{-15mm}
	\begin{subfigure}[b]{0.5\textwidth}
		\centering
		\includegraphics[width=0.8\textwidth]{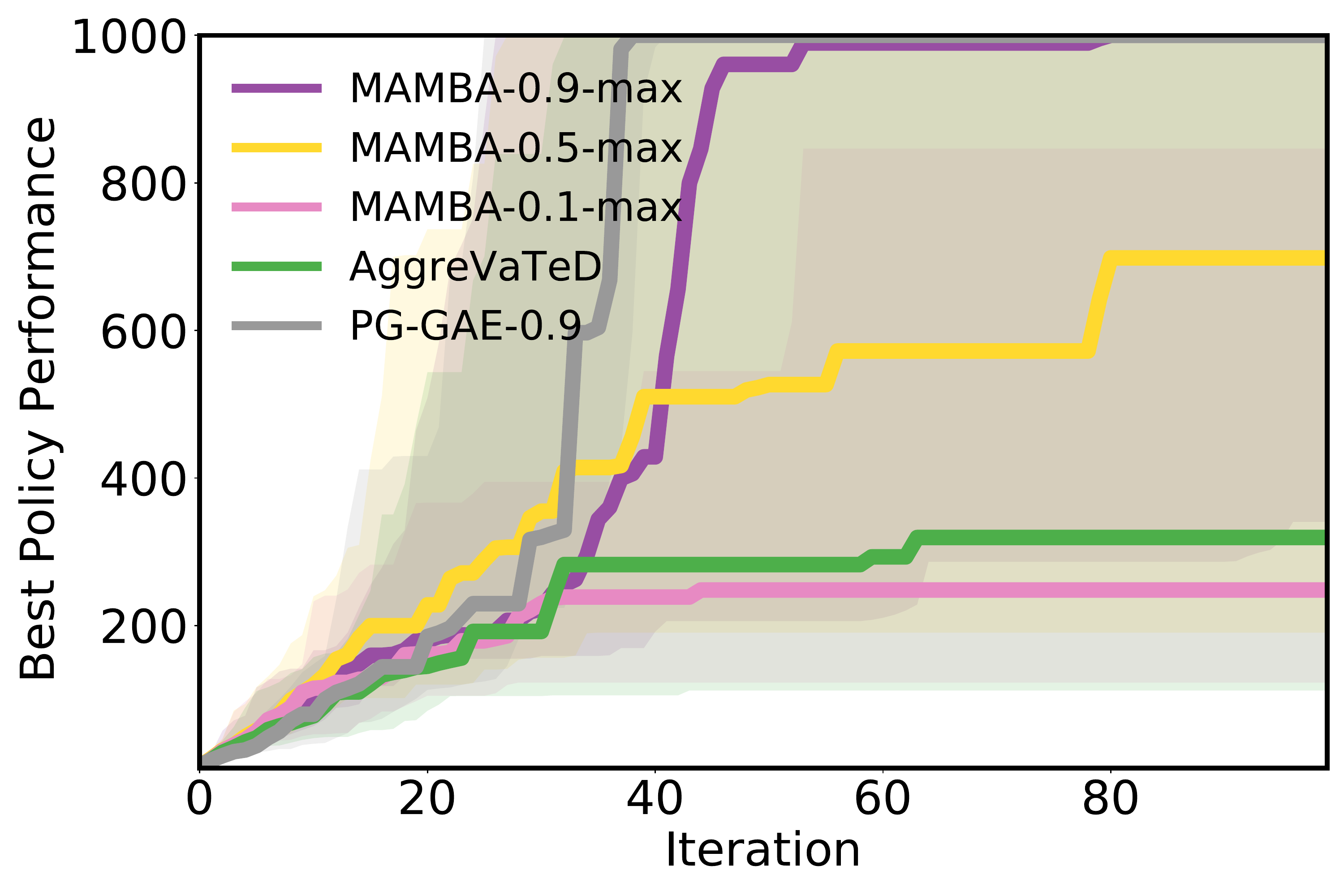}
		\caption{\small Effects of $\lambda$-weighting}
		\label{fig:lambda effects (random)}
	\end{subfigure}
	\hspace{-10mm}
	\begin{subfigure}[b]{0.5\textwidth}
		\centering
		\includegraphics[width=0.8\textwidth]{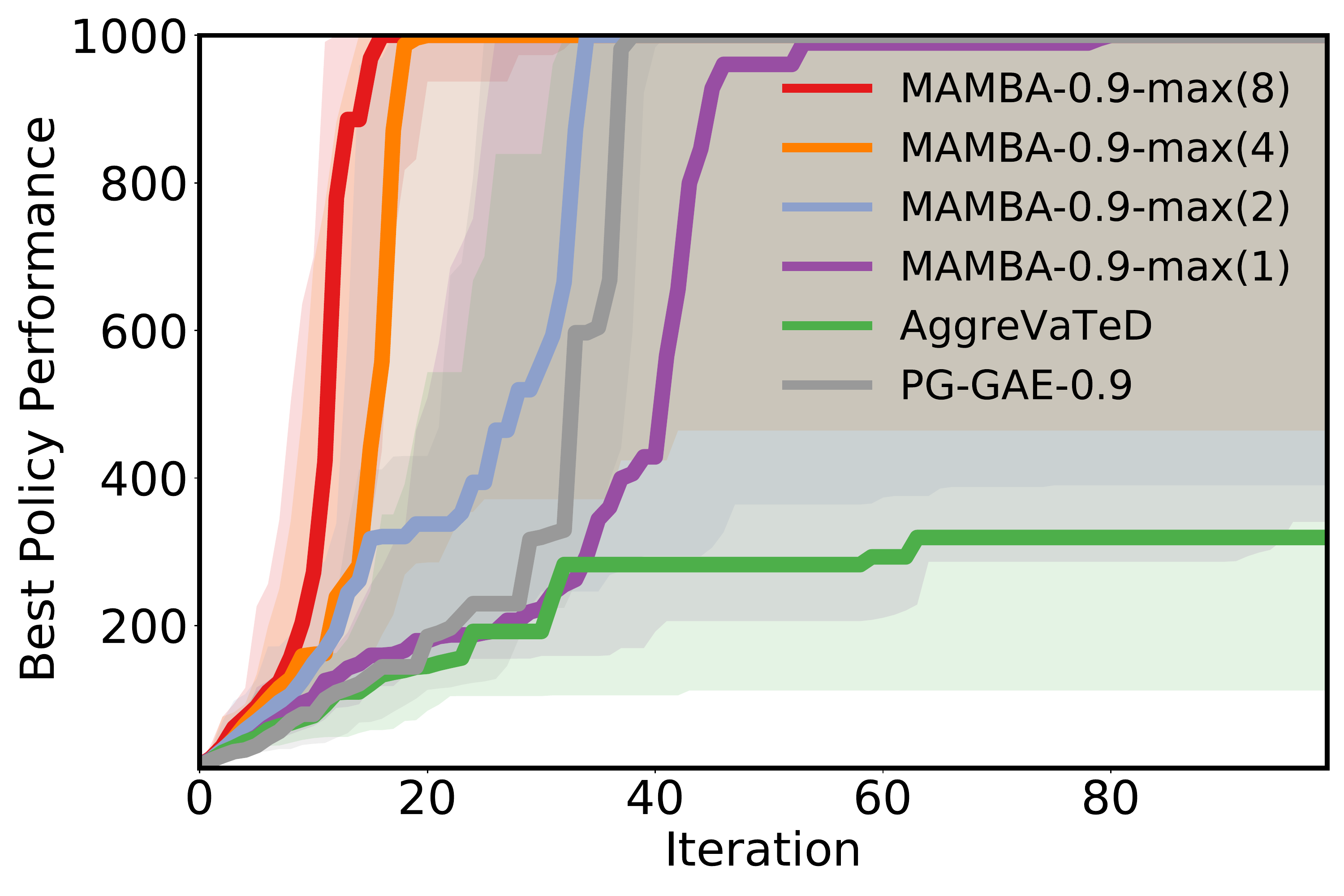}
		\caption{\small Effects of number of oracles}
		\label{fig:oracles effects (cp) (random)}
	\end{subfigure}
	\hspace{-15mm}
	\caption{\small
			Performance of the best policies with \emph{random orderings of oracles} in CartPole with Bad Oracles.
			(a) shows the single-oracle setup comparing \alg with different $\lambda$ values.
			(b) show \alg with different number of oracles ($\lambda=0.9$).
			A curve of \alg and \aggd shows the performance across 32 random seeds. The curves of \pg shows the performance across 8 random seeds.
			The shaded area shows 25th and 75th percentiles.
	}
	\label{fig:cartpole with random oracle ordering (bad oracles)}
\end{figure}

\subsection{Additional Experimental Results of Oracle Ordering}

In all the previous experiments, we order the oracles based on the their performance in terms of their return.
However, these return estimates are only empirical and do not always correspond to the true ordering of the oracles, as we discussed in \cref{sec:aggrevated brittleness}.
To study the robustness to oracle selection, here we randomly order the oracles before presenting them to the IL algorithms (\alg and \aggd) and repeat the controlled experiment of testing the effects of $\lambda$-weighting and the number of oracles in \cref{fig:controlled exps}. The results of random oracle ordering are presented in \cref{fig:cartpole with random oracle ordering (bad oracles)}; because of this extra randomness we inject in oracle ordering, we use more seeds in these experiments.
First, we see in \cref{fig:lambda effects (random)} using the random ordering degrades of the performance of the single-oracle setup. This is reasonable because there is a high chance of selecting an extremely bad oracle (see \cref{fig:CartPole with bad oracles} for the oracle quality).
Nonetheless, the usage of $\lambda>0$ still improves the performance: the learning is faster and converges to higher performance, though it is still slower than \pg because of the extremely bad oracles.

But interestingly once we start to use \emph{multiple} oracles in \cref{fig:oracles effects (cp) (random)}, \alg starts to significantly outperform \aggd and \pg. By using more than one oracle, there is a higher chance of selecting one reasonable oracle in the set filled with bad candidates. In addition, the diversity of oracle properties also help strengthen the baseline value function (cf. \cref{th:mamba performance}). Thus, overall we observe that \alg with $\lambda>0$ and multiple oracles yields the most robust performance.

\end{document}